\newtheorem{theorem}{Theorem}[section]
\newtheorem{proposition}{Proposition}[section]
\newtheorem{lemma}{Lemma}[section]
\newtheorem{corollary}{Corollary}[section]
\newtheorem{definition}{Definition}[section]
\newtheorem{remark}{Remark}[section]
\newcommand{\onefunc}{\mathbbm{1}}
\newcommand{\stam}[1]{}
\DeclarePairedDelimiter\ceil{\lceil}{\rceil}
\newcommand{\ba}{\mathbf{a}}
\newcommand{\bx}{\mathbf{x}}
\newcommand{\bw}{\mathbf{w}}
\newcommand{\bb}{\mathbf{b}}
\newcommand{\bz}{\mathbf{z}}
\newcommand{\bc}{\mathbf{c}}
\newcommand{\by}{\mathbf{y}}
\newcommand{\co}{{\cal O}}
\newcommand{\ca}{{\cal A}}
\newcommand{\cd}{{\cal D}}
\newcommand{\cc}{{\cal C}}
\newcommand{\cg}{{\cal G}}
\newcommand{\ch}{{\cal H}}
\newcommand{\ci}{{\cal I}}
\newcommand{\cf}{{\cal F}}
\newcommand{\cu}{{\cal U}}
\newcommand{\cn}{{\cal N}}
\DeclareMathOperator*{\sign}{sign}
\newcommand{\reals}{{\mathbb R}}
\newcommand{\integers}{{\mathbb Z}}
\DeclareMathOperator{\poly}{poly}
\DeclareMathOperator*{\E}{\mathbb{E}}
\DeclareMathOperator{\bin}{bin}
\DeclareMathOperator{\real}{real}
\DeclareMathOperator{\trunc}{trunc}
\newcommand{\inner}[1]{\langle #1 \rangle}
\newcommand{\norm}[1]{\left\|#1\right\|}
\newcommand{\snorm}[1]{\|#1\|} 
\newcommand{\tbx}{{\tilde{\bx}}}
\newcommand{\tx}{{\tilde{x}}}
\newcommand{\hbx}{{\hat{\bx}}}
\newcommand{\hx}{{\hat{x}}}
\newcommand{\TC}{\textsf{TC}}
\newcommand{\Ppoly}{\textsf{P/poly}}
\newcommand{\NP}{\textsf{NP}}
\newcommand{\Ptime}{\textsf{P}}
\newcommand{\EXP}{\textsf{EXP}}
\newcommand{\NEXP}{\textsf{NEXP}}
\newcommand{\PSPACE}{\textsf{PSPACE}}
\title{Size and Depth Separation in Approximating Benign Functions\\
with Neural Networks}
\author{
	Gal Vardi\thanks{Weizmann Institute of Science, Israel, \texttt{gal.vardi@weizmann.ac.il}}
	\and
	Daniel Reichman\thanks{Worcester Polytechnic Institute, \texttt{daniel.reichman@gmail.com}}
	\and
	Toniann Pitassi\thanks{University of Toronto and IAS, \texttt{toni@cs.toronto.edu}}
	\and
	Ohad Shamir\thanks{Weizmann Institute of Science, Israel, \texttt{ohad.shamir@weizmann.ac.il}}
}
\date{}
\begin{document}

\maketitle

\begin{abstract}
When studying the expressive power of neural networks, a main challenge  is to understand how the size and depth of the network affect its ability to approximate real functions. 
However, not all functions are interesting from a practical viewpoint: functions of interest usually have a polynomially-bounded Lipschitz constant, and can be computed efficiently. We call functions that satisfy these conditions ``benign", and explore the benefits of size and depth for approximation of benign functions with ReLU networks.
As we show, this problem is more challenging than the corresponding problem for non-benign functions.
We give complexity-theoretic barriers to showing depth-lower-bounds: 
Proving existence of a benign function that cannot be approximated by polynomial-sized networks of depth $4$ would settle longstanding open problems in computational complexity. It implies that beyond depth $4$ there is a barrier to showing depth-separation for benign functions, even between networks of constant depth and networks of nonconstant depth. 
We also study size-separation, namely, whether there are benign functions that can be approximated with networks of size $\co(s(d))$, but not with networks of size $\co(s'(d))$. We show a complexity-theoretic barrier to proving such results beyond size $\co(d\log^2(d))$, but also show an explicit benign function, that can be approximated with networks of size $\co(d)$ and not with networks of size $o(d/\log d)$. For approximation in the $L_\infty$ sense we achieve such separation already between size $\co(d)$ and size $o(d)$.
Moreover, we show superpolynomial size lower bounds and barriers to such lower bounds, depending on the assumptions on the function.
Our size-separation results rely on an analysis of size lower bounds for Boolean functions, which is of independent interest: We show linear size lower bounds for computing explicit Boolean functions (such as set disjointness) with neural networks and threshold circuits. 
\end{abstract}

\section{Introduction}

The {\em expressive power} of feedforward neural networks is a central topic in the theory of deep learning. It is well-known that sufficiently large depth-$2$ neural networks, using reasonable activation functions, can approximate any continuous function on a bounded domain (\cite{cybenko1989approximation,funahashi1989approximate,hornik1991approximation,barron1994approximation}). However, the required size of such networks (namely, the overall number of neurons) can be impractically large, e.g., exponential in the input dimension. From a learning perspective, both theoretically and in practice, the main interest is in neural networks 
whose size is at most polynomial in the input dimension.

Many works in recent years have studied 
the expressive power of polynomial-size neural networks, and the beneficial effect of depth for approximating real functions.
However, not all functions are interesting from a practical viewpoint: For example, in practice we are interested in functions that can be efficiently computed. Moreover, in learning tasks, it is usually sufficient to consider prediction functions which have some polynomially-bounded Lipschitz parameter: Otherwise, it means that the learning task crucially requires a function that varies at a superpolynomial rate, which is generally not the case. In addition, functions with very large Lipschitz constants tend to be more difficult to learn with standard methods (cf.~\cite{safran2019depth,malach2021connection}). 

Motivated by this, the main goal of our paper is to explore the benefits of size and depth for approximation of \emph{benign} functions, which do satisfy the conditions above. Specifically, we say that a function $f:[0,1]^d \rightarrow [0,1]$ is benign if it satisfies the following conditions (stated slightly informally): (1) it is $\poly(d)$-Lipschitz; (2) there is an algorithm that for $\bx \in [0,1]^d$ given in binary representation, computes $f(\bx)$ in at most \emph{exponential} time, within 
$1/\text{poly}(d)$ precision.
Clearly, this computability requirement is very mild. A stronger (and still mild) assumption is to replace the exponential-time requirement by a polynomial-time requirement, in which case we will call such functions {\em polynomial-time benign}.
We provide several results, both positive and negative, on our ability to capture the benefits of size and depth for approximating benign functions with ReLU networks. 

\paragraph{Depth separation.}
Overwhelming empirical evidence indicates that deeper networks tend to perform better than shallow ones.
Quite a few theoretical works in recent years have explored the beneficial effect of depth on increasing the expressiveness of neural networks
(e.g., \cite{martens2013representational,eldan2016power,telgarsky2016benefits,liang2016deep,daniely2017depth,safran2017depth,yarotsky2017error,safran2019depth,vardi2020neural,bresler2020sharp}).
A main focus is on {\em depth separation}, namely, showing that there is a function $f:\reals^d \rightarrow \reals$ that can be approximated by a $\poly(d)$-sized network of a given depth, with respect to some input distribution, but cannot be approximated by $\poly(d)$-sized networks of a smaller depth.
Depth separation between depth $2$ and $3$ is known \citep{eldan2016power,daniely2017depth}\footnote{The result of \cite{daniely2017depth} holds only for depth-$2$ networks whose weights magnitudes are upper-bounded by an exponential.} already for benign functions.
A complexity-theoretic barrier to proving separation between two constant depths beyond depth $4$ was established in \cite{vardi2020neural}. 
A construction shown by \cite{telgarsky2016benefits} gives separation between $\poly(d)$-sized networks of a constant depth, and $\poly(d)$-sized networks of some nonconstant depth. Thus, restricting the depth hurts the expressiveness of $\poly(d)$-sized networks. However, the function of \cite{telgarsky2016benefits} is highly oscillatory, and its Lipschitz constant is 
superpolynomial in $d$.
Hence, an interesting question is whether 
$\poly(d)$-sized networks 
are
more powerful than $\poly(d)$-sized networks of constant depth, in their ability to approximate \emph{benign} functions.
We show:
\begin{itemize}
	\item  Proving existence of a benign function that cannot be approximated by $\poly(d)$-sized networks of constant depth $k \geq 4$, would settle a longstanding open problem in computational complexity (namely, $\EXP \not \subseteq \TC^0_{k-2}$, where $\TC^0_{k-2}$ denotes the class of threshold circuits of polynomial size and depth $k-2$).  Moreover, if we wish to prove the result for depth $k \geq 6$, it would require overcoming a \emph{natural-proofs barrier}, a concept from computational complexity which indicates that such a proof would be very hard to find.
	We note that \cite{vardi2020neural} gave a barrier to proving depth separation, and we give a barrier already to proving depth lower bounds. Thus, while the barrier of \cite{vardi2020neural} holds only for separation between two constant depths, our barrier applies also to separation between constant and nonconstant depths. 
	
	\item Interestingly, we show that this barrier crucially relies on both ``benignness'' requirements: If we allow exponential Lipschitz constants, then a depth lower bound is known \citep{telgarsky2016benefits}. Moreover, using counting arguments, we prove a lower bound for a function that is $1$-Lipschitz, but not necessarily computable in exponential time. 
\end{itemize}

\paragraph{Size separation.}
We study {\em size separation}, namely, whether there are benign functions that can be approximated by networks of size $\co(s(d))$, but cannot be approximated by networks of size $\co(s'(d))$. Here, we consider the overall number of neurons in the network, regardless of its depth/width.
Recall that the motivation behind the study of depth separation is to achieve a theoretical understanding of the empirical success of deeper networks compared to shallow ones. This motivation applies also to size separation, since often large networks are required in practice in order achieve good performance, and we are missing a theoretical understanding of this phenomenon. We show both size-separation results, and barriers to size-separation:
\begin{itemize}
	\item Proving existence of a polynomial-time benign function that cannot be approximated by networks of size $\co(d \log^2(d))$, would settle the longstanding open problem in circuit complexity, on whether there is a Boolean function in $\Ptime$ that cannot be computed by threshold circuits of linear size.
	
	\item We show a polynomial-time benign function, that can be approximated by a network of size $\co(d)$, but cannot be approximated by networks of size $o(d/\log d)$. 
	
	\item We also consider size-separation in the $L_\infty$ sense (where we wish to approximate the function uniformly rather than on average), and show a polynomial-time benign function that can be computed by a network of size $\co(d)$, but cannot be approximated by networks of size $o(d)$.  
\end{itemize}

\paragraph{Superpolynomial size lower bounds.}
Many works in recent years have studied approximation of classes of smooth functions with ReLU neural networks (e.g., \cite{guhring2020expressivity,yarotsky2017error,petersen2018optimal,yarotsky2018optimal,yarotsky2019phase,shen2019deep,lu2020deep}). The upper bounds on the required size of the network are at least exponential in the input dimension. \cite{yarotsky2017error} gave a lower bound for the required size for approximation in the $L_\infty$ sense, which is exponential in the input dimension. We show:
\begin{itemize}
	\item There is a $1$-Lipschitz function $f:[0,1]^d \rightarrow [0,1]$ that cannot be approximated by $\poly(d)$-sized networks whose weights are represented by a $\poly(d)$ number of bits. Thus, we obtain a superpolynomial size lower bound for approximating $1$-Lipschitz functions in the $L_2$ sense. However, this function is obtained by a counting argument and is not known to be benign.
	
	\item We give a barrier to proving superpolynomial size lower bounds, already for {\em semi-benign} functions, namely, functions with an \emph{exponential} Lipschitz constant, that are computable in exponential time.
	We show that proving such a lower bound would imply that $\EXP \not \subseteq \Ppoly$.
\end{itemize}

\paragraph{Size lower bounds for Boolean functions.}
Our size-separation results for benign functions rely on an analysis of the corresponding problem for Boolean functions. Namely, we show size lower-bounds and upper-bounds for computing certain Boolean functions with ReLU networks, and then use these bounds to obtain size-separation for real functions. 
We consider the Boolean functions that compute {\em disjointness} and {\em inner product}, and show linear size lower bounds. Our lower bounds are based on results from communication complexity, and hold also for networks with $k$-piecewise-linear activation.
We note that 
these results 
are also of independent interest.
Indeed, 
the study of the computational power of 
neural networks in the context of Boolean functions has received ample attention
in the past decades (e.g.,
\cite{maass1991computational,koiran1996vc,maass1997bounds,martens2013representational,kane2016super,mukherjee2017lower,williams2018limits}).
Moreover, our linear size lower bounds for computing disjointness and inner-product hold also for threshold circuits. This lower bound for threshold circuits was already shown with different methods for the inner-product function \citep{groeger1993linear,jukna2012boolean,roychowdhury1994lower}, but is new for disjointness.

\paragraph{Connection to threshold circuits.}
In order to establish our results, we explore the connection between ReLU networks and threshold circuits. These are essentially neural networks with a threshold activation function in all neurons (including the output neuron), and where the inputs are in $\{0,1\}^d$. Size and depth lower bounds for threshold circuits were extensively studied in the context of circuit complexity over the past decades.
Since a threshold circuit is a special case of a neural network, it is natural to ask whether the results on size and depth lower bounds in threshold circuits have implications on the analogous questions for neural networks, and indeed we study such implications in our work. However, we emphasize that in general, it is not obvious how to ``import'' separation results (or barriers) of threshold circuits to the realm of neural networks. This is because unlike threshold circuits, neural networks have real-valued inputs and outputs (not necessarily just Boolean ones), and a continuous activation function. Thus, it might be possible to come up with a separation result, which crucially utilizes some function and inputs in Euclidean space. In fact, this can already be seen in existing results: For example, separation between threshold circuits of polynomial size and constant depth ($\TC^0$) and threshold circuits of polynomial size of any depth (which equals the complexity class $\Ppoly$) is not known, but \cite{telgarsky2016benefits} showed such a result for neural networks. His construction is based on the observation that for one dimensional data, a network of depth $k$ is able to express a sawtooth function on the interval $[0,1]$ which oscillates $\co(2^k)$ times. Clearly, this utilizes the continuous structure of the domain, in a way that is not possible with Boolean inputs. Also, depth-separation results for neural networks \citep{eldan2016power,daniely2017depth} rely on harmonic analysis of real functions. Finally, the result of \cite{eldan2016power} does not make any assumption on the weight magnitudes, whereas relaxing this assumption for the parallel result on threshold circuits is a longstanding open problem \citep{razborov1992small}.


\section{Preliminaries}
\label{sec:preliminaries}

\textbf{Notations.}
We use bold-faced letters to denote vectors, e.g., $\bx=(x_1,\ldots,x_d)$. For $\bx \in \reals^d$ we denote by $\norm{\bx}$ the Euclidean norm. For a 
real
function 
$f$
and a distribution $\cd$,
we denote by $\snorm{f}_{L_2(\cd)}$ the $L_2$ norm weighted by $\cd$, namely $\snorm{f}_{L_2(\cd)}^2 = \E_{\bx \sim \cd}(f(\bx))^2$. 
For a set $A$, 
we
denote by $\cu(A)$ the uniform distribution over $A$.
For an integer $d \geq 1$ we denote $[d]=\{1,\ldots,d\}$. 
We use $\poly(d)$ as a shorthand for ``some polynomial in $d$".
Let $\mu$ be the density function of a continuous distribution on $[0,1]^d$.
For $i \in [d]$ we denote by $\mu_i$ the marginal density of the $i$-th component.
We say that $\mu$ has a {\em polynomially-bounded marginal density} if there is $M = \poly(d)$ such that for every $i \in [d]$ and $t \in [0,1]$ we have $\mu_i(t) \leq M$.

\textbf{Benign functions.} Below we formally define {\em benign functions}:
\begin{definition}
	We say that a function $f:[0,1]^d \rightarrow [0,1]$ is {\em benign} if it satisfies the following conditions:
	\begin{enumerate}
		\item  It is $\poly(d)$-Lipschitz.
		\item It is exponential-time computable: For every $c=\co(\log(d))$, there is an algorithm $\ca$ that runs in time exponential in $d$, such that for every input $\bx \in [0,1]^d$ where each component is given by a binary representation with $c$ bits, it returns $f(\bx)$ within precision of $c$ bits. Namely, the algorithm $\ca$ returns a $c$-bits binary representation of $\by \in [0,1]$ such that $|\by-f(\bx)| \leq \frac{1}{2^c}=\frac{1}{\poly(d)}$.
	\end{enumerate}
	We say that a benign function $f$ is {\em polynomial-time benign} (respectively, {\em polynomial-space benign}), if it is computable in polynomial time (respectively, space). Namely, for every $c=\co(\log(d))$, there is an algorithm $\ca$ that runs in polynomial time (respectively, space), such that for every input $\bx \in [0,1]^d$ where each component is given by $c$ bits, it returns $f(\bx)$ within precision of $c$ bits.
\end{definition}

In the above definition we use $c=\co(\log(d))$ bits since it corresponds to precision of $1/{\poly(d)}$.
Standard mathematical operations such as addition, multiplication, division, square root, $\exp$, $\log$, $\sin$ and $\cos$, can all be computed within precision of $m$ bits in $\poly(m)$ time, as well as combinations of such operations. 
\stam{
	Thus, the requirement that we can compute $f$ within precision of a logarithmic number of bits in exponential time is very mild. Moreover, even a requirement that such computation can be done in polynomial time is mild.
}
Thus, the requirement that we can compute $f$ within precision of a logarithmic number of bits in polynomial time is standard.
Note that 
a function may be expressible by an exponential-size (or even polynomial-size) neural network but not exponential-time computable, since it may not be possible to find the neural network in exponential time.
Namely, expressiveness by bounded-size neural networks corresponds to \emph{non-uniform computability}, while time-computability is in the \emph{uniform} sense.

The Lipschitzness assumption is also standard, since in learning tasks it is usually sufficient to consider prediction functions with a bounded Lipschitz constant, as they tend to be more robust and not very sensitive to small changes in the input. E.g., if we slightly perturb the pixel values in some given image we usually do not expect the target distribution over the possible image labels to change dramatically. Note that the Lipschitzness assumption is w.r.t. the target function that we want to approximate, and the neural network that approximates the function is not limited in its Lipschitzness.
Moreover, target functions with very large Lipschitz constants are usually harder to learn with gradient-based methods.

Consider, for example, the function given in \cite{daniely2017depth} to establish depth-separation between depth~$2$ and~$3$. This function is of the form\footnote{The function in \cite{daniely2017depth} is defined on the sphere and in a slightly different way, but it is easily reduced to a function of this form on the unit ball (see \cite{vardi2020neural,safran2019depth}).} $f(\bx)=\sin(\frac{1}{2} \pi d^3 \norm{\bx})$, and hence it is clearly $\poly(d)$-Lipschitz and computable in polynomial time. Thus, it is polynomial-time benign.

\textbf{Neural networks.}
We consider feedforward neural	networks, computing functions from $\reals^d$ to $\reals$. 
A neural network is composed of layers of neurons, where each neuron, except for the output neuron, has an activation function $\sigma:\reals \rightarrow \reals$.
We focus on the ReLU activation, namely, $\sigma(z) = [z]_+ := \max\{0,z\}$. When we consider other activation functions we explicitly mention it. 
We define the \emph{depth} of the network as the number of layers. Denoting the number of neurons in the $i$-th layer by $n_i$, we define the {\em width} of a network as $\max_{i}n_i$, and the {\em size of the network} as $\sum_{i}n_i$. 
We sometimes consider networks with an activation function also in the output neuron, and networks with multiple outputs.

\stam{
We consider feedforward neural	networks, computing functions from $\reals^d$ to $\reals$. The network is composed of layers of neurons, where each neuron computes a function of the form $\bx \mapsto \sigma(\bw^{\top}\bx+b)$, where $\bw$ is a weight vector, $b$ is a bias term and $\sigma:\reals\mapsto\reals$ is a possibly non-linear activation function. In this work we focus on the ReLU activation function, namely, $\sigma(z) = [z]_+ := \max\{0,z\}$. 
When we consider other activation functions we explicitly mention it. 
For a matrix $W = (\bw_1,\ldots,\bw_n)$, we let $\sigma(W^\top \bx+\bb)$ be a shorthand for $\left(\sigma(\bw_1^{\top}\bx+b_1),\ldots,\sigma(\bw_n^{\top}\bx+b_n)\right)$, and define a layer of $n$ neurons as $\bx \mapsto \sigma(W^\top \bx+\bb)$. By denoting the output of the $i$-th layer as $O_i$, we can define a network of arbitrary depth recursively by $O_{i+1}=\sigma(W_{i+1}^\top O_i+\bb_{i+1})$, where $W_i,\bb_i$ represent the matrix of weights and bias of the $i$-th layer, respectively. 
Following a standard convention for multi-layer networks, the final layer $h$ is a purely linear function with no bias, i.e. $O_h=W_h^\top \cdot O_{h-1}$. We define the \emph{depth} of the network as the number of layers $l$, and denote the number of neurons $n_i$ in the $i$-th layer as the {\em size of the layer}. We define the {\em width} of a network as $\max_{i\in [l]}n_i$ and the {\em size of the network} as $\sum_{i \in [l]}n_i$. We sometimes consider neural networks with multiple outputs. 
}

\textbf{Threshold circuits.}
A threshold circuit is a neural network with the following restrictions:
\begin{itemize}
	\item The activation function in all neurons is $\sigma(z) = \sign(z)$. We define $\sign(z)=0$ for $z \leq 0$, and $\sign(z)=1$ for $z > 0$. A neuron in a threshold circuit is called a {\em threshold gate}. 
	The function computed by a threshold gate is called {\em linear threshold function (LTF)}. 
	We sometimes denote this function by $L_{\ba=(a_1 \ldots a_m), \theta}$, where $\ba$ are the weights and $\theta$ is the bias term.
	\item The output gates also have a $\sign$ activation function. Hence, the output is binary.
	\item We always assume that the input to a threshold circuit is a binary vector $\bx \in \{0,1\}^d$.
	\item Since every threshold circuit with real weights can be expressed by a threshold circuit of the same size with integer weights bounded by $2^{\co(d \log(d))}$ (cf. \cite{goldmann1998simulating}), we assume w.l.o.g. that all weights are integers that can be represented by $\poly(d)$ bits.
\end{itemize}
We denote by $TC^0$ the class of polynomial-sized threshold circuits of constant depth, and by $TC^0_k$ the class of polynomial-sized threshold circuits of depth $k$.

\textbf{Functions approximation.}
We say that a function $f$ can be {\em approximated by a $\poly(d)$-sized neural network of depth $k$} (with respect to a distribution $\cd$) if for every $\epsilon=\frac{1}{\poly(d)}$ we have 
$\snorm{f-N}_{L_2(\cd)} \leq \epsilon$ 
for some depth-$k$ network $N$ of size $\poly(d)$.
For $\epsilon=\epsilon(d)$ and $m=m(d)$,
we say that $f$ can be {\em $\epsilon$-approximated by a neural network of size $m$} (with respect to a distribution $\cd$) if 
$\snorm{f-N}_{L_2(\cd)} \leq \epsilon$
for some network $N$ of size $m$.
While we focus on approximation in the $L_2$ sense, we note that our results apply also to $L_p$ for every $1 \leq p < \infty$.
For a function $f:[0,1]^d \rightarrow [0,1]$ and a neural network $N$, we denote $\norm{f-N}_\infty=\sup_{\bx \in [0,1]^d}|f(\bx)-N(\bx)|$.

\textbf{Depth-separation and size-separation.}
We say that there is {\em depth-separation} between networks of depth $k$ and depth $k'$ for some integers $k'>k$, if there is a distribution $\cd$ on $[0,1]^d$ and a function $f:[0,1]^d \rightarrow [0,1]$ that can be approximated (with respect to $\cd$) by a $\poly(d)$-sized neural network of depth $k'$ but cannot be approximated by $\poly(d)$-sized networks of depth $k$.
We note that our definition of depth-separation is a bit weaker than most existing depth-separation results, which actually show difficulty of approximation even up to constant accuracy (and not just $1/\poly(d)$ accuracy). However, depth separation in that sense implies depth separation in our sense. Hence, the barriers we show here to depth separation imply similar barriers under this other (or any stronger) notion of depth separation. Our definition is similar to the definition in \cite{vardi2020neural}.

We say that there is {\em size-separation} between networks of size $\co(m)$ and size $\co(m')$, if there is a distribution $\cd$ on $[0,1]^d$, a function $f:[0,1]^d \rightarrow[0,1]$, and $\epsilon=\frac{1}{\poly(d)}$, such that $f$ can be $\epsilon$-approximated (with respect to $\cd$) by a neural network of size $\co(m')$ but cannot be $\epsilon$-approximated by networks of size $\co(m)$.
Thus, networks of size $\co(m')$ are more powerful than networks of size $\co(m)$ in their ability to approximate $f$ within some reasonable accuracy $\epsilon$.
\stam{
Some stronger notions of size separations may also be considered, and we note that the barriers we show here to size separation imply similar barriers under stronger notions. 
Our positive results for size separation 
hold already for a constant $\epsilon$.
}

\textbf{Natural-proofs barrier.}
The study of circuit lower bounds is a central challenge in theoretical computer science, but despite many attempts the results in this field are limited \citep{arora2009computational}.
In a seminal work, \cite{razborov1997natural} described a main technical limitation of current approaches for proving circuit lower bounds: They defined a notion of ``natural proofs" for a circuit lower bound (which include current proof techniques), and showed that obtaining lower bounds with such proof techniques would violate a widely accepted conjecture, namely,
that pseudorandom functions exist. This natural-proofs barrier (partially) explains the lack of progress on circuit lower bounds.
More formally, they show that if a class $\cc$ of circuits contains a family of pseudorandom functions, then showing for some function $f$ that $f \not \in \cc$ cannot be done with a natural proof. 

\section{Barriers to depth lower bounds and to depth-separation}
\label{sec:barriers depth}

\cite{telgarsky2016benefits} showed that there exists a family of univariate functions $\{\varphi_k\}_{k=1}^\infty$ on the interval $[0,1]$, such that the function $\varphi_k$ is $2^k$-Lipschitz, it can be expressed by a network of depth $k$ and width $\co(1)$, but it cannot be $\epsilon$-approximated (for some constant $\epsilon$) by any $o(k/\log(k))$-depth, $\poly(k)$-width network with respect to the uniform distribution on $[0,1]$.
The function $\varphi_k$ consists of $2^{k-1}$ identical triangles of height $1$\footnote{The function $\varphi_k$ is obtained by composing the function $z \mapsto [2z]_+ - [4z-2]_+$ with itself $k$ times.}.
Consider the functions $\{f_d\}_{d=1}^\infty$ where $f_d:[0,1]^d \rightarrow [0,1]$ is such that $f_d(\bx) = \varphi_d(x_1)$. Thus, $f_d$ depends only on the first component of $\bx$.
The result of \cite{telgarsky2016benefits} implies that the function $f_d$ can be expressed by a network of width $\co(1)$ and depth $d$, but cannot be approximated by networks of width $\poly(d)$ and constant depth w.r.t. the uniform distribution on $[0,1]^d$. Hence, there is separation between constant and nonconstant depths. Namely, there are functions that can be computed by a neural network of $\poly(d)$ size, but cannot be approximated by networks of $\poly(d)$ size and constant depth. 
Note that $f_d$ is $2^d$-Lipschitz.

As we discussed in the introduction, the main weakness of the above result, is that the Lipschitzness of $f_d$ is superpolynomial. 
Hence, an interesting question is whether such a result can be obtained for benign functions. 
The following theorem implies barriers to depth-lower-bounds and to depth-separation.

\begin{theorem}
\label{thm:bounded depth}
	If there exists a benign function $f:[0,1]^d \rightarrow [0,1]$, that cannot be approximated by a neural network of size $\poly(d)$ and constant depth $k \geq 4$, w.r.t. a distribution $\mu$ with a polynomially-bounded marginal density, then $\EXP \not \subseteq \TC^0_{k-2}$. 
\end{theorem}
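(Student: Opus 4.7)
The plan is to argue the contrapositive: assume $\EXP \subseteq \TC^0_{k-2}$, and show that every benign function $f:[0,1]^d\to[0,1]$ admits a $\poly(d)$-size, depth-$k$ ReLU approximation to accuracy $1/\poly(d)$ under any distribution $\mu$ with polynomially-bounded marginal density. The overall strategy is a reduction to Boolean computation: use Lipschitzness to reduce approximation of $f$ to a finite-precision discrete problem, use the $\EXP \subseteq \TC^0_{k-2}$ hypothesis to obtain a small constant-depth threshold circuit for this discrete problem, and finally sandwich the circuit between a ``bit-extraction'' layer and a ``bit-decoding'' layer built from ReLU neurons.

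First I would choose a precision parameter $c = \co(\log d)$ large enough that the $\poly(d)$-Lipschitz constant of $f$ combined with rounding each coordinate of $\bx$ to $c$ bits introduces only $1/\poly(d)$ error; concretely, since $|f(\bx)-f(\by)|\leq L\sqrt{d}\cdot 2^{-c}$ with $L=\poly(d)$, picking $c$ to be a sufficiently large multiple of $\log d$ suffices. I would then define a Boolean function $g:\{0,1\}^{cd}\to\{0,1\}^{c}$ that reads the $c$-bit representation of each coordinate and outputs a $c$-bit approximation of $f(\bx)$; by the exponential-time computability clause of benignness, $g\in\EXP$. Each of the $c=\co(\log d)$ output bits of $g$ therefore lies in $\TC^0_{k-2}$ by assumption, giving a single $\poly(d)$-size, depth-$(k-2)$ threshold circuit $C$ that simultaneously produces all output bits.

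Next I would build the depth-$k$ ReLU network as follows. The first layer extracts, for each coordinate $x_i\in[0,1]$ and each bit position $j\leq c$, an approximation to the $j$-th binary digit $b_{i,j}(x_i)$: since $b_{i,j}$ is a sum of $2^j=\poly(d)$ interval indicators and each indicator $\onefunc[a\leq x_i< b]$ equals $\sigma(M(x_i-a))-\sigma(M(x_i-a)-1) - \sigma(M(x_i-b))+\sigma(M(x_i-b)-1)$ (for large $M=\poly(d)$) outside a tiny window of the endpoints, all $cd=\poly(d)$ bits can be produced by a single ReLU layer of $\poly(d)$ width, with the summation happening in the affine combination feeding the next layer. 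The next $k-2$ layers simulate $C$: each threshold gate $\sign(\sum a_i z_i -\theta)$ on Boolean inputs $z_i$ is exactly reproduced by the two ReLUs $\sigma(\sum a_i z_i -\theta+1)-\sigma(\sum a_i z_i -\theta)$ since the argument is integer-valued, so one ReLU layer simulates one circuit layer. The final $k$-th layer is a purely linear combination $\sum_{j=1}^c 2^{-j} y_j$ of the $c$ output bits, reconstructing the real-valued approximation to $f(\bx)$.

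The main obstacle, and the place where the polynomially-bounded marginal density assumption is essential, is controlling what happens when some $x_i$ lies inside one of the tiny windows of width $1/M$ around a dyadic point of denominator $2^c$: on such inputs the bit-extraction layer is not guaranteed to output clean $\{0,1\}$ values, and the threshold-gate simulation then need not behave correctly. I would handle this by bounding the probability of this ``bad'' event: there are at most $d\cdot 2^c = \poly(d)$ dyadic boundary points in total, each surrounded by a window of width $\co(1/M)$, so the marginal-density bound $\mu_i\leq\poly(d)$ and a union bound give that $\bx$ is ``bad'' with probability at most $\poly(d)/M$, which is $1/\poly(d)$ for $M$ a sufficiently large polynomial. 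On the complementary ``good'' event the network output agrees with $g$ (hence with $f$) up to $1/\poly(d)$, while on the ``bad'' event both $f$ and the network are bounded in $[0,1]$, so the contribution to $\|f-N\|_{L_2(\mu)}^2$ is at most the bad-event probability, yielding the desired $1/\poly(d)$ approximation overall. This produces the contradiction to the hypothesis that $f$ has no $\poly(d)$-size depth-$k$ approximation, completing the proof of the contrapositive.
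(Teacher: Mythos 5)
Your proposal is essentially the paper's proof read in contrapositive form: discretize $f$ via its Lipschitz bound to a $c=\co(\log d)$-bit Boolean function in $\EXP$, extract those bits with a single ReLU layer whose failure set near dyadic boundaries has measure $1/\poly(d)$ by the marginal-density assumption, simulate the depth-$(k-2)$ threshold circuit granted by $\EXP\subseteq\TC^0_{k-2}$ via the pair $[z]_+ - [z-1]_+$ for each gate, and decode the output bits linearly. Two steps you state without justification correspond to lemmas the paper proves explicitly: the packaging of the multi-bit-output function $g$ into a single-output family in $\EXP$ via $\co(\log d)$ selector bits (needed before the hypothesis $\EXP\subseteq\TC^0_{k-2}$ can be invoked, since $c$ grows with $d$), and the observation that $[z]_+ - [z-1]_+\in[0,1]$ for every real $z$ (not only integer $z$), which is what actually justifies your claim that the network output remains bounded on the bad event.
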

\begin{proof}[Proof idea (for complete proof see Appendix~\ref{app:proof of theorem bounded depth})]
	Let $c=\co(\log(d))$. Since $f$ is exponential-time computable then there is an exponential-time algorithm $\ca$, such that for an input $\bx$ given by $c$ bits for each component, it returns $f(x)$ within precision of $c$ bits. Let $\hat{f}:\{0,1\}^{c \cdot d} \rightarrow \{0,1\}^c$ be the function that $\ca$ computes. Assume that $\hat{f}$ can be computed by a threshold circuit $T$ of size $\poly(d)$ and depth $k-2$. We construct a neural network $N$ of size $\poly(d)$ and depth $k$ that approximates $f$ and thus reach a contradiction. It implies that the function $\hat{f}$ can be computed in exponential time but cannot be computed by a threshold circuit of size $\poly(d)$ and depth $k-2$, and hence we are able to obtain $\EXP \not \subseteq \TC^0_{k-2}$.
	
	The network $N$ first transforms w.h.p. over $\bx \sim \mu$ the input $\bx \in [0,1]^d$ to a binary representation $\hbx \in \{0,1\}^{c \cdot d}$, then it simulates the threshold circuit $T$ to obtain $T(\hbx)=\hat{f}(\hbx)$, and finally it converts the output of $T$ from a binary representation to the corresponding real value. Note that since $f$ is $\poly(d)$-Lipschitz, then for an appropriate $c$, transforming the input to the binary representation does not hurt the approximation too much.
\end{proof}

\begin{remark}[Barrier to depth lower bounds]
	It is a longstanding open problem whether $\EXP \not \subseteq \TC^0_{2}$ (and even whether $\NEXP \not \subseteq \TC^0_{2}$)  \citep{razborov1992small,oliveira2015unconditional,chen2018toward}. 
	By Theorem~\ref{thm:bounded depth},	proving existence of a benign function that cannot be approximated by a network of depth $k \geq 4$ would imply that $\EXP \not \subseteq \TC^0_{k-2}$ and thus solve this open problem.
\end{remark}

\begin{remark}[A stronger barrier for ``more benign" functions]
	For polynomial-time benign functions and polynomial-space benign functions, 
	we can obtain even stronger barriers to depth lower bounds: Proving existence of a polynomial-time (respectively,  polynomial-space) benign function that cannot be approximated by a network of depth $k \geq 4$, would imply that $\Ptime \not \subseteq \TC^0_{k-2}$ (respectively, $\PSPACE \not \subseteq \TC^0_{k-2}$). 
	Since $\Ptime \subseteq \Ppoly$, establishing $\Ptime \not \subseteq \TC^0_{k-2}$ would also imply that $\TC^0_{k-2} \neq \Ppoly$. 
	Moreover, we can also conclude that proving existence of a polynomial-time benign function that cannot be approximated by networks of polynomial size and any constant depth, would solve the open problem of whether $\TC^0 \neq \Ppoly$. 
\end{remark}

\begin{remark}[Natural-proofs barrier for $k \geq 6$]
\label{rem:natural proof}
	\cite{naor2004number} and \cite{krause2001pseudorandom} showed a candidate pseudorandom function family in $\TC^0_4$. By \cite{razborov1997natural}, it implies that there is a natural-proofs barrier to proving circuit lower bounds for threshold circuits of depth at least $4$. Since by 
	Theorem~\ref{thm:bounded depth} 
	proving existence of a benign function that cannot be approximated by networks of depth $k \geq 6$ would imply a lower bound for threshold circuits of depth $k-2 \geq 4$, then showing such depth lower bounds for networks of depth at least $6$ would need to overcome the natural-proofs barrier.	
\end{remark}

\begin{remark}[Barrier to depth separation]
	Theorem~\ref{thm:bounded depth}	clearly implies a barrier to showing depth-separation results for benign functions, since such results would imply that there exists a benign function that cannot be approximated by a network of some bounded depth. Thus, there is a barrier already to showing depth separation for a benign function between $\poly(d)$-sized networks of depth $4$ and $\poly(d)$-sized networks of 
	unbounded depth.
	We note that \cite{vardi2020neural} established a complexity-theoretic barrier to showing depth-separation, but their result applies only to separation between two constant depths.
\end{remark}
 
By its definition, the benign function $f$ in Theorem~\ref{thm:bounded depth} satisfies two requirements: $\poly(d)$-Lipschitzness and exponential-time computability.
As we already discussed, the construction of \cite{telgarsky2016benefits} gives a function that cannot be approximated by $\poly(d)$-sized networks of any constant depth. This function is efficiently computable, but is not $\poly(d)$-Lipschitz.
We now show that a similar result can be obtained with a function that is $1$-Lipschitz, but we do not have any guarantees on its computability.

\begin{theorem}
\label{thm:counting}
	There exists a $1$-Lipschitz function $f:[0,1]^d \rightarrow [0,1]$ that cannot be approximated w.r.t. a distribution with a polynomially-bounded marginal density, by a $\poly(d)$-sized neural network whose weights are represented by a $\poly(d)$ number of bits.
\end{theorem}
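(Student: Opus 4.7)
The plan is a standard counting/packing argument: exhibit a family $\cf$ of $1$-Lipschitz functions $[0,1]^d\to[0,1]$ that is (i) much larger than the number of ReLU networks of size $\poly(d)$ with $\poly(d)$-bit weights, and (ii) pairwise $\geq 3\epsilon$-separated in $L_2(\mu)$ for some $\epsilon=1/\poly(d)$ and some distribution $\mu$ with polynomially-bounded marginals. By the triangle inequality, any fixed network is then $\epsilon$-close to at most one element of $\cf$, so if $|\cf|$ exceeds the network count some $f\in\cf$ cannot be $\epsilon$-approximated at all. Throughout, take $\mu$ to be the uniform distribution on $[0,1]^d$, whose one-dimensional marginals have density $1$.

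To build $\cf$ I would partition $[0,1]^d$ into $N=2^d$ axis-aligned cubes $C_j$ of side $\delta=1/2$ centered at $\bc_j$, and on each cube place the $\ell_\infty$-tent bump
\[
\phi_j(\bx)\;=\;h\cdot\max\bigl(0,\;1-\tfrac{2}{\delta}\,\|\bx-\bc_j\|_\infty\bigr),
\]
with $h=\delta/2=1/4$. Since $\|\cdot\|_\infty$ is $1$-Lipschitz in $\ell_2$, each $\phi_j$ is $(2h/\delta)$-Lipschitz, hence $1$-Lipschitz, is supported in $C_j$, and vanishes on $\partial C_j$. For $\sigma\in\{\pm1\}^N$ set
\[
f_\sigma(\bx)\;=\;\tfrac{1}{2}+\sum_{j=1}^{N}\sigma_j\,\phi_j(\bx).
\]
Values lie in $[1/4,3/4]\subset[0,1]$, and global $1$-Lipschitzness follows from a telescoping argument along the segment between any two points: whenever the segment crosses a cell boundary the summed function equals $0$ there, so adjacent sub-segments can be controlled by their per-cell constants. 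A direct computation using the density $d m^{d-1}$ of $M=\max_i U_i$ for $U_i\sim U(0,1)$ yields $\int_{C_j}\phi_j^2\,d\mu = h^2\delta^d\cdot\frac{2}{(d+1)(d+2)} = \Theta(2^{-d}/d^2)$.

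Next, by Gilbert--Varshamov pick a binary code $\cc\subseteq\{\pm1\}^N$ of rate $\Omega(1)$ and minimum relative Hamming distance at least $1/2$, so that $|\cc|=2^{\Omega(2^d)}$. By disjoint support of the bumps, any distinct $\sigma,\sigma'\in\cc$ satisfy
\[
\|f_\sigma-f_{\sigma'}\|_{L_2(\mu)}^2 \;=\; 4\!\!\sum_{j:\sigma_j\neq\sigma'_j}\!\!\int_{C_j}\!\phi_j^2\,d\mu \;\geq\; 4\cdot\tfrac{N}{2}\cdot\Theta\!\left(\tfrac{1}{2^d d^2}\right) \;=\; \Omega(1/d^2),
\]
so the $L_2(\mu)$ separation is $\Omega(1/d)$. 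A ReLU network of size $n=\poly(d)$ with $\poly(d)$-bit weights and biases is specified by $O(n^2\cdot\poly(d))=\poly(d)$ bits, so at most $2^{\poly(d)}$ distinct such functions are representable. Choosing $\epsilon=\Theta(1/d)$ below one third of the minimum separation, the pigeonhole forces some $f_\sigma$ (with $\sigma\in\cc$) to be $\epsilon$-far from every allowed network; this $f_\sigma$ is the desired $f$.

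The argument has no deep obstacle; it is packing versus covering. The one place requiring care is the lower bound on $\int_{C_j}\phi_j^2\,d\mu$ in growing dimension: a plateau-shaped bump would have its plateau occupy only a $2^{-d}$ fraction of the cell and would drive the pairwise separation to be exponentially small, killing the argument. The $\ell_\infty$-tent avoids this because its mass decays only polynomially (in $d$) compared to that of a full cube, leaving the separation at $1/\poly(d)$ as required.
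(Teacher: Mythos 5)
Your proposal is correct and is essentially the same counting argument the paper uses: both proofs index a family of $1$-Lipschitz bump-sums by $\{0,1\}^{2^d}$, lower-bound the number of functions that must be distinguished, and compare it to the $2^{\poly(d)}$ count of representable networks. The differences are cosmetic but worth noting. The paper puts a \emph{flat-top} bump (constant value $1/4$) on each of the $2^d$ cubes $A_{\bz}$ and takes $\mu$ uniform on $\bigcup_{\bz}A_{\bz}$, so the difference of two members of the family is piecewise constant on the support of $\mu$ and the $L_2$ computation reduces to counting disagreeing cubes; it then runs a \emph{covering} argument (Hamming-ball volume bound on $|B_{2\epsilon}(f')|$). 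You instead use $\ell_\infty$-tents and uniform $\mu$ on all of $[0,1]^d$, paying a $\Theta(1/d^2)$ factor in $\int\phi_j^2\,d\mu$, and run a \emph{packing} argument via Gilbert--Varshamov. The two bookkeeping styles are dual and both land at $\epsilon=\Theta(1/d)$; your version has the slight aesthetic advantage of a fully supported uniform $\mu$, while the paper's flat-top choice makes the integral trivial.

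Two small points to clean up. First, you cannot have a binary code of relative minimum distance $\geq 1/2$ with rate $\Omega(1)$: by the Plotkin bound such a code has at most $O(N)$ codewords, so $|\cc|$ would not be $2^{\Omega(2^d)}$. Replace $1/2$ by any fixed constant $\delta_0<1/2$ (say $1/4$); the separation becomes $\Omega\!\left(4\cdot\delta_0 N\cdot 2^{-d}/d^2\right)=\Omega(1/d^2)$, so nothing downstream changes. Second, the claim ``at most $2^{\poly(d)}$ networks, hence some $f_\sigma$ works'' gives, for each fixed polynomial $p$, an $f_\sigma$ depending on $p$. To produce a single sequence $\{f_d\}$ that defeats \emph{every} polynomial simultaneously (as the theorem quantifiers require), you need to pick, at dimension $d$, an $f_\sigma$ that is $\epsilon$-far from all networks of description length up to some superpolynomial, subexponential bound (the paper uses $d^{\log d}$ bits); since $2^{d^{\log d}}\ll 2^{\Omega(2^d)}$, pigeonhole still applies, and any fixed polynomial $p$ is eventually dominated by $d^{\log d}$. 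These are routine fixes and do not affect the soundness of your argument.
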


We note that Theorem~\ref{thm:bounded depth} holds already for networks whose weights are represented by a $\poly(d)$ number of bits. That is, if the benign function $f$ cannot be approximated by a network of size $\poly(d)$ and depth $k$, whose weights are represented by $\poly(d)$ bits, then $\EXP \not \subseteq \TC^0_{k-2}$. Hence the barriers to depth lower bounds (for benign functions) hold already for the network considered in Theorem~\ref{thm:counting}.
The proof of Theorem~\ref{thm:counting} follows essentially from a counting argument: we show that in order to cover the set of all $1$-Lipschitz functions $f:[0,1]^d \rightarrow [0,1]$ with balls of radius $\epsilon=\frac{1}{\poly(d)}$ (w.r.t. norm $L_2(\mu)$), the number of balls required is larger than the number of $\poly(d)$-sized networks whose weights are represented by $\poly(d)$ bits. See Appendix~\ref{app:proof of theorem counting} for the proof.

\section{Barriers to size lower bounds and to size separation}
\label{sec:barriers size}

\subsection{Barriers to superpolynomial lower bounds}

In Section~\ref{sec:barriers depth} we studied which functions cannot be approximated by neural networks of polynomial size and bounded depth. Here, we study which functions cannot be approximated by neural networks of polynomial size without restricting its depth. The barriers from Section~\ref{sec:barriers depth} clearly apply also to networks of polynomial size and unbounded depth. 
Indeed, if a benign function cannot be approximated by any polynomial-sized network, then it clearly cannot be approximated by any polynomial-sized network of a bounded depth.
Thus, there is a barrier to proving 
superpolynomial size lower bounds for benign functions.
We now show that a barrier can be obtained already for functions that satisfy a weaker requirement.
\begin{definition}
We say that a function $f:[0,1]^d \rightarrow [0,1]$ is {\em semi-benign} if it satisfies the following conditions:
\begin{enumerate}
	\item It is $2^{\poly(d)}$-Lipschitz.
	\item It is exponential-time computable for $\poly(d)$-bits inputs: For every  $c = \poly(d)$ and $c'=\co(\log(d))$, there is an  algorithm $\ca$ that runs in time exponential in $d$, such that for every input $\bx \in [0,1]^d$ where each component is given by a binary representation with $c$ bits, it returns $f(\bx)$ within precision of $c'$ bits.
	Namely, the algorithm $\ca$ returns a $c'$-bits binary representation of $\by \in [0,1]$ such that $|\by-f(\bx)| \leq \frac{1}{2^{c'}} = \frac{1}{\poly(d)}$.
\end{enumerate}
\end{definition}	
Note that, unlike benign functions, the Lipschitz constant of semi-benign functions can be exponential in $d$.
We show a barrier to size lower bounds for semi-benign functions.

\begin{theorem}
\label{thm:bounded size}
	If there exists a semi-benign function $f:[0,1]^d \rightarrow [0,1]$, that cannot be approximated by neural networks of size $\poly(d)$ w.r.t. a distribution $\mu$ with a polynomially-bounded marginal density, then $\EXP \not \subseteq \Ppoly$.
\end{theorem}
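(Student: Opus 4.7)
The plan is to argue by contraposition, paralleling the strategy behind Theorem~\ref{thm:bounded depth} but this time with no depth restriction. Suppose $\EXP \subseteq \Ppoly$. The goal is to show that every semi-benign $f:[0,1]^d \to [0,1]$ can be approximated within any $\epsilon = 1/\poly(d)$ by a $\poly(d)$-sized ReLU network with respect to any distribution $\mu$ having a polynomially-bounded marginal density, which contradicts the assumed non-approximability of $f$.

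Fix such an $f$ with Lipschitz constant $L = 2^{\poly(d)}$. First I would pick a binary precision $c = \poly(d)$ large enough so that rounding each coordinate of an input $\bx$ to $c$ bits yields a point $\tbx$ with $\norm{\bx-\tbx} \leq \sqrt{d}\,2^{-c}$, and hence $|f(\bx)-f(\tbx)| \leq L\sqrt{d}\,2^{-c} \leq \epsilon/3$. This is exactly where the argument must differ from Theorem~\ref{thm:bounded depth}: since $L$ is only bounded by $2^{\poly(d)}$, the required $c$ is polynomial rather than logarithmic. By the semi-benign property, there is an algorithm $\ca$ running in time $2^{\poly(d)}$ that, given $\tbx$ in its $c\cdot d = \poly(d)$-bit encoding, returns a $c'$-bit binary approximation (with $c' = \co(\log d)$) of $f(\tbx)$ accurate to within $\epsilon/3$. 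Let $\hat f:\{0,1\}^{c d} \to \{0,1\}^{c'}$ be the Boolean function that $\ca$ computes. Since $\ca$ runs in exponential time and the input length is $\poly(d)$, each output bit of $\hat f$ lies in $\EXP$, so under the hypothesis $\EXP \subseteq \Ppoly$ there is a Boolean circuit of size $\poly(d)$ computing $\hat f$. Standard simulations convert this Boolean circuit into a threshold circuit, and then into a ReLU network, both of size $\poly(d)$.

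The remainder is a ``binarize, compute, dequantize'' pipeline identical in spirit to the one in Theorem~\ref{thm:bounded depth}. I would assemble the network $N$ in three blocks: (i) a coordinate-wise binarization block that maps $\bx$ to a $c$-bit binary encoding $\hbx$ using a cascade of ReLU-implemented threshold gates (feasible because the depth is now unrestricted and $c = \poly(d)$); (ii) the ReLU simulation of the circuit for $\hat f$; (iii) a linear combination that converts the $c'$ output bits back into a real value in $[0,1]$. Each block has size $\poly(d)$, so the whole network has size $\poly(d)$. The main obstacle, as before, is that the binarization map is discontinuous at dyadic boundaries, so on such inputs block (i) may output a different $\hbx$ than intended. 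This is where the polynomially-bounded marginal density of $\mu$ is used: by a union bound across coordinates, the measure of inputs falling within a $1/\poly(d)$ neighborhood of a dyadic boundary is $1/\poly(d)$, contributing at most $\epsilon/3$ to the $L_2(\mu)$ error. Combining the three error terms gives $\snorm{f-N}_{L_2(\mu)} \leq \epsilon$, completing the contrapositive.

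The hard part is the error bookkeeping rather than any new ideas: one must verify that the polynomial blow-up in $c$ caused by the exponential Lipschitz constant still leaves room for all intermediate simulations and the boundary-avoidance argument to remain of size $\poly(d)$. The reason the theorem yields only $\EXP \not\subseteq \Ppoly$ (and not the trivial $\Ptime \not\subseteq \Ppoly$) is precisely that $c$ must be polynomial, so the reduction produces a Boolean function $\hat f$ on a $\poly(d)$-bit input that is only exponential-time computable. No step in this plan requires any structural assumption on the network beyond what is already permitted in the theorem statement, so the argument applies verbatim.
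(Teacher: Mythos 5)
Your plan is correct and follows essentially the same route as the paper's proof: truncate to $c = \poly(d)$ bits (forced by the exponential Lipschitz constant), define $\hat f$ from the exponential-time algorithm (noting it now lives on $\poly(d)$-bit inputs, so membership in $\EXP$ is all one gets), simulate a poly-size circuit for $\hat f$ with ReLUs, binarize with an unbounded-depth construction, and use the polynomially-bounded marginal density to control the boundary error. The only cosmetic differences are that you argue by contraposition rather than contradiction, you convert via Boolean circuits rather than using threshold circuits directly, and you sketch the binarizer as an iterative bit-peeling cascade rather than the paper's Telgarsky sawtooth composition (Lemma~\ref{lemma:to binary exponential}); both binarizers use $\poly(d)$ depth and size, which is precisely why the conclusion is about size and not depth.
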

The proof of the theorem follows roughly a similar idea to the proof of Theorem~\ref{thm:bounded depth}. However, since the Lipschitz constant of $f$ is exponential, then we need to use a binary representation with a polynomial number of bits. Transforming w.h.p. the input $\bx \in [0,1]^d$ to such a representation can be done with a $\poly(d)$-sized network using the construction of \cite{telgarsky2016benefits}. See Appendix~\ref{app:proof of theorem bounded size} for the complete proof.

\begin{remark}[Barrier to superpolynomial size lower bounds]
	It is a longstanding open problem whether $\EXP \not \subseteq \Ppoly$. Also, as we discussed in Remark~\ref{rem:natural proof}, there is a natural-proofs barrier to solving this problem. 
	Hence, Theorem~\ref{thm:bounded size} implies a barrier to showing that there exists a semi-benign function that cannot be approximated by polynomial-sized networks.
\end{remark}

Recall that by \cite{telgarsky2016benefits}, there exists a family of univariate functions $\{\varphi_k\}_{k=1}^\infty$ on the interval $[0,1]$, such that the function $\varphi_k$ is $2^k$-Lipschitz, 
and it cannot be approximated by any $o(k/\log(k))$-depth, $\poly(k)$-width network with respect to the uniform distribution on $[0,1]$.
Consider the functions $\{f_d\}_{d=1}^\infty$ where $f_d:[0,1]^d \rightarrow [0,1]$ is such that $f_d(\bx) = \varphi_{d^{\log(d)}}(x_1)$. The result of \cite{telgarsky2016benefits} implies that $f_d$ cannot be approximated by a network of depth $\poly(d)$ and width $\poly(d)$ w.r.t. the uniform distribution on $[0,1]^d$. Hence, it cannot be approximated by any network of size $\poly(d)$. The function $f_d$ is exponential-time computable for $\poly(d)$-bits inputs. However, note that $f_d$ is $2^{d^{\log(d)}}$-Lipschitz. Thus, its Lipschitz constant is super-exponential and hence the function is not semi-benign.

We note that the barrier implied by Theorem~\ref{thm:bounded size} holds already for networks of size $\poly(d)$ whose weights are represented by a $\poly(d)$ number of bits.
By Theorem~\ref{thm:counting} there is a $1$-Lipschitz function that cannot be approximated by a $\poly(d)$-sized network whose weights have $\poly(d)$ bits. However, we do not have any guarantees on the time complexity of computing this function.

\subsection{Barriers to $\omega(d \log^2(d))$ lower bounds}

Here, we establish a barrier to showing $\omega(d \log^2(d))$-size lower bounds with polynomial-time benign functions. 
The proof of the theorem uses ideas from the proofs of Theorems~\ref{thm:bounded depth} and~\ref{thm:bounded size}, and is given in Appendix~\ref{app:proof of theorem bounded size2}. 
\begin{theorem}
\label{thm:bounded size2}
	If there exist a polynomial-time benign function $f:[0,1]^d \rightarrow [0,1]$, a distribution $\mu$ with a polynomially-bounded marginal density, and $\epsilon=\frac{1}{\poly(d)}$, such that $f$ cannot be $\epsilon$-approximated by neural networks of size $\co(d \log^2(d))$ w.r.t. $\mu$, then there is a function $g:\{0,1\}^{d'} \rightarrow \{0,1\}$ in $\Ptime$ that cannot be computed by threshold circuits of size $\co(d')$.
\end{theorem}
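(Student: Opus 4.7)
The plan is to prove the contrapositive: assuming every Boolean function $g\in\Ptime$ on $d'$ input bits is computed by a threshold circuit of size $\co(d')$, I will show that every polynomial-time benign $f:[0,1]^d\to[0,1]$ can, for every $\epsilon=1/\poly(d)$, be $\epsilon$-approximated in $L_2(\mu)$ by a ReLU network of size $\co(d\log^2 d)$, contradicting the hypothesis of the theorem.

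First, I would fix a precision parameter $c=\co(\log d)$ large enough that $1/2^c \leq \epsilon/\bigl(C\sqrt d\cdot L\bigr)$, where $L=\poly(d)$ is the Lipschitz constant of $f$; this guarantees that discretising each coordinate of $\bx$ to $c$ bits changes the value of $f$ by at most $\epsilon/2$. Because $f$ is polynomial-time benign, the induced map $\hat f:\{0,1\}^{cd}\to\{0,1\}^c$, sending the $c$-bit encoding of $\bx$ to the $c$-bit encoding of $f(\bx)$, is polynomial-time computable. By the contrapositive assumption, each of its $c$ output bits---a Boolean function in $\Ptime$ on $d'=cd=\co(d\log d)$ input bits---is computed by a threshold circuit of size $\co(d')=\co(d\log d)$. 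Stacking the $c$ output circuits yields a threshold circuit $T$ computing $\hat f$ whose total size is $\co(c\cdot d\log d)=\co(d\log^2 d)$.

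Next I would build the ReLU approximating network $N$ as a composition of three modules, following the template of the proof of Theorem~\ref{thm:bounded depth}: (i) a \emph{bit-extraction} module that, for each of the $d$ input coordinates, produces its first $c$ binary digits using $\co(c)$ ReLU neurons per coordinate, total $\co(dc)=\co(d\log d)$; (ii) a \emph{threshold-circuit simulation} module that implements $T$ with a constant-size ReLU gadget per threshold gate, total $\co(d\log^2 d)$; (iii) a \emph{binary-to-real} module that converts the $c$-bit output of $T$ back into a real in $[0,1]$ via a single weighted sum, costing $\co(c)=\co(\log d)$ neurons. Summing gives $\mathrm{size}(N)=\co(d\log^2 d)$, as required.

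Finally, I would control the approximation error. Outside a ``bad'' set of inputs in neighbourhoods of dyadic boundaries (where the steep transitions in the bit-extraction module are inaccurate), $N(\bx)$ coincides with the $c$-bit binary approximation to $f(\bx)$ produced by $T$, which differs from $f(\bx)$ by $1/2^c \leq \epsilon/2$ by the choice of $c$. The bad set is a union of $\co(d\cdot 2^c)$ axis-aligned slabs of width $1/\poly(d)$, one per coordinate per threshold; the polynomially bounded marginal density of $\mu$ lets me bound its $\mu$-mass by $\poly(d)/2^c$, and by taking $c$ a sufficiently large multiple of $\log d$ the contribution of this set to $\snorm{f-N}_{L_2(\mu)}^2$ (bounded by $\sup|f-N|^2$ times this mass) is at most $\epsilon^2/4$. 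Combining the two error sources yields $\snorm{f-N}_{L_2(\mu)}\leq\epsilon$, completing the contradiction.

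The principal obstacle I anticipate is the joint size/accuracy constraint on the bit-extraction module: it must produce correct bits using only $\co(c)$ ReLU neurons per coordinate, yet any ReLU implementation is continuous and so unavoidably inaccurate on a neighbourhood of each dyadic threshold. Reconciling the $\co(d\log^2 d)$ size budget with the target $1/\poly(d)$ error requires that the polynomially bounded marginal density hypothesis be used to render these ``ambiguity regions'' negligible in $L_2(\mu)$---precisely the role it already played in Theorems~\ref{thm:bounded depth} and~\ref{thm:bounded size}, which is why I would borrow the transition-region analysis from those proofs verbatim.
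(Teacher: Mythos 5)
Your overall architecture — discretize to $c=\co(\log d)$ bits, pass to the Boolean function $\hat f:\{0,1\}^{cd}\to\{0,1\}^c$, obtain a small threshold circuit for $\hat f$, and then wrap it with a bit-extraction front end (Telgarsky-style, not the depth-$2$ gadget, since $2^c=\poly(d)$ would blow the budget) and a binary-to-real back end, with the error controlled by the polynomially-bounded marginal density — matches the paper's proof. But there is a genuine gap in the step where you invoke the contrapositive hypothesis.

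You write that ``each of its $c$ output bits --- a Boolean function in $\Ptime$ on $d'=cd$ input bits --- is computed by a threshold circuit of size $\co(d')$,'' and then stack the $c$ circuits to get total size $\co(c\cdot d\log d)=\co(d\log^2 d)$. The problem is a quantifier/constant mismatch: the contrapositive hypothesis says that \emph{for each} function family $g\in\Ptime$ there exists a constant $C_g$ with circuits of size $\leq C_g d'$. You are applying this to $c=c(d)=\co(\log d)$ different output-bit families $\hat f_1,\dots,\hat f_{c}$, and $c$ is unbounded as $d\to\infty$, so the resulting constants $C_1,\dots,C_{c}$ need not be uniformly bounded. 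Your ``stacked'' circuit therefore has size $\sum_{j\le c}C_j\cdot cd$, which is not demonstrably $\co(d\log^2 d)$. (There is a secondary definitional wrinkle as well: since the number of output bits depends on $d$, it is not even immediate how to make the ``$j$-th output bit'' a single Boolean function family in $\Ptime$ defined for all input lengths, which is what the contrapositive hypothesis requires.)

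The paper closes exactly this gap with a selector construction (Lemma~\ref{lemma:to output dim 1 size m} in the appendix): it defines a \emph{single} Boolean function $g'$ on $d'=cd+c$ inputs, where $c$ extra ``selector'' bits indicate which output bit of $\hat f$ to return. This is one function family in $\Ptime$, so the hypothesis yields one constant $C$ and a circuit of size $Cd'$; hardwiring the selector to each of the $c$ positions gives $c$ circuits of total size $c\cdot Cd'=\co(d\log^2 d)$, with a uniform constant. You should insert this multi-output-to-single-output reduction before the stacking step. (As a minor point, your size accounting of $\co(c)$ neurons per coordinate for bit extraction undercounts — each of the $c$ bits needs $\co(c)$ neurons for the sawtooth composition, so $\co(c^2)$ per coordinate and $\co(d\log^2 d)$ total — but this still fits the budget and does not affect correctness.)
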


\begin{remark}[Barrier to $\omega(d \log^2(d))$-size lower bounds and to size separation]
\label{rem:barrier to size separation}
	It is a longstanding open problem whether there is a function in $\Ptime$ (or even in $\NP$) that cannot be computed by threshold circuits (or even Boolean circuits) of linear size (cf. \cite{find2016better, arora2009computational}). Hence, Theorem~\ref{thm:bounded size2} implies a barrier to proving that there exists a polynomial-time benign function that cannot be approximated by networks of size $\co(d \log^2(d))$. 
	Thus, it also implies a barrier to showing size-separation results for polynomial-time benign function, between size $\co(d \log^2(d))$ and some larger size.
\end{remark}

Let $\{\varphi_k\}_{k=1}^\infty$ be the functions from  \cite{telgarsky2016benefits}, and recall that $\varphi_k$ is $2^k$-Lipschitz, and that it cannot be approximated by any $o(k/\log(k))$-depth, $\poly(k)$-width network with respect to the uniform distribution on $[0,1]$.
Consider the functions $\{f_d\}_{d=1}^\infty$ where $f_d:[0,1]^d \rightarrow [0,1]$ is such that $f_d(\bx) = \varphi_{d \log^4(d)}(x_1)$. Thus, $f_d$ cannot be approximated by networks of depth $\co(d \log^2(d))$ and width $\poly(d)$ w.r.t. the uniform distribution on $[0,1]^d$. Therefore, it cannot be approximated by any network of size $\co(d \log^2(d))$. 
The function $f_d$ is polynomial-time computable. 
However, note that $f_d$ is $2^{d \log^4(d)}$-Lipschitz. Thus, its Lipschitz constant is superpolynomial, and hence the function is not benign.

Finally, we note that the barrier implied by Theorem~\ref{thm:bounded size2} holds already for networks of size $\co(d \log^2(d))$ whose weights are represented by a $\poly(d)$ number of bits.
By Theorem~\ref{thm:counting} there is a $1$-Lipschitz function that cannot be approximated by such networks. However, we do not have any guarantees on the time complexity of computing this function.

\section{Lower bounds for Boolean functions}
\label{sec:boolean}

In this section we establish lower bounds for the size needed to implement certain explicit Boolean functions with neural networks. 
Our lower bounds are with respect to neural networks that exactly interpolate a Boolean function $f$. Namely, for every Boolean input the network outputs the \emph{exact} Boolean value of $f$. The same lower bounds (with nearly identical proofs) apply also to neural networks where the output neuron has a threshold activation function. Such thresholding is often used when considering Boolean functions implemented by neural networks  (e.g., \cite{mukherjee2017lower,martens2013representational,maass1997bounds,koiran1996vc,maass1991computational}).   

\subsection{$\Omega(d/\log d)$ lower bound for approximation in $L_2(\cu(\{0,1\}^d))$}

The lower bounds in this section are based on communication-complexity lower bounds.
We assume familiarity with communication complexity (for an excellent introduction see \cite{kushilevitz1997communication}), 
and consider the worst-case partition setting 
(cf. Chapter~7 in \cite{kushilevitz1997communication}). In this setting there is a Boolean function $f$ with $d$ inputs.
An input $(y_1,...,y_d)$ is partitioned between two players, Alice and Bob, with unbounded computational power. In other words, Alice has the set of bits $\{y_i|i \in I\}$
and Bob has the set of bits $\{y_j|j \in [d]\setminus I\}$ (where $I$ is a nonempty subset of $[d]$).
The goal of Alice and Bob is to compute $f(y_1, \ldots ,y_d)$ using a predefined protocol with as few bits exchanged between the players.
In every round of the protocol a single player can send one bit of communication to the other party. The \emph{cost} of a communication protocol on a given input and partition is the number of bits exchanged between the players.
The cost of a given protocol is its maximum cost over all possible inputs and partitions.
We consider randomized protocols, where Alice and Bob can use random bits and furthermore have access to a common source of random bits.
The randomized (worst case) communication complexity of $f$, denoted by $R(f)$, is the minimal cost of a protocol that results with computing $f$
correctly with probability at least $2/3$ on every possible input 
and partition.

\cite{nisan1993communication} observed that a threshold circuit $C$ for a Boolean function $f$ can be used for a communication protocol evaluating $f$, whose cost (number of bits exchanged) is not much larger than the size of $C$, as any threshold gate can be evaluated with a protocol of logarithmic cost (in the number of inputs to the gate). 
Therefore, lower bounds on the communication complexity of $f$ imply lower bounds on the threshold circuit complexity of $f$. When trying to use this idea for neural networks, a difficulty is that the outputs of the neurons are real numbers, as opposed to the case of threshold circuits where the output of every gate is Boolean. We circumvent this problem by noticing that for two parties who have the parameters of a ReLU network computing a Boolean function, and want to determine the sign of the output for a given Boolean input,
the parties can determine the sign of the output of each neuron in the network recursively by a low-communication protocol. That is, once the players know for all neurons in layers $1,\ldots,j-1$ whether their outputs are zero or positive, we show that they can determine for neurons in layer $j$ whether their outputs are zero of positive, while using a logarithmic number of bits for each neuron.
Indeed, they can remove the neurons with output $0$ in layers $1,\ldots,j-1$, and then the input to each neuron in layer $j$ is a linear function of the network's inputs, which implies that the sign of the neuron's output can be determined with the protocol of \cite{nisan1993communication}.
This idea is formalized in the following theorem, and extended to the more general case of neural networks with $k$-piecewise-linear activation functions. See Appendix~\ref{app:proof of theorem cc ReLU} for a proof.

\begin{theorem}
\label{thm:CC_ReLU}
    Let $h:\{0,1\}^d \rightarrow \{0,1\}$ be such that $R(h)=\Omega(d)$.
	Any ReLU network computing $h$ has size $\Omega(d/\log d)$. More generally, any neural network with a $k$-piecewise-linear activation function computing $h$ has size $\Omega(d/(\log d \cdot \log k))$.
\end{theorem}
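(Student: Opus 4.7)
The plan is to convert a size-$s$ ReLU network $N$ computing $h$ into a randomized communication protocol for $h$ of cost $O(s\log d)$, so that the hypothesis $R(h)=\Omega(d)$ forces $s=\Omega(d/\log d)$. Throughout, the input $y\in\{0,1\}^d$ is split between Alice and Bob according to an arbitrary partition, and both players know $N$'s parameters. The central observation, already flagged in the paragraph preceding the theorem, is that although ReLU neurons emit real values, the players only need to jointly learn the Boolean \emph{activation pattern}---for each neuron, one bit indicating whether its pre-activation is $\le 0$ or $>0$---since once this pattern is known, the network reduces to a known affine function of $y$ whose sign equals $h(y)$.

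Concretely, I would process the neurons $n_1,\dots,n_s$ in topological order, maintaining the invariant that after step $j$ both players know the bits $a_1,\dots,a_j\in\{0,1\}$. To compute $a_{j+1}$, observe that replacing the inactive previous neurons by $0$ and unfolding the active sub-network expresses the pre-activation of $n_{j+1}$ as a known linear form $\langle\alpha_{j+1},y\rangle+\beta_{j+1}$ in the \emph{original} input $y$, whose coefficients both players compute locally from $N$'s weights and $a_1,\dots,a_j$. Determining $a_{j+1}$ is then one threshold-gate query on a bit-partitioned input, which Nisan's randomized public-coin protocol for threshold gates solves in $O(\log d)$ bits with constant error (the dependence on the error $\epsilon$ being logarithmic and absorbable via shared-randomness amplification). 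Setting the per-query error to $O(1/s)$ and union-bounding across the $s$ neurons yields total communication $O(s\log d)$; the final bit $h(y)$ is then a known affine form in $y$, read off with one additional threshold query and absorbed into the bound. Combining with $R(h)=\Omega(d)$ gives the claim.

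For the general $k$-piecewise-linear activation, the role of $a_j\in\{0,1\}$ is played by the index in $\{1,\dots,k\}$ of the linear piece the pre-activation falls into; this index is located by a binary search of $\lceil\log_2 k\rceil$ sign queries against the known breakpoints, each a threshold query of cost $O(\log d)$ via Nisan's protocol. The per-neuron cost therefore grows to $O(\log d\cdot\log k)$ and the total to $O(s\log d\cdot\log k)$, yielding $s=\Omega(d/(\log d\cdot\log k))$. The main obstacle is justifying that Nisan's threshold protocol actually applies to the \emph{unfolded} linear forms $\langle\alpha_j,y\rangle+\beta_j$: the coefficients $\alpha_j$ are sums of products of weights along paths in $N$ and may have huge magnitude, but Nisan's bound depends only on the number of Boolean inputs $d$ and is insensitive to the real-valued weights, so the protocol is applicable without precision assumptions on $N$. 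The remaining care is in error-amplification bookkeeping, handled by a standard union bound together with shared public randomness.
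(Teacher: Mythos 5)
Your proof is correct and follows essentially the same approach as the paper's: unfold the network's activation pattern one neuron at a time via Nisan's randomized threshold protocol, union-bound the error, and for $k$-piecewise-linear activations binary-search the active piece for an extra $\log k$ factor. The only cosmetic difference is your choice of per-query error $O(1/s)$ versus the paper's $O(d^{-2})$; both rely on the same implicit WLOG that $s$ (and $k$) are small enough that $\log s = O(\log d)$, which the paper states explicitly but you leave tacit.
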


Two classical polynomial-time computable Boolean functions, 
that have 
$\Omega(d)$ randomized communication complexity are \emph{disjointness} and \emph{inner product}.  
The disjointness function $f=\text{DISJ}_d$ evaluates to $1$ on Boolean inputs $(x_1,x_2 \ldots x_{2d})$ iff the two subsets of $[d]$ whose characteristic vectors are $(x_1,x_2, \ldots, x_{d}), (x_{d+1},\ldots, x_{2d})$ are disjoint. In other words, $f$ evaluates to $1$ iff there is no index $j \in [d]$ where $x_j=x_{d+j}=1$. It is known that the randomized communication complexity of the disjointness function is $\Omega(d)$ \citep{kalyanasundaram1992probabilistic,razborov1992distributional,bar2004information}.
The inner product function $g=\text{IP}_d$ with Boolean inputs $(x_1,x_2, \ldots, x_{d},y_1,y_2, \ldots, y_{d})$ evaluates to $\sum_{i=1}^d x_i y_i \mod 2$. The inner product function is known to satisfy
$R(g)=\Omega(d)$ as well \citep{babai1986complexity}. 
Thus, Theorem~\ref{thm:CC_ReLU} implies the following corollary:

\begin{corollary}
\label{cor:CC_ReLU}
	Any ReLU network computing $\text{DISJ}_d$ or $\text{IP}_d$ has size $\Omega(d/\log d)$.	
\end{corollary}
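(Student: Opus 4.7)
The plan is to derive the corollary as a near-immediate consequence of Theorem~\ref{thm:CC_ReLU}, once the classical linear randomized communication-complexity lower bounds for the two functions are in hand. The only non-routine bookkeeping is to check that the usual lower bounds from the literature, typically stated for a particular natural partition of the inputs between Alice and Bob, transfer to the worst-case partition complexity $R(\cdot)$ defined in Section~\ref{sec:boolean}.

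First I would invoke the two classical bounds as cited just before the corollary. For $\text{DISJ}_d$ on $2d$ bits, with Alice holding $(x_1,\ldots,x_d)$ and Bob holding $(x_{d+1},\ldots,x_{2d})$, the results of Kalyanasundaram--Schnitger and Razborov (and the sharper information-theoretic proof of Bar-Yossef et al.) show that any bounded-error randomized protocol must exchange $\Omega(d)$ bits. For $\text{IP}_d$ on $2d$ bits, the lower bound of Babai--Frankl--Simon gives the analogous $\Omega(d)$ bound under the symmetric partition.

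Next, since the worst-case partition complexity $R(\cdot)$ as defined in the paper is by construction a maximum over all partitions, it is automatically at least the complexity with respect to any fixed partition. Hence the two lower bounds above yield $R(\text{DISJ}_d) = \Omega(d)$ and $R(\text{IP}_d) = \Omega(d)$. Viewing each of these as a Boolean function on $d' = 2d$ inputs, we have $R = \Omega(d) = \Omega(d')$, which exactly matches the hypothesis of Theorem~\ref{thm:CC_ReLU}. Applying that theorem with input length $d'$ then yields a size lower bound of $\Omega(d'/\log d') = \Omega(d/\log d)$ on any ReLU network computing either function, which is precisely the statement of the corollary. I do not expect any genuine obstacle: the substantive work — the linear communication lower bounds for disjointness and inner product, on the one hand, and the reduction from ReLU networks to low-communication protocols encoded in Theorem~\ref{thm:CC_ReLU}, on the other — has already been carried out, so the remaining step is pure bookkeeping of input length and partition.
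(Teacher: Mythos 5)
Your proposal is correct and follows essentially the same route as the paper: invoke the classical $\Omega(d)$ randomized communication lower bounds for $\text{DISJ}_d$ and $\text{IP}_d$, note these carry over to (and are only strengthened by) the worst-case-partition measure $R(\cdot)$, and then apply Theorem~\ref{thm:CC_ReLU}. Your explicit bookkeeping of the input length ($d' = 2d$) and of the partition issue is the same implicit step the paper performs, so there is no substantive difference.
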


In the next section we will improve Corollary~\ref{cor:CC_ReLU} and establish a linear lower bound. However, since the results here are based on randomized communication complexity, then we are able to obtain a $\Omega(d/\log d)$ lower bound already for approximation in the $L_2$ sense. 
Intuitively, it follows from the following argument. Assume that there is a neural network $N$ that approximates $\text{IP}_d$ in the $L_2$ sense w.r.t. the uniform distribution over $\{0,1\}^{2d}$. It implies that there is a neural network $N'$ that computes $\text{IP}_d$ correctly on a large fraction of the inputs. 
Using $N'$ and the protocol from Theorem~\ref{thm:CC_ReLU}, we show a randomized protocol, that for random vectors $\bx',\by' \in \{0,1\}^d$, computes w.h.p. $\text{IP}_d(\bx+\bx',\by+\by'), \text{IP}_d(\bx+\bx',\by'), \text{IP}_d(\bx',\by+\by'), \text{IP}_d(\bx',\by')$. Then, we have
\[
	\text{IP}_d(\bx,\by) = \text{IP}_d(\bx+\bx',\by+\by') +  \text{IP}_d(\bx+\bx',\by') + \text{IP}_d(\bx',\by+\by') + \text{IP}_d(\bx',\by') \mod 2~.
\]
Thus, the protocol computes w.h.p. $\text{IP}_d(\bx,\by)$ for every $\bx,\by$. Therefore, we show that the lower bound on the randomized communication complexity of $\text{IP}_d(\bx,\by)$ implies a lower bound on the sizes of $N'$ and $N$.  
This idea is formalized in the following theorem (see Appendix~\ref{app:proof of theorem IP approx} for a proof).

\begin{theorem}
\label{thm:IP approx}
	Let $\epsilon=\frac{1}{20}$.
	Let $N$ be a ReLU network that $\epsilon$-approximates the function $\text{IP}_d$ w.r.t. the uniform distribution over $\{0,1\}^{2d}$. Then, $N$ has size $\Omega(d/\log d)$.
\end{theorem}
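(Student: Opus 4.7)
The plan is to convert the $L_2$-approximation guarantee into a randomized two-party communication protocol for $\text{IP}_d$ of total cost $O(s \log d)$, where $s = \mathrm{size}(N)$. Combining this with the classical bound $R(\text{IP}_d) = \Omega(d)$ of~\cite{babai1986complexity} will then force $s \geq \Omega(d / \log d)$.

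First I would turn the $L_2$ guarantee into a high-probability pointwise bound on the Boolean cube. Rounding $N$ at $1/2$ produces the Boolean classifier $\tilde N(\bx) := \onefunc[N(\bx) \geq 1/2]$; every input on which $\tilde N$ and $\text{IP}_d$ disagree contributes at least $1/4$ to the squared $L_2$-error, so Markov's inequality gives
\[
    \Pr_{\bx \sim \cu(\{0,1\}^{2d})}\bigl[\tilde N(\bx) \neq \text{IP}_d(\bx)\bigr] \;\leq\; 4 \epsilon^2 \;=\; \tfrac{1}{100}.
\]
The classifier $\tilde N$ is just $N$ followed by one extra sign gate, so by the argument behind Theorem~\ref{thm:CC_ReLU} there is, for any partition of the $2d$ input bits between Alice and Bob, a deterministic protocol of cost $O(s \log d)$ that computes $\tilde N(\bx)$ (the extra output sign is handled by one additional Nisan-style subprotocol).

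Next I would randomize worst-case inputs via the XOR-shift trick. Given Alice's input $\bx \in \{0,1\}^d$ and Bob's input $\by \in \{0,1\}^d$, the players draw $\bx', \by' \in \{0,1\}^d$ independently and uniformly from the public random tape. Bilinearity of inner product mod $2$ yields
\[
    \text{IP}_d(\bx, \by) \;=\; \text{IP}_d(\bx + \bx', \by + \by') + \text{IP}_d(\bx + \bx', \by') + \text{IP}_d(\bx', \by + \by') + \text{IP}_d(\bx', \by') \pmod 2,
\]
and each of the four shifted pairs is marginally uniform on $\{0,1\}^{2d}$. For each pair, Alice holds one half of the bits and Bob holds the other (since $\bx', \by'$ are public and hence known to both), so the players run the $O(s \log d)$-cost subprotocol to evaluate $\tilde N$ on that pair and XOR the four outputs. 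By a union bound over the four shifted inputs, the probability that at least one of them lands in the bad set where $\tilde N$ disagrees with $\text{IP}_d$ is at most $4/100 < 1/3$.

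This gives a randomized protocol for $\text{IP}_d$ of cost $O(s \log d)$ and error $< 1/3$, whence $s \log d = \Omega(d)$ and $s = \Omega(d / \log d)$. The main subtlety is checking that the Theorem~\ref{thm:CC_ReLU} subprotocol extends correctly to a ReLU network topped by a sign gate and keeping the input partitions straight across the four subprotocol calls; both are routine but require care with the definitions. One could also simplify the argument by observing that in three of the four shifted pairs at least one party already knows both halves and can compute the true inner product locally, so only one invocation of the Theorem~\ref{thm:CC_ReLU} subprotocol is strictly necessary---this shaves constants but does not change the asymptotic bound.
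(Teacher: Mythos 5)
Your argument is essentially the paper's proof: both round $N$ to a Boolean classifier whose disagreement probability with $\text{IP}_d$ is $O(\epsilon^2)$ under the uniform distribution (you via a sign gate at $1/2$, giving $4\epsilon^2$; the paper via a pair of ReLUs that clip $[0,1/3]\cup[2/3,1]$ to $\{0,1\}$, giving $9\epsilon^2$), both use the XOR-shift self-reduction to turn the distributional agreement into a worst-case randomized protocol, and both invoke the simulation from Theorem~\ref{thm:CC_ReLU} per call to get cost $O(s\log d)$, which against $R(\text{IP}_d)=\Omega(d)$ forces $s=\Omega(d/\log d)$. One small slip: the Nisan-style subprotocol underlying Theorem~\ref{thm:CC_ReLU} is randomized with error $o(1)$, not deterministic, so the total error of your protocol is $4/100 + o(1)$ rather than exactly $4/100$; this is still below $1/3$ for large $d$, so the conclusion is unaffected, but the error accounting should include that term.
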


\subsection{$\Omega(d)$ lower bound for exact computation} 

We now utilize the real communication model introduced by \cite{kraivcek1998interpolation} (see also \cite{de2016limited}) in order to establish linear size lower bounds for neural networks.
Consider a Boolean function $f:\{0,1\}^d \rightarrow \{0,1\}$ whose input is split between two players, Alice and Bob. 
We define the following real (deterministic) communication protocol. In each round, each player outputs a real number, based on its input ($\bx$ for Alice and $\by$ for Bob) and a word $\bw \in \{0,1\}^*$ (accessible to both players) defined as follows. Before the first round, $w$ is the empty word. At round $i$ Alice outputs $\alpha \in \mathbb{R}$ and Bob outputs $\beta \in \mathbb{R}$. A referee receives $\alpha, \beta$ and alters the word $\bw$ to $\bw 1$ if $\alpha > \beta$ and to $\bw 0$ if $\alpha \leq \beta$. The \emph{cost} of the protocol with respect to a given input to $f$ and its bipartition is the final length of $\bw$, and the cost of an arbitrary protocol is the maximal 
cost
for every input and bipartition. 
The real communication complexity of a function $f$, denoted by $CC^{\mathbb{R}}(f)$, is the minimal cost of a protocol, such that when the protocol halts, $f$ can be computed (deterministically with zero error) from the word $\bw$ attained at the termination of the protocol. This model is equivalent to a communication protocol with an access to a greater-than (GT) oracle (see~\cite{chattopadhyay2019equality} for details).

It can be easily shown that the real communication complexity of a linear threshold function is $1$. Using a similar reasoning to the proof of Theorem~\ref{thm:CC_ReLU}, we show that lower bounds on the real communication complexity of a function $f$ imply lower bounds on the size of a neural network that computes $f$.
Indeed, suppose that a Boolean function $f$ can be computed by a ReLU network $N$, then we have a real communication protocol that computes $f$ by evaluating the signs of the outputs of neurons in $N$ (including the output neuron) recursively, and has cost linear in the size of $N$. The argument clearly holds also for threshold circuits, and can easily be extended to neural networks with a $k$-piecewise-linear activation.
Formally, we have the following theorem (see Appendix~\ref{app:proof of theorem CC real} for a proof).
\begin{theorem}
\label{thm:CC_real}
    Let $f:\{0,1\}^d \rightarrow \{0,1\}$ be such that $CC^{\mathbb{R}}(f)=\Omega(d)$.
    Any ReLU network or threshold circuit computing $f$ has size $\Omega(d)$.
	Any neural network with a $k$-piecewise-linear activation function computing $f$ has size $\Omega(d/\log k)$. 
\end{theorem}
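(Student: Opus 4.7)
The plan is to prove the contrapositive in each case: assuming $f$ is computed exactly by a network (ReLU, threshold, or $k$-piecewise-linear) of size $s$, I will design a real communication protocol for $f$ of cost $O(s)$ for ReLU networks and threshold circuits, and $O(s\log k)$ in the $k$-piecewise-linear case. Combined with the hypothesis $CC^{\mathbb{R}}(f) = \Omega(d)$, this forces $s = \Omega(d)$ (respectively $s = \Omega(d/\log k)$). Throughout, both players know the network and the bipartition $\bz = (\bx, \by)$ of its input, where $\bx$ is Alice's share and $\by$ is Bob's.

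For the ReLU case, I would simulate the network neuron-by-neuron in topological order while maintaining the invariant that, after processing neuron $v$, both players possess an explicit affine expression for its output $o_v(\bz)$ as a function of $\bz$. Equivalently, for every ReLU neuron processed so far the players agree on whether it lies in its zero regime ($o_v = 0$) or its active regime ($o_v = p_v(\bz)$). Inductively, once this is known for all predecessors of $v$, the pre-activation $p_v(\bz)$ is a known affine function of $\bz$ and therefore decomposes as $A(\bx) - B(\by)$ with $A$ computable by Alice and $B$ (absorbing the constant term) computable by Bob. A single real-model round, in which Alice emits $A(\bx)$ and Bob emits $B(\by)$, reveals $\sign(p_v(\bz))$ and thus classifies $v$, preserving the invariant. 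After at most $s$ such rounds all neuron values are known, and one extra comparison (e.g.\ of the output to $1/2$) yields $f(\bz)\in\{0,1\}$. The threshold-circuit case is the same argument with Boolean gate outputs in place of ReLU outputs, and is in fact easier, since after the first layer the pre-activations are constants known to both players.

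For the $k$-piecewise-linear case the only new ingredient is that once the affine pre-activation $p_v(\bz)$ has been formed, the neuron's output is determined by which of the at most $k$ linear pieces of the activation contains $p_v(\bz)$. I would resolve this by binary search over the $k-1$ publicly-known breakpoints of the activation: each step amounts to comparing $p_v(\bz)$ to a fixed constant, i.e.\ comparing $A(\bx)$ to $B(\by)$ shifted by that constant, which is exactly one real-model round. Hence each neuron costs $\lceil \log_2 k\rceil$ rounds, for a total of $O(s\log k)$.

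The chief subtlety, more than a genuine obstacle, lies in verifying the invariant cleanly at the output: one must check that whatever convention the network uses for its final Boolean value (a linear output then compared to $1/2$, or a final $\sign$/ReLU gate) can be handled with $O(1)$ further real-model rounds. Beyond that the argument is bookkeeping; the essential primitive is that a single real-model round resolves the sign of any affine function of the split input $\bz$, which is exactly what is needed at every step of the simulation.
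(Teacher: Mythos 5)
Your proposal is correct and follows essentially the same approach as the paper: both simulate the network in topological/layer order, using one real-communication round per ReLU neuron or threshold gate to determine the sign of its (affine) pre-activation, and binary search over the $k$ pieces for piecewise-linear activations. The paper isolates the base observation as a lemma that $CC^{\mathbb{R}}$ of any LTF is $1$ (Alice sends her partial sum, Bob sends the threshold minus his partial sum), which is exactly your single-round affine-sign primitive.
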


By Lemma~4.9 in \cite{chattopadhyay2019equality}, we have 
$CC^{\mathbb{R}}(\text{DISJ}_d)=\Omega(d)$.
A similar lower bound for the inner-product function is known to experts, but we are not aware of a previous proof. In Appendix~\ref{app:IP real bound} we give a proof of this fact based on \cite{chattopadhyay2019equality}. Thus, we have the following corollary:
\begin{corollary}
\label{cor:CC_real}
	Any ReLU network or threshold circuit computing $\text{DISJ}_d$ or $\text{IP}_d$ has size $\Omega(d)$.
	Any neural network with a $k$-piecewise-linear activation function computing $\text{DISJ}_d$ or $\text{IP}_d$ has size $\Omega(d/\log k)$.
\end{corollary}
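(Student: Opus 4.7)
The corollary is a direct consequence of Theorem~\ref{thm:CC_real} once we establish that both $\text{DISJ}_d$ and $\text{IP}_d$ have real communication complexity $\Omega(d)$. My plan is to first certify these two CC lower bounds and then apply Theorem~\ref{thm:CC_real} as a black box to deduce size lower bounds of $\Omega(d)$ for ReLU networks and threshold circuits, and of $\Omega(d/\log k)$ for networks with a $k$-piecewise-linear activation.

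For disjointness, the real CC bound $CC^{\mathbb{R}}(\text{DISJ}_d)=\Omega(d)$ is exactly Lemma~4.9 of \cite{chattopadhyay2019equality}, so I can cite it verbatim and feed it into Theorem~\ref{thm:CC_real} with nothing else to check. The only genuinely new content therefore lies in the inner-product bound $CC^{\mathbb{R}}(\text{IP}_d)=\Omega(d)$, which (as the paper notes) will be proved in a separate appendix (Appendix~\ref{app:IP real bound}) following the spirit of \cite{chattopadhyay2019equality}. Using their equivalence between the real communication model and deterministic communication augmented with a greater-than (GT) oracle, the task becomes: show that any GT-oracle protocol computing $\text{IP}_d$ exactly must exchange $\Omega(d)$ bits. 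The natural route is to lift the classical rank-based $\Omega(d)$ lower bound for $\text{IP}_d$ in the standard deterministic model (which uses the fact that the communication matrix of IP is Hadamard and has rank $2^d$) into the GT-augmented setting. Concretely, one shows that a single GT query refines the input rectangle only in a structurally limited way, so the combinatorial rank-like measure controlling IP cannot drop by more than a bounded amount per round, forcing a protocol depth of $\Omega(d)$.

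The main obstacle is exactly this step: rank-based arguments do not survive naively when protocols are allowed GT queries, since such queries can induce much richer partitions than Boolean queries. I expect to overcome it either by importing a general lemma from \cite{chattopadhyay2019equality} that bounds how much a GT query can reduce a suitable structural measure, or by a reduction from a problem already known to be hard in the GT-model (for instance from $\text{DISJ}_d$ itself, or from a gap/equality-style problem in the same paper). Once $CC^{\mathbb{R}}(\text{IP}_d)=\Omega(d)$ is established, Theorem~\ref{thm:CC_real} applies verbatim and simultaneously delivers all three size lower bounds stated in the corollary for $\text{IP}_d$, completing the proof.
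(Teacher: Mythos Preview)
Your high-level plan is exactly the paper's: cite Lemma~4.9 of \cite{chattopadhyay2019equality} for $CC^{\mathbb{R}}(\text{DISJ}_d)=\Omega(d)$, establish $CC^{\mathbb{R}}(\text{IP}_d)=\Omega(d)$ separately, and then invoke Theorem~\ref{thm:CC_real} as a black box. Nothing to add there.

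Where you diverge is in the IP argument. You propose to lift the rank lower bound into the GT-oracle model and correctly flag that this is the hard part, since rank does not behave well under GT queries. The paper sidesteps this obstacle entirely: it does \emph{not} use rank. Instead, it invokes Lemmas~3.5 and~3.7 of \cite{chattopadhyay2019equality}, which give a real-communication lower bound of the form $CC^{\mathbb{R}}(f)=\Omega\bigl(\log(\alpha\,\beta^{\eta-1})\bigr)$ whenever the communication matrix has $\alpha\cdot 2^{2d}$ ones and every $1$-monochromatic rectangle has size at most $\beta\cdot 2^{2d}$. For $\text{IP}_d$ one has $\alpha\approx 1/2$, and Lindsey's Lemma gives $|R|\le 2^d$ for every monochromatic rectangle, i.e.\ $\beta\le 2^{-d}$; plugging in yields $\Omega(d)$ immediately. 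So the ``suitable structural measure'' you were looking for is simply the largest monochromatic rectangle, not rank, and the whole argument is a two-line computation rather than a delicate tracking of how GT queries degrade a complexity measure. Your fallback of reducing from $\text{DISJ}_d$ is unnecessary (and not obviously workable), and the rank route would be considerably harder than what is actually needed.
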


We note that for the case of computing $\text{IP}_d$ with threshold circuits, the above result was already shown with different methods \citep{groeger1993linear,jukna2012boolean,roychowdhury1994lower}. The first proof of this linear lower bound in \cite{groeger1993linear} is based on a gate elimination argument, whereas the proof of \cite{roychowdhury1994lower} uses combinatorial properties of communication matrices of threshold circuits. 
However, Corollary~\ref{cor:CC_real} is the first linear lower bound for computing $\text{DISJ}_d$ with threshold circuits, and for computing $\text{DISJ}_d$ or $\text{IP}_d$ with neural networks.

\subsection{Linear upper bounds}

Recall that by Corollary~\ref{cor:CC_real}, $\text{DISJ}_d$ and $\text{IP}_d$ cannot be computed by a neural network of size $o(d)$, and by Theorem~\ref{thm:IP approx}, $\text{IP}_d$ cannot be approximated in the $L_2$ sense by a network of size $o(d/\log d)$. In the following theorem we show a linear upper bound for computing $\text{DISJ}_d$ and $\text{IP}_d$. The theorem follows by straightforward constructions (see Appendix~\ref{app:proof of theorem boolean upper bound} for a proof).

\begin{theorem}
\label{thm:boolean upper bound}
	The functions $\text{DISJ}_d$ and $\text{IP}_d$ can be computed by ReLU networks of size $\co(d)$.
\end{theorem}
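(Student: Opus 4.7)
The plan is to exhibit explicit $O(d)$-size ReLU constructions for each of the two functions. Both constructions exploit the fact that on Boolean inputs, the pointwise AND can be realized by a single ReLU neuron via the identity $x_i \wedge y_i = [x_i + y_i - 1]_+$.

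\textbf{The construction for $\text{DISJ}_d$.} My first layer would consist of $d$ ReLU gates computing $z_i := [x_i + y_i - 1]_+ = x_i y_i$ for $i \in [d]$. Let $S := \sum_{i=1}^d z_i$; since the $z_i$ are in $\{0,1\}$, $S$ is a non-negative integer, and the two sets are disjoint iff $S = 0$. The key observation is that for integer $S \in \{0,1,\ldots,d\}$, the indicator $\mathbbm{1}[S \geq 1]$ equals $S - [S-1]_+$ (which is $0$ at $S=0$ and $1$ for $S \geq 1$). Hence
\[
\text{DISJ}_d(\bx, \by) \;=\; 1 - S + [S-1]_+ \;=\; 1 - \sum_{i=1}^d [x_i + y_i - 1]_+ + \Bigl[\sum_{i=1}^d [x_i + y_i - 1]_+ - 1\Bigr]_+,
\]
which is a depth-$3$ ReLU network of size $d+1$.

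\textbf{The construction for $\text{IP}_d$.} Again I would start with a first layer of $d$ ReLU gates computing $z_i := [x_i + y_i - 1]_+ = x_i y_i$. Setting $T := \sum_{i=1}^d z_i \in \{0,1,\ldots,d\}$, we have $\text{IP}_d(\bx,\by) = T \bmod 2$. Since $T$ is integer-valued, each step indicator admits the two-ReLU representation
\[
\mathbbm{1}[T \geq 2k] \;=\; [T - 2k + 1]_+ - [T - 2k]_+ \qquad (k = 1, \ldots, \lfloor d/2 \rfloor),
\]
so the parity identity
\[
T \bmod 2 \;=\; T - 2 \sum_{k=1}^{\lfloor d/2 \rfloor} \mathbbm{1}[T \geq 2k]
\]
yields $\text{IP}_d$ as a linear combination of $T$ and at most $d$ additional ReLU neurons fed the same linear functional $T - 2k + 1$ and $T - 2k$. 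Concatenating with the first layer gives a depth-$3$ ReLU network of size $O(d)$.

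\textbf{Anticipated difficulty.} There is essentially no technical obstacle: the whole argument rests on the two elementary identities $x \wedge y = [x+y-1]_+$ and $\mathbbm{1}[T \geq m] = [T-m+1]_+ - [T-m]_+$ on integers, together with the telescoping formula for $T \bmod 2$. The only point that must be checked carefully is that all intermediate quantities are evaluated only at the integer lattice points where these piecewise-linear identities coincide with the Boolean/arithmetic functions they are meant to compute, which is immediate since the inputs lie in $\{0,1\}^{2d}$.
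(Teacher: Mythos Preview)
Your proposal is correct and follows essentially the same approach as the paper: both realize the pointwise AND via $x_i \wedge y_i = [x_i+y_i-1]_+$, then aggregate with $O(d)$ additional ReLU neurons (the paper writes the NOR as the single neuron $[1-\sum_i z_i]_+$ rather than your $1-S+[S-1]_+$, and for $\text{IP}_d$ it defers to a piecewise-linear parity construction of the same flavor as your step-indicator sum). The only cosmetic point is that your output for $\text{IP}_d$ mixes $T$ (a layer-1 quantity) with layer-2 neurons, so to stay strictly feedforward you would pass $T$ through layer 2 via $[T]_+=T$; this does not affect the $O(d)$ bound.
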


\section{Size separation for benign functions}
\label{sec:separation real}

We utilize our results on Boolean functions in order to establish size separation for polynomial-time benign functions.
The following proposition allows us to 
translate our results from the Boolean setting to the 
continuous setting (see proof in Appendix~\ref{app:proof of prop from bool to real}).

\begin{proposition}
\label{prop:from bool to real}
	Let $g:\{0,1\}^d \rightarrow \{0,1\}$. There is a $4$-Lipschitz function $f:[0,1]^d \rightarrow [0,1]$ that agrees with $g$ on $\{0,1\}^d$, and a distribution $\mu$ on $[0,1]^d$ with a polynomially-bounded marginal density, such that:
	\begin{enumerate}
		\item If $g$ cannot be $\epsilon$-approximated by neural networks of size $\co(m)$ with respect to the uniform distribution over $\{0,1\}^d$, then $f$ cannot be $\epsilon$-approximated by neural networks of size $\co(m)$ with respect to $\mu$.
		\item If $g$ can be computed by a neural network of size $\tilde{m}$, then there is a neural network $\tilde{N}$ of size $\tilde{m} + 2d$ such that 
		$\snorm{\tilde{N}-f}_{L_2(\mu)}=0$.
		\item If $g \in \Ptime$ then $f$ can be computed in polynomial time.
	\end{enumerate}
\end{proposition}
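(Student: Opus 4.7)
The plan is to take $\mu$ to be the product distribution on $[0,1]^d$ whose $i$-th marginal is uniform on $I_0\cup I_1$ with $I_0=[0,1/4]$ and $I_1=[3/4,1]$ (marginal density $2$, hence polynomially bounded), and to define $f$ by localising $g$ to $\ell_\infty$-bumps around the hypercube vertices. Specifically, set $B(\bz)=\max\{0,\min\{1,\,2-4\snorm{\bz}_\infty\}\}$, which equals $1$ on $\snorm{\bz}_\infty\leq 1/4$, vanishes outside $\snorm{\bz}_\infty\leq 1/2$, and is $4$-Lipschitz in $\ell_2$ (since $\snorm{\cdot}_\infty$ is $1$-Lipschitz in $\ell_2$). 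Because any two distinct vertices of $\{0,1\}^d$ are at $\ell_\infty$-distance at least $1$, the open $\ell_\infty$-balls of radius $1/2$ around them are pairwise disjoint, so $B(\bx-\by)>0$ for at most one $\by\in\{0,1\}^d$, namely $\text{round}(\bx)$. Define
\[
f(\bx)\,=\,g\bigl(\text{round}(\bx)\bigr)\cdot B\bigl(\bx-\text{round}(\bx)\bigr),
\]
with $f(\bx)=0$ on the negligible set where rounding is ambiguous. Then $f(\by)=g(\by)$ on $\{0,1\}^d$; on $\text{supp}(\mu)$ one has $\snorm{\bx-\text{round}(\bx)}_\infty\leq 1/4$, forcing $B=1$ and thus $f(\bx)=g(\text{round}(\bx))$; and $f$ is globally $4$-Lipschitz, because within each bump $f=g(\by)B$ inherits the Lipschitz constant of $B$, while $f$ vanishes continuously at the boundary between adjacent bump regions, so Lipschitzness glues together on any segment by splitting at a zero-crossing. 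Property~3 is then immediate: evaluating $f$ needs one query to $g$ plus $\co(d)$ arithmetic.

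For Property~2, insert a coordinatewise soft-rounding gadget in front of $N_g$. Let $r(t)=[2t-1/2]_+-[2t-3/2]_+$, built from two ReLU neurons per coordinate ($2d$ in total), so that $r(I_0)=\{0\}$ and $r(I_1)=\{1\}$; then $\tilde N(\bx)=N_g(r(x_1),\ldots,r(x_d))$ has size $\tilde m+2d$, and on $\text{supp}(\mu)$ its input to $N_g$ is exactly $\text{round}(\bx)\in\{0,1\}^d$, giving $\tilde N(\bx)=g(\text{round}(\bx))=f(\bx)$ and $\snorm{\tilde N-f}_{L_2(\mu)}=0$. For Property~1, I argue by contrapositive and derandomise. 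If $N'$ is a size-$\co(m)$ network with $\snorm{N'-f}_{L_2(\mu)}^2\leq\epsilon^2$, then expanding against the product measure using the cell decomposition $C_\by=\prod_i I_{y_i}$ of $\mu$-mass $2^{-d}$,
\[
\E_{\by\sim\cu(\{0,1\}^d)}\,\E_{\bx\sim\cu(C_\by)}\bigl[(N'(\bx)-g(\by))^2\bigr]\,\leq\,\epsilon^2.
\]
Parameterise a point of $C_\by$ as $\bu_\by(\bxi)=(3/4)\by+\bxi$ for $\bxi\in[0,1/4]^d$, noting that $\bu_\by(\bxi)\sim\cu(C_\by)$ when $\bxi\sim\cu([0,1/4]^d)$. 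Swapping the order of expectation yields a fixed offset $\bxi^*$ with $\E_\by[(N'((3/4)\by+\bxi^*)-g(\by))^2]\leq\epsilon^2$, and the network $\tilde N(\by):=N'((3/4)\by+\bxi^*)$ folds the affine map $\by\mapsto(3/4)\by+\bxi^*$ into the first-layer weights and biases of $N'$, preserving size $\co(m)$, and $\epsilon$-approximates $g$ under $\cu(\{0,1\}^d)$.

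The main obstacle I anticipate is the joint design of $f$: one needs simultaneously $\co(1)$-Lipschitzness, agreement with $g$ on $\{0,1\}^d$, equality with $g\circ\text{round}$ $\mu$-almost everywhere, and polynomial-time computability whenever $g\in\Ptime$. Off-the-shelf Lipschitz extensions of $g$ fail one of these requirements: the multilinear extension is only $\Theta(\sqrt d)$-Lipschitz, while the McShane extension is $1$-Lipschitz but requires a minimum over $2^d$ vertices. The $\ell_\infty$-bump construction above circumvents both issues by exploiting the $\ell_\infty$-separation of the hypercube vertices, so that a single bump $B(\bx-\text{round}(\bx))$ is always the only active term---giving a closed-form, single-$g$-query expression whose Lipschitz constant is $4$, independent of $d$.
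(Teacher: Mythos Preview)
Your proof is correct and follows essentially the same approach as the paper's: the same distribution $\mu$ (uniform on $([0,1/4]\cup[3/4,1])^d$), the same rounding gadget $r(t)=[2t-1/2]_+-[2t-3/2]_+$ for Part~2, and the same averaging-over-offsets derandomisation for Part~1. The only cosmetic difference is in the bump: the paper uses $h_\bz(\bx)=\max\{0,1-4\,\mathrm{dist}_{\ell_2}(\bx,A_\bz)\}$ with $A_\bz$ the $\ell_\infty$-cube of radius $1/4$ around $\bz$, whereas you use $B(\bx-\bz)$ based on $\snorm{\cdot}_\infty$; both are $4$-Lipschitz, equal $1$ on $A_\bz$, vanish once some $|x_i-z_i|\geq 1/2$, and coincide on $\mathrm{supp}(\mu)$, so the arguments are interchangeable.
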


By Theorem~\ref{thm:IP approx}, the function $\text{IP}_d$ cannot be $\frac{1}{20}$-approximated w.r.t. the uniform distribution over $\{0,1\}^{2d}$ by neural networks of size $o(d/\log d)$. 
By Theorem~\ref{thm:boolean upper bound}, $\text{IP}_d$ can be computed by a network of size $\co(d)$. Also, $\text{IP}_d$ is clearly in $\Ptime$. Combining these results with Proposition~\ref{prop:from bool to real}, we obtain size-separation between networks of size $o(d/\log d)$ and size $\co(d)$:
\begin{corollary}
\label{cor:separation benign}
	There is a polynomial-time benign function $f:[0,1]^d \rightarrow [0,1]$ and a distribution $\mu$ on $[0,1]^d$ with a polynomially-bounded marginal density, such that:
	\begin{itemize}
		\item The function $f$ cannot be $\frac{1}{20}$-approximated by neural networks of size $o(d/\log d)$ w.r.t. $\mu$.
		\item There is a neural network $\tilde{N}$ of size $\co(d)$ such that $\snorm{\tilde{N}-f}_{L_2(\mu)}=0$. 
	\end{itemize} 
\end{corollary}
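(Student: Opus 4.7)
The plan is to obtain Corollary~\ref{cor:separation benign} by lifting the Boolean inner-product separation to the real setting via Proposition~\ref{prop:from bool to real}. Concretely, I would apply Proposition~\ref{prop:from bool to real} to the function $g = \text{IP}_d:\{0,1\}^{2d} \to \{0,1\}$ (viewed as a Boolean function on $2d$ bits). This immediately yields a $4$-Lipschitz function $f:[0,1]^{2d}\to[0,1]$ that agrees with $\text{IP}_d$ on $\{0,1\}^{2d}$, together with a distribution $\mu$ on $[0,1]^{2d}$ with polynomially-bounded marginal density. After relabeling the ambient dimension ($d' = 2d$), proving the corollary reduces to verifying the three conditions the corollary demands of $f$: polynomial-time benignness, a nontrivial size lower bound for approximation, and a matching linear-size upper bound.

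For the lower bound, I would invoke Theorem~\ref{thm:IP approx}, which says that $\text{IP}_d$ cannot be $\tfrac{1}{20}$-approximated in $L_2(\cu(\{0,1\}^{2d}))$ by ReLU networks of size $o(d/\log d)$. Feeding this into part~1 of Proposition~\ref{prop:from bool to real} transfers the same $\tfrac{1}{20}$-approximation lower bound to $f$ under the distribution $\mu$. For the upper bound, I would use Theorem~\ref{thm:boolean upper bound}, which gives a ReLU network computing $\text{IP}_d$ on $\{0,1\}^{2d}$ of size $\co(d)$; part~2 of Proposition~\ref{prop:from bool to real} then produces a ReLU network $\tilde{N}$ of size $\co(d) + 2\cdot 2d = \co(d)$ with $\snorm{\tilde{N}-f}_{L_2(\mu)}=0$.

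It remains to check that $f$ is polynomial-time benign. The Lipschitz requirement is trivial since $f$ is $4$-Lipschitz, which is certainly $\poly(d)$-Lipschitz. For the computability requirement, $\text{IP}_d \in \Ptime$ (the inner product modulo $2$ is computable in linear time), so part~3 of Proposition~\ref{prop:from bool to real} guarantees that $f$ itself is polynomial-time computable within the precision required in the definition of a polynomial-time benign function. Combining the two properties, $f$ is polynomial-time benign.

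The proof is essentially assembly of already-proved pieces, so there is no substantial obstacle; the only point that requires a moment of care is bookkeeping the input dimension (writing the separation in terms of the effective dimension $d'=2d$, which leaves the asymptotic statements $o(d/\log d)$ and $\co(d)$ unchanged) and confirming that the polynomial-time benignness of $f$ is preserved by the lifting construction in Proposition~\ref{prop:from bool to real}, which is exactly what its third bullet guarantees.
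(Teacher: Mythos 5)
Your proof is correct and follows essentially the same route as the paper: apply Proposition~\ref{prop:from bool to real} to $g=\text{IP}_d$, then invoke Theorem~\ref{thm:IP approx} for the lower bound (via part~1), Theorem~\ref{thm:boolean upper bound} for the upper bound (via part~2), and the $4$-Lipschitzness together with part~3 for polynomial-time benignness. The dimension-relabeling $d'=2d$ you note is a minor bookkeeping point that the paper leaves implicit, but it does not change the asymptotics.
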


Recall that by Remark~\ref{rem:barrier to size separation}, there is a barrier to showing size separation for polynomial-time benign functions, between size $\co(d \log^2(d))$ and some larger size. Closing the gap between the above size-separation result and the barrier is an interesting topic for future research.

Finally, our lower bounds for exact computation of Boolean functions (Corollary~\ref{cor:CC_real}) allow us to obtain size separation also in the $L_\infty$ sense. Namely, we show a polynomial-time benign function $f:[0,1]^d \rightarrow [0,1]$ that can be computed by a network of size $\co(d)$, but cannot be approximated in the $L_\infty$ sense by networks of size $o(d)$. The function $f$ is the function computed by the neural networks from Theorem~\ref{thm:boolean upper bound}. 

\begin{theorem}
\label{thm:separation benign infty}
    There is a polynomial-time benign function $f:[0,1]^d \rightarrow [0,1]$ that can be computed by a neural network of size $\co(d)$, and for every neural network $N$ such that $\norm{f-N}_\infty \leq \frac{1}{3}$, the size of $N$ is $\Omega(d)$. 
\end{theorem}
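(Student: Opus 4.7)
}
The plan is to take $f$ to be the concrete real-valued function computed by the explicit $\co(d)$-sized ReLU network that realizes $\mathrm{DISJ}_d$ from Theorem~\ref{thm:boolean upper bound}. Concretely, using the identity $\mathrm{DISJ}_d(\bx,\by)=\sigma\!\left(1-\sum_{i=1}^d \sigma(x_i+y_i-1)\right)$ (with $\sigma(z)=[z]_+$), I would define
$f:[0,1]^{2d}\to[0,1]$ by this same formula. On $\{0,1\}^{2d}$, $f$ equals $\mathrm{DISJ}_d$, and on $[0,1]^{2d}$ its output stays in $[0,1]$ because each inner ReLU lies in $[0,1]$.

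Next I would verify that $f$ is polynomial-time benign. Lipschitzness is immediate: the network has depth $\co(1)$ and weights of magnitude $\co(1)$, so by a routine composition bound $f$ is $\co(\sqrt{d})$-Lipschitz (hence certainly $\poly(d)$-Lipschitz). Polynomial-time computability with $\co(\log d)$-bit inputs follows because a forward pass through a network of size $\co(d)$ with weights bounded by a constant can be executed in polynomial time within the required precision using standard arithmetic. The upper-bound statement ($f$ is computable by a $\co(d)$-sized network) is then immediate from the construction itself.

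For the lower bound, suppose $N$ is any ReLU network with $\norm{f-N}_\infty\leq \tfrac{1}{3}$. Restricting to Boolean inputs, for every $(\bx,\by)\in\{0,1\}^{2d}$ we have $|N(\bx,\by)-\mathrm{DISJ}_d(\bx,\by)|\leq \tfrac{1}{3}$, so $N(\bx,\by)\geq \tfrac{2}{3}$ when $\mathrm{DISJ}_d(\bx,\by)=1$ and $N(\bx,\by)\leq \tfrac{1}{3}$ otherwise. Thus the augmented network $N'(\bx,\by):=\sign\!\bigl(N(\bx,\by)-\tfrac{1}{2}\bigr)$, which has size $|N|+1$ and a threshold at its output gate, \emph{exactly} computes $\mathrm{DISJ}_d$ as a Boolean function. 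Applying Corollary~\ref{cor:CC_real} (in the form noted at the start of Section~\ref{sec:boolean}, which allows a threshold at the output neuron) yields $|N|+1=\Omega(d)$, hence $|N|=\Omega(d)$.

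I do not expect any real obstacle: the two ingredients — the explicit $\co(d)$-size construction of Theorem~\ref{thm:boolean upper bound} and the linear lower bound of Corollary~\ref{cor:CC_real} — fit together immediately through the $L_\infty$ rounding argument. The only mildly delicate point is verifying the benignness conditions for the extended real-valued $f$ (not just for its Boolean restriction), and here one must be a bit careful to choose a representation of $\mathrm{DISJ}_d$ whose extension to $[0,1]^{2d}$ both stays in $[0,1]$ and has a small Lipschitz constant; the representation above accomplishes both.
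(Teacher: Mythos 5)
Your proposal is correct and follows essentially the same route as the paper: take $f$ to be the real-valued function computed by the explicit $\co(d)$-size network for a hard Boolean function from Theorem~\ref{thm:boolean upper bound}, check it is polynomial-time benign (constant depth, bounded weights $\Rightarrow$ $\poly(d)$-Lipschitz and poly-time computable), then round any $L_\infty$-approximator $N$ to a network $N'$ exactly computing the Boolean function and invoke Corollary~\ref{cor:CC_real}. The only cosmetic difference is that the paper implements the rounding with two ReLU neurons, $N'(\bx)=[3N(\bx)-1]_+-[3N(\bx)-2]_+$, keeping $N'$ a pure ReLU network, whereas you append a threshold output gate and appeal to the remark at the start of Section~\ref{sec:boolean} — both variants are covered by the lower bound.
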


We prove the theorem in Appendix~\ref{app:proof of theorem separation benign infty}.
Note that the barrier to size separation from Remark~\ref{rem:barrier to size separation} does not apply to approximation in the $L_\infty$ sense. 

\section*{Acknowledgements}

We would like to thank Mika Goos, Thomas Watson, Sasha Golovnev,
Arkadev Chattopadhyay, Pritish Kamath, Suhail Sherif and T.S. Jayram for useful discussions. 
This research is supported in part by European Research Council (ERC) grant 754705.

\bibliographystyle{abbrvnat}
\bibliography{bib}

\appendix

\section{Proofs for Section~\ref{sec:barriers depth}}

\subsection{Proof of Theorem~\ref{thm:bounded depth}}
\label{app:proof of theorem bounded depth}

Let $f:[0,1]^d \rightarrow [0,1]$ be a benign function. Assume that $f$ is $L$-Lipschitz for $L=\poly(d) \geq 1$.
Let $\epsilon=\frac{1}{\poly(d)}$, and assume that for every neural network $\cn$ of size $\poly(d)$ and depth $k$ we have $\snorm{f-\cn}_{L_2(\mu)} > \epsilon$ (for some $d$).
 Let $p(d)=\frac{4 L \sqrt{d}}{\epsilon}$, and let $c = \ceil{\log(p(d))}$. Thus, $2^c \geq p(d)$.
Let $\ci = \{\frac{j}{2^c}: 0 \leq j \leq  2^c - 1, j \in \integers\}$. 
For $\tx \in \ci$ we denote by $\bin(\tx) \in \{0,1\}^c$ the binary representation of $0 \leq j \leq 2^c-1$ such that $\tx=\frac{j}{2^c}$. For $\tbx \in \ci^d$ we denote by $\bin(\tbx) \in \{0,1\}^{c \cdot d}$ the concatenation of $\bin(\tx_i)$ for $i=1,\ldots,d$.
For $\hbx \in \{0,1\}^c$ we denote $\real(\hbx) = \frac{j}{2^c} \in \ci$, where $j$ is the integer whose binary representation is $\hbx$. 
For $\hbx \in \{0,1\}^{c \cdot d}$ we denote $\real(\hbx) \in \ci^d$, such that the $i$-th component of $\real(\hbx)$ is $\real(\hx_{(i-1) \cdot c + 1},\ldots,\hx_{i \cdot c})$.
Finally, for $x \in [0,1]$ we denote by $\trunc(x) \in \ci$ the maximal $\tx \in \ci$ such that $\tx \leq x$. Likewise, for $\bx \in [0,1]^d$ we denote $\trunc(\bx) = (\trunc(x_1),\ldots,\trunc(x_d)) \in \ci^d$.

Since $f$ is benign, there is an exponential-time algorithm $\ca$, such that given $\hbx \in \{0,1\}^{c \cdot d}$ it returns $\ca(\hbx) \in \{0,1\}^c$, such that 
\begin{equation}
\label{eq:def hatf - depth}
	|f(\real(\hbx))-\real(\ca(\hbx))| \leq \frac{1}{2^c} \leq \frac{1}{p(d)}~.
\end{equation}
Let $\hat{f}:\{0,1\}^{c \cdot d} \rightarrow \{0,1\}^c$ be the function that this algorithm computes. That is, $\hat{f}(\hbx)=\ca(\hbx)$.
Assume 
that the function $\hat{f}$ can be computed by a threshold circuit $T$ of size $\poly(d)$ and depth $k-2$. We will construct a neural network $N$ of size $\poly(d)$ and depth $k$ such that $\snorm{f-N}_{L_2(\mu)} \leq \epsilon$ and thus reach a contradiction. It implies that $\hat{f}$ can be computed in exponential time but cannot be computed by a threshold circuit of size $\poly(d)$ and depth $k-2$.
Then, the theorem follows from the following lemma (see proof in Section~\ref{sec:proof of lemma to output dim 1}).
\begin{lemma}
\label{lemma:to output dim 1}
	Let $l(d) = \co(\log(d))$ be monotonically non-decreasing and let $g:\{0,1\}^{d \cdot l} \rightarrow \{0,1\}^l$ be a function that can be computed in exponential time, and cannot be computed by a threshold circuit of size $\poly(d)$ and constant depth $m$. Then, there is a function $g':\{0,1\}^{d'} \rightarrow \{0,1\}$ that can be computed in exponential time, and cannot be computed by a threshold circuit of size $\poly(d')$ and depth $m$, i.e., $g'  \in \EXP \setminus \TC^0_m$.
\end{lemma}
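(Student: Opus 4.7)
\textbf{Proof proposal for Lemma~\ref{lemma:to output dim 1}.}

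The plan is to use the standard ``indexing'' reduction that converts a multi-output function into a single-output one by appending an index to the input. Concretely, set $d' := d \cdot l + \lceil \log_2 l \rceil$, and define
\[
g':\{0,1\}^{d'} \to \{0,1\}, \qquad g'(\bx,\bi) \;=\; g(\bx)_{\mathrm{int}(\bi)},
\]
where $\mathrm{int}(\bi)$ is the integer in $\{0,\ldots,l-1\}$ encoded by $\bi$ (and we adopt some default, e.g.\ $0$, for indices $\geq l$ if $l$ is not a power of two). Since $l = \co(\log d)$ is monotonically non-decreasing, we have $d' = \Theta(d \log d)$, so $\poly(d) = \poly(d')$ and $\poly(d') = \poly(d)$, and exponential time in $d$ is exponential in $d'$.

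For the upper bound $g' \in \EXP$: on input $(\bx,\bi)$, run the exponential-time algorithm for $g$ to obtain $g(\bx) \in \{0,1\}^l$, then output the $\mathrm{int}(\bi)$-th bit; this runs in time exponential in $d$, hence exponential in $d'$.

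For the lower bound, suppose for contradiction that $g' \in \TC^0_m$, so there exists a threshold circuit $C'$ of size $s = \poly(d') = \poly(d)$ and depth $m$ computing $g'$. Build a threshold circuit $C$ with $l$ outputs computing $g$ by taking $l$ independent copies of $C'$, where in the $j$-th copy the index inputs $\bi$ are hard-wired to the binary encoding of $j$, for $j=0,1,\ldots,l-1$ (hard-wiring simply deletes the index wires and incorporates their contribution into the bias of each gate, preserving depth). The resulting circuit $C$ has depth $m$, size $l \cdot s = \co(\log d) \cdot \poly(d) = \poly(d)$, and on input $\bx$ outputs $(g'(\bx,0), g'(\bx,1), \ldots, g'(\bx,l-1)) = g(\bx)$. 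This contradicts the hypothesis that $g$ cannot be computed by a $\poly(d)$-sized, depth-$m$ threshold circuit. Hence $g' \notin \TC^0_m$, and combined with $g' \in \EXP$ this gives $g' \in \EXP \setminus \TC^0_m$.

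The argument is essentially mechanical; the only substantive point to keep track of is the polynomial equivalence between $d$ and $d'$, which is where the hypothesis $l = \co(\log d)$ is used. Without it, replicating $C'$ a super-polynomial number of times would yield a super-polynomial upper-bound circuit for $g$, breaking the contradiction. Since $m$ is a constant independent of the copies, hard-wiring does not affect depth, so the entire reduction stays within $\TC^0_m$, as required.
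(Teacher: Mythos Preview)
Your proof is correct and follows essentially the same indexing/hard-wiring strategy as the paper. The only cosmetic difference is that the paper encodes the output index in unary (a one-hot ``$i$-selector'' of length $l$, giving $d' = d l + l$) while you use a binary index of length $\lceil \log_2 l \rceil$; both choices keep $d$ and $d'$ polynomially related since $l = \co(\log d)$, and the hard-wiring of the index into $l$ copies of the depth-$m$ circuit is identical in spirit. One small quibble: your claim $d' = \Theta(d \log d)$ need not hold literally (e.g.\ if $l$ is constant), but the conclusion $\poly(d) = \poly(d')$ that you actually use is correct regardless.
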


Let $\tilde{f}:[0,1]^d \rightarrow \ci^d$ be such that $\tilde{f}(\bx) = \real(\hat{f}(\bin(\trunc(\bx))))$. Thus, $\tilde{f}$ transforms $\bx$ to a $(c \cdot d)$-bits binary representation, runs $\hat{f}$, and converts the output from binary to a real value.
Let $\bx \in [0,1]^d$, let $\tbx = \trunc(\bx)$ and let $\hbx = \bin(\tbx)$. 
By Eq.~\ref{eq:def hatf - depth} we have 
\[
|\tilde{f}(\bx)-f(\tbx)|  
= |\real(\hat{f}(\hbx)) - f(\real(\hbx))|
\leq \frac{1}{p(d)}
= \frac{\epsilon}{4 L \sqrt{d}} \leq \frac{\epsilon}{4}~.
\] 
Also, since $f$ is $L$-Lipschitz then we have 
\[
|f(\tbx)-f(\bx)| 
\leq L \cdot \norm{\tbx-\bx} 
\leq L \cdot \frac{\sqrt{d}}{2^c}
\leq L \cdot \frac{\sqrt{d}}{p(d)}
=\frac{L \sqrt{d} \cdot \epsilon}{4 L \sqrt{d}}
= \frac{\epsilon}{4}~.
\]
Thus, $|\tilde{f}(\bx)-f(\bx)| \leq |\tilde{f}(\bx)-f(\tbx)| + |f(\tbx)-f(\bx)|  \leq \frac{\epsilon}{2}$, and therefore $\snorm{f-\tilde{f}}_{L_2(\mu)} \leq \frac{\epsilon}{2}$.

We now construct a network $N$ of size $\poly(d)$ and depth $k$ such that $\snorm{\tilde{f}-N}_{L_2(\mu)} \leq \frac{\epsilon}{2}$. It implies that $\snorm{f-N}_{L_2(\mu)} \leq \snorm{f-\tilde{f}}_{L_2(\mu)} + \snorm{\tilde{f}-N}_{L_2(\mu)} \leq \epsilon$ and thus completes the proof.
The network $N$ consists of three parts. First, it transforms the input $\bx \in [0,1]^d$ w.h.p. to $\bin(\trunc(\bx)) \in  \{0,1\}^{c \cdot d}$. Then, it simulates the threshold circuit $T$. Finally, it converts the output of $T$ from a binary representation to the corresponding real value. We now describe these parts in more details.

For the transformation from $\bx \in [0,1]^d$ to $\bin(\trunc(\bx)) \in  \{0,1\}^{c \cdot d}$ we will need the following lemma (see proof in Section~\ref{sec:proof of lemma to binary}).
\begin{lemma}
	\label{lemma:to binary}
	Let $\delta = \frac{1}{\poly(d)}$.
	There is a neural network $\cn$ of depth $2$, size $\poly(d)$, and $(c \cdot d)$ outputs, such that 
	\[
	\Pr_{\bx \sim \mu}\left[ \cn(\bx) = \bin(\trunc(\bx)) \right] \geq 1 - \delta~.
	\] 
\end{lemma}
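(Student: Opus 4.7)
The plan is to construct $\cn$ so that, for each component $i \in [d]$ and each bit position $j \in [c]$, a designated output coordinate equals the $j$-th bit of $\trunc(x_i)$ on all but a $\delta$-fraction of $\bx \sim \mu$. The construction packs, in a single hidden ReLU layer, sharp ``ramps'' implementing the staircase bit functions coordinate-by-coordinate; a union bound using the polynomial marginal-density assumption controls the bad event.

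First I would observe that, writing $0.b_1 b_2 \cdots b_c$ for the $c$-bit binary expansion of $\trunc(x)$ (valid for $x \in [0,1)$), each function $x \mapsto b_j(x)$ is a staircase that equals $1$ on the intervals $\bigl[\tfrac{2k+1}{2^j}, \tfrac{2k+2}{2^j}\bigr)$ for $k = 0, \ldots, 2^{j-1}-1$ and $0$ elsewhere, and hence decomposes as a signed telescoping sum of $2^j$ threshold indicators,
\[
b_j(x) \;=\; \sum_{k=0}^{2^{j-1}-1} \Bigl(\mathbbm{1}\bigl[x \geq \tfrac{2k+1}{2^j}\bigr] - \mathbbm{1}\bigl[x \geq \tfrac{2k+2}{2^j}\bigr]\Bigr).
\]
Each indicator $\mathbbm{1}[x \geq t]$ is then replaced by a steep ramp $r_{t,\eta}(x) = \tfrac{1}{\eta}\bigl([x-t]_+ - [x-t-\eta]_+\bigr)$, which uses two ReLU units and returns \emph{exactly} $0$ or $1$ whenever $x \notin [t, t+\eta]$. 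Substituting ramps for indicators in the display above yields $\tilde b_j(x)$, which, outside the union of $2^j$ transition windows of total Lebesgue measure at most $2^j \eta \leq 2^c \eta$, collapses to the exact Boolean value $b_j(x)$.

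Next I would assemble $\cn$ by placing every ramp needed for every $\tilde b_j(x_i)$ (ranging over $i \in [d]$, $j \in [c]$) in a single hidden ReLU layer---each such ramp is affine in a single coordinate $x_i$---and using a linear output layer of width $c \cdot d$ to read off the signed sums. The result is a depth-$2$ network with $O(d \cdot c \cdot 2^c) = \poly(d)$ hidden units, since $c = \co(\log d)$ and $2^c \leq 2 p(d) = \poly(d)$. By the polynomial marginal-density bound $\mu_i \leq M = \poly(d)$ and a union bound over $i$, $j$, and the $2^j$ transition windows per bit, the probability that any $x_i$ lands in any transition window is at most $d \cdot c \cdot 2^c \cdot M \cdot \eta$; choosing $\eta = \delta / (d\, c\, 2^c\, M) = 1/\poly(d)$ brings this below $\delta$, and on the complementary event $\cn(\bx) = \bin(\trunc(\bx))$ exactly.

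What I anticipate as the one delicate point, rather than a genuine obstacle, is verifying that the telescoping ramp sums produce exactly Boolean outputs on the ``good'' event (needed because these outputs must feed into an \emph{exact} simulation of the threshold circuit $T$ in the enclosing proof of Theorem~\ref{thm:bounded depth}). This exactness is immediate from the fact that each ramp equals exactly $0$ or exactly $1$ off its transition window, provided one uses the precise signed telescoping decomposition of $b_j$ above rather than a generic smooth approximation; all other ingredients---bounding the bad event and controlling the size---are routine given the assumptions on $c$ and $\mu$.
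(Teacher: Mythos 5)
Your proof is correct and follows the same overall strategy as the paper's: reduce to one coordinate and one bit at a time, replace each step indicator by a steep two-ReLU ramp that is exactly Boolean outside a thin transition window, and union-bound the bad event using the polynomially-bounded marginal density. The one substantive difference is the decomposition of the $j$-th bit function. You write it directly as a telescoping sum of $2^{j}$ threshold indicators at the dyadic points $(2k+1)/2^{j}$ and $(2k+2)/2^{j}$, so the ramps track the $2^{j-1}$ maximal runs on which the bit equals one. The paper instead builds, for each integer $l \in \{0,\dots,2^c-1\}$, a bump $g'_l$ equal to one on the bulk of $[l/2^c,(l+1)/2^c]$ and zero just outside it, and sets $h_j = \sum_{l \in I_j} g'_l$ where $I_j$ is the set of $l$'s whose $j$-th bit is one; this decomposes each bit into unit intervals rather than runs, but the $g_l$'s are shared across all $c$ bits. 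Either way the hidden layer has $O(d\cdot 2^c)=\poly(d)$ ReLUs and depth $2$, as required. Two cosmetic points: your ramp $r_{t,\eta}$ transitions on $[t,t+\eta]$ and hence differs from $\onefunc[x\ge t]$ at $x=t$ itself, whereas the paper's ramp transitions on the left of the threshold and matches there; since you exclude the entire closed window from the good event this is harmless. Also, the stated count of $O(dc2^c)$ hidden units is an overcount — the telescoping sum uses only $\sum_{j=1}^{c}2^{j}=O(2^c)$ indicators per coordinate — but that only loosens the bound and leaves the $\poly(d)$ conclusion intact.
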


Also, for the simulation of the threshold circuit $T$ we will need the following lemma (see proof in Section~\ref{sec:proof of lemma from TC to NN}).
\begin{lemma}
	\label{lemma:from TC to NN}
	Let $T$ be a threshold circuit with $d$ inputs, $q$ outputs, depth $m$ and size 
	$s$.
	There is a neural network $\cn$ with $q$ outputs, depth $m+1$ and size 
	$2s+q$,
	such that for every $\bx \in \{0,1\}^d$ we have $\cn(\bx) = T(\bx)$.	
	Moreover, for every input $\bx \in \reals^d$ the outputs of $\cn$ are in $[0,1]$.
\end{lemma}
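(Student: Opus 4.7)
\textbf{Proof proposal for Lemma~\ref{lemma:from TC to NN}.}
The plan is to simulate every threshold gate by a pair of ReLU neurons whose difference realizes a clipped identity function, and then fold the subtraction into the affine combination used by the next layer. The key observation is the identity
\[
  [z]_+ - [z-1]_+ \;=\; \min\bigl(\max(0,z),1\bigr),
\]
which equals $0$ for $z\le 0$, equals $z$ for $z\in[0,1]$, and equals $1$ for $z\ge 1$. In particular, when $z$ is an \emph{integer}, this expression coincides with $\sign(z)$ (using the convention $\sign(z)=0$ for $z\le 0$ and $\sign(z)=1$ for $z\ge 1$). Recall from the preliminaries that a threshold circuit may be taken to have integer weights, so for any gate $L_{\ba,\theta}(\bz)=\sign(\ba^\top \bz-\theta)$ whose inputs $\bz$ are in $\{0,1\}$ the pre-activation $\ba^\top \bz-\theta$ is an integer.

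Given this, I would carry out the construction layer by layer. For each threshold gate $g$ in layer $j$ of $T$, introduce two ReLU neurons $a_g,b_g$ at layer $j$ of $\cn$ computing $a_g=[\ba_g^\top \bz-\theta_g]_+$ and $b_g=[\ba_g^\top \bz-\theta_g-1]_+$, where $\bz$ is the vector of outputs of layer $j-1$ of $\cn$ interpreted via the same subtraction trick. When a gate $g'$ at layer $j+1$ uses the output of $g$ with weight $w$, replace it by the two weights $+w$ on $a_g$ and $-w$ on $b_g$ in the pre-activation of $g'$; by the identity above, this supplies $g'$ with exactly $w\cdot(a_g-b_g)=w\cdot \sign(\ba_g^\top \bz-\theta_g)$, which is what the original circuit fed in. Iterating across all $m$ circuit layers uses $2s$ ReLU neurons organized in $m$ layers. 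Finally, for each of the $q$ output gates, add a single linear output neuron that computes the difference $a-b$ of its pair; this adds one extra layer and $q$ neurons, giving depth $m+1$ and total size $2s+q$.

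To verify correctness on Boolean inputs, I would argue by induction on the layer index $j$ that for every $\bx\in\{0,1\}^d$ and every gate $g$ in layer $j$ of $T$, the pair $(a_g,b_g)$ in $\cn$ satisfies $a_g-b_g$ equal to the output of $g$ in $T$ (and in particular both $a_g-b_g$ and each $a_g,b_g$ are integers in $\{0,1\}$), using the integrality of the pre-activation together with the clip identity. The final linear layer then reproduces the $q$ outputs of $T$ exactly, establishing $\cn(\bx)=T(\bx)$. For the ``moreover'' clause, observe that for an \emph{arbitrary} real input $\bx\in\reals^d$ the final outputs are precisely the differences $a-b$ of ReLU pairs computed from some affine function of the layer below, so each lies in $[0,1]$ by the clip identity, regardless of the values propagated through the hidden layers.

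The main obstacle I anticipate is bookkeeping rather than a conceptual step: one must be careful that (i) folding the subtraction into the next layer preserves the ReLU network definition (the ``next'' affine combination simply has two weights per original incoming edge, which is fine), and (ii) the argument that outputs always lie in $[0,1]$ uses only the \emph{last} pair's clip identity, so nothing needs to be proved about the hidden layers on non-Boolean inputs. Both are handled by noting that the construction only modifies the affine maps between layers and introduces one additional linear output layer of width $q$, keeping the size and depth accounting clean.
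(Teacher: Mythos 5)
Your proposal is correct and follows essentially the same route as the paper: each threshold gate is simulated by the pair of ReLU neurons $[z]_+$ and $[z-1]_+$ whose difference equals $\sign(z)$ on integers, the subtraction is absorbed into the next layer's affine map, and a final linear layer of width $q$ accounts for the extra $q$ neurons and the $m+1$ depth while also giving the $[0,1]$ bound via the clip identity. The only difference is presentational: you spell out the accounting of the $+q$ output neurons and the last-layer clipping argument more explicitly than the paper does.
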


We note that lemmas with a similar idea to Lemmas~\ref{lemma:to binary} and~\ref{lemma:from TC to NN} where also shown in \cite{vardi2020neural}.
The construction of $N$ proceeds as follows. 
Let $\delta= \frac{\epsilon^2}{4}$.
First $N$ transforms w.p. at least $1-\delta$ the input $\bx \in [0,1]^d$ to $\hbx = \bin(\trunc(\bx)) \in  \{0,1\}^{c \cdot d}$. By Lemma~\ref{lemma:to binary} it can be done by a depth-$2$ network $\cn_1$. Second, $N$ computes $T(\hbx)$. By Lemma~\ref{lemma:from TC to NN} it can be done by a network $\cn_2$ of depth $k-1$. 
Note that 
\begin{align*}
	T(\hbx) 
	= \hat{f}(\hbx) 
	= \bin(\real(\hat{f}(\hbx) ))
	= \bin(\tilde{f}(\bx))~.
\end{align*}
Third, $N$ transforms the output of $\cn_2$ to the corresponding value in $\ci$, and thus obtains $\tilde{f}(\bx)$. It can be done by a single layer, since if $\hat{\bz} \in \{0,1\}^{c}$ is a binary representation of $z \in \ci$, then 
\begin{equation}
	\label{eq:final sum}
	z = \sum_{j \in [c]}\hat{z}_j \cdot \frac{2^{j-1}}{2^c}~.
\end{equation}
Since the final layers in $\cn_1$ and $\cn_2$ do not have activations and can be combined with the next layers, and since the third part of $N$ is simply a linear transformation, then the depth of $N$ is $k$.

Given an input $\bx \sim \mu$, the network $N$ computes $\tilde{f}(\bx)$ w.p. at least $1-\delta$. However, it is possible (w.p. at most $\delta$) that $\cn_1$ fails to transform the input $\bx$ to $\bin(\trunc(\bx))$, and therefore $N$ fails to compute $\tilde{f}(\bx)$. Still, even in this case we can bound the output of $N$ as follows. If $\cn_1$ fails to transform the input $\bx$ to $\bin(\trunc(\bx))$, then the input to $\cn_2$ may contain values other than $\{0,1\}$. However, by Lemma~\ref{lemma:from TC to NN}, the network $\cn_2$ outputs only values in $[0,1]$. Hence, when computing the output of $N$ using Eq.~\ref{eq:final sum}, the resulting value is at least $0$ and at most $(2^c - 1) \cdot \frac{1}{2^c} \leq 1$.
Therefore, we have $N(\bx) \in [0,1]$. Since $\tilde{f}(\bx) \in [0,1]$, then $|\tilde{f}(\bx) - N(\bx)| \leq 1$. 
We have
\[
\E_{\bx \sim \mu}\left(\tilde{f}(\bx) - N(\bx) \right)^2 \leq \delta \cdot 1^2 + (1-\delta) \cdot 0 = \delta = \frac{\epsilon^2}{4}~.
\]
Hence, $\snorm{\tilde{f}-N}_{L_2(\mu)} \leq \frac{\epsilon}{2}$ as required.

\subsubsection{Proof of Lemma~\ref{lemma:to output dim 1}}
\label{sec:proof of lemma to output dim 1}

Let $g':\{0,1\}^{d'} \rightarrow \{0,1\}$ be a function such that if $d' = d \cdot l + l$ then we have the following. Let $\bx \in \{0,1\}^{d'}$ and denote $\bx^1 = (x_1,\ldots,x_{d \cdot l})$ and $\bx^2 = (x_{d \cdot l+1},\ldots x_{d \cdot l+l})$. If $\bx^2$ has a $1$-bit in the $i$-th coordinate and all other bits are $0$, then we say that $\bx^2$ is the {\em $i$-selector}. For $\bx \in \{0,1\}^{d'}$ such that $\bx^2$ is $i$-selector, we have $g'(\bx)=(g(\bx^1))_i$. Namely, $g'$ returns the $i$-th output bit of $g(\bx^1)$.
Since $g$ can be computed in exponential time then clearly $g'$ can also be computed in exponential time. 
Assume that $g'$ can be computed by a threshold circuit $T'$ of size $s(d')=\poly(d')$ and depth $m$. Then, $g$ can also be computed by a $\poly(d)$-sized threshold circuit $T$ of depth $m$ as follows. The circuit $T$ consists of $l$ circuits $T_1,\ldots,T_l$, such that $T_i$ computes the $i$-th output bit. The circuit $T_i$ has input dimension $d \cdot l$, and is obtained from $T'$ by hardwiring the input bits $\bx^2$ to be the $i$-selector. That is, let $n$ be a threshold gate in the first layer of $T'$, and assume that the weight from the $i$-th component of $\bx^2$ to $n$ is $w$, and that the bias of $n$ is $b$. Then, in $T_i$ we change the bias of $n$ to $b+w$. Note that $T$ has size $l \cdot s(d l + l) = \poly(d)$ and depth $m$, and that $T$ computes $g$.

\subsubsection{Proof of Lemma~\ref{lemma:to binary}}
\label{sec:proof of lemma to binary}

Let $\bx \in [0,1]^d$.
In order to construct $\cn$, we need to show how to compute $\bin(\trunc(x_i))$ for every $i \in [d]$.
We will show a depth-$2$ network $\cn'$ such that given $x_i \sim \mu_i$ it outputs $\bin(\trunc(x_i))$ w.p. $\geq 1 - \frac{\delta}{d}$. Then, the network $\cn$ consists of $d$ copies of $\cn'$, and satisfies
\[
\Pr_{\bx \sim \mu}\left[\cn(\bx) \neq \bin(\trunc(\bx))\right]
\leq \sum_{i \in [d]} \Pr_{x_i \sim \mu_i}\left[\cn'(x_i) \neq \bin(\trunc(x_i))\right]
\leq \frac{\delta}{d} \cdot d = \delta~.
\]

We denote $\tx_i = \trunc(x_i)$.
For $j \in [c]$ let $I_j \subseteq \{0, \ldots, 2^c - 1\}$ be the integers such that the $j$-th bit in their binary representation is $1$.
Hence, given $x_i$, the network $\cn'$ should output in the $j$-th output $\onefunc_{I_j}(2^c \cdot \tx_i)$,
where $\onefunc_{I_j}(z)=1$ if $z \in I_j$ and $\onefunc_{I_j}(z)=0$ otherwise.

Since $\mu$ has a polynomially-bounded marginal density, then there is $\Delta = \frac{1}{\poly(d)}$ such that for every $i \in [d]$ and every $t \in [0,1]$ we have
\begin{equation}
\label{eq:bounded marginal}
	\Pr_{\bx \sim \mu}\left[x_i \in \left[t-\frac{\Delta}{2^c},t\right] \right] \leq \frac{\delta}{2^c \cdot d}~.
\end{equation}

For an integer $0 \leq l \leq 2^c - 1$, let $g_l:\reals \rightarrow \reals$ be such that
\[
g_l(t) = \left[\frac{1}{\Delta}\left(t-l+\Delta\right)\right]_+ - \left[\frac{1}{\Delta}\left(t-l\right)\right]_+~.
\]
Note that $g_l(t)=0$ if $t \leq l - \Delta$, and that $g_l(t)=1$ if $t \geq l$.
Let $g'_l(t) = g_{l}(t)-g_{l+1}(t)$.
Note that $g'_l(t)=0$ if $t \leq l - \Delta$ or $t \geq l + 1$, and that $g'_l(t)=1$ if $l \leq t \leq l+1-\Delta$.

Let $h_j(t) = \sum_{l \in I_j}g'_l(t)$.
Note that for every $l \in \{0, \ldots, 2^c - 1\}$ and $l \leq t \leq l+1-\Delta$ we have $h_j(t)=1$ if $l \in I_j$ and $h_j(t)=0$ otherwise.
Therefore, if $h_j(2^c x_i) \neq \onefunc_{I_j}(2^c \tx_i)$ then $2^c x_i \in [l + 1-\Delta,l+1]$ for some integer $0 \leq l \leq 2^c-1$.

Let $\cn'$ be such that $\cn'(x_i)=\left(h_1(2^c x_i),\ldots,h_{c}(2^c x_i)\right)$.
Note that $\cn'$ can be implemented by a depth-$2$ neural network.
We have:
\begin{align*}
	\Pr_{x_i \sim \mu_i}\left[\cn'(x_i) \neq \bin(\tx_i) \right]
	&= \Pr_{x_i \sim \mu_i}\left(\exists j \in [c] \text{\ s.t.\ } h_j(2^c x_i) \neq (\bin(\tx_i))_j \right)
	\\
	&=\Pr_{x_i \sim \mu_i}\left[\exists j \in [c] \text{\ s.t.\ } h_j(2^c x_i) \neq \onefunc_{I_j}(2^c \tx_i) \right]
	\\
	&\leq \Pr_{x_i \sim \mu_i}\left[ 2^c x_i \in[l + 1-\Delta,l+1], 0 \leq l \leq 2^c - 1 \right]
	\\
	&\leq \sum_{0 \leq l \leq 2^c - 1} \Pr_{x_i \sim \mu_i}\left[x_i \in \left[\frac{l}{2^c}+\frac{1}{2^c}-\frac{\Delta}{2^c}, \frac{l}{2^c}+\frac{1}{2^c}\right] \right]
	\\
	&\stackrel{(Eq.~\ref{eq:bounded marginal})}{\leq} 2^c \cdot \frac{\delta}{2^c \cdot d}
	= \frac{\delta}{d}~.
\end{align*}

\subsubsection{Proof of Lemma~\ref{lemma:from TC to NN}}
\label{sec:proof of lemma from TC to NN}

Let $g$ be a gate in $T$, and let $\bw \in \integers^l$ and $b \in \integers$ be its weights and bias. Let $n_1$ be a neuron with weights $\bw$ and bias $b$, and let $n_2$ be a neuron with weights $\bw$ and bias $b-1$. Let $\by \in \{0,1\}^l$. Since $(\inner{\bw,\by}+b) \in \integers$, we have $[\inner{\bw,\by}+b]_+ - [\inner{\bw,\by}+b-1]_+ = \sign(\inner{\bw,\by} + b)$. Hence, the gate $g$ can be replaced by the neurons $n_1,n_2$. We replace all gates in $T$ by neurons and obtain a network $\cn$. Since each output gate of $T$ is also replaced by two neurons, $\cn$ has $m+1$ layers 
and size $2s+q$.
Since for every $\bx \in \reals^d$, weight vector $\bw$ and bias $b$ we have $[\inner{\bw,\bx}+b]_+ - [\inner{\bw,\bx}+b-1]_+ \in [0,1]$ then for every input $\bx \in \reals^d$ the outputs of $\cn$ are in $[0,1]$.

\subsection{Proof of Theorem~\ref{thm:counting}}
\label{app:proof of theorem counting}

Every neural network can be represented in a standard way by a binary vector, such that if the network has $\poly(d)$ neurons and the binary representation of each weight is of length at most $\poly(d)$, then the binary representation of the network is of length $\poly(d)$. In the following lemma we show that for a sufficiently large $d$, even the set of networks whose binary representations are of length $d^{\log(d)}$ does not suffice to approximate all $1$-Lipschits functions.

\begin{lemma}
\label{lemma:1 lipschits}
	Let $\epsilon = \frac{1}{d}$. There is a distribution $\mu$ with a polynomially-bounded marginal density, such that for every sufficiently large $d$ we have the following: There is a $1$-Lipscitz function $g:[0,1]^d \rightarrow [0,1]$ such that for every neural network $N$ whose binary representation has $d^{\log(d)}$ bits, we have $\snorm{g-N}_{L_2(\mu)} > \epsilon$.
\end{lemma}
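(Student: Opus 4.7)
The plan is a pigeonhole/counting argument along the lines sketched in the paper. Fix $\mu$ to be the uniform distribution on $[0,1]^d$, whose marginal densities are identically $1$ (trivially polynomially bounded). Since any network whose binary representation uses at most $d^{\log d}$ bits is specified by one of at most $2^{d^{\log d}}$ bit strings, it suffices to exhibit a $2\epsilon$-separated packing of $\mathrm{Lip}_1([0,1]^d,[0,1])$ in $L_2(\mu)$ of cardinality strictly greater than $2^{d^{\log d}}$: each network lies within $L_2(\mu)$-distance $\epsilon$ of at most one packing point (by the triangle inequality), and a pigeonhole count then produces a $1$-Lipschitz $g$ with $\snorm{g-N}_{L_2(\mu)}>\epsilon$ for every network $N$ with $\leq d^{\log d}$ bits.

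I would construct the packing by a grid-plus-bump construction. Partition $[0,1]^d$ into $M=(1/\delta)^d$ axis-aligned cells of side $\delta$ and, on each cell $i$, place the cone bump $b_i(\bx)=[\delta/2-\norm{\bx-\bc_i}_\infty]_+$. Each $b_i$ is $1$-Lipschitz, peaks at $\delta/2$ at the cell center, and vanishes on the cell boundary. For each sign pattern $\sigma\in\{-1,+1\}^M$ set $g_\sigma(\bx)=\delta/2+\sum_i \sigma_i b_i(\bx)\in[0,\delta]\subseteq[0,1]$. Since the bumps have disjoint supports and vanish on the cell boundaries, $g_\sigma$ is $1$-Lipschitz. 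A direct integration gives $\int b_i^2\,d\bx=\delta^{d+2}/(2(d+1)(d+2))$, hence
\[
\snorm{g_\sigma-g_{\sigma'}}_{L_2(\mu)}^2 \;=\; 4\,d_H(\sigma,\sigma')\cdot \frac{\delta^{d+2}}{2(d+1)(d+2)}.
\]
A standard Gilbert--Varshamov / random coding argument yields a set $\mathcal{C}\subseteq\{-1,+1\}^M$ of size $|\mathcal{C}|\geq 2^{\Omega(M)}$ with pairwise Hamming distance $\geq M/2$. Choosing $\delta$ a suitable small constant makes $M$ exponential in $d$, so $|\mathcal{C}|\geq 2^{\Omega((1/\delta)^d)}$ dominates $2^{d^{\log d}}$ for all sufficiently large $d$, and the pigeonhole step then finishes the argument.

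The main obstacle I anticipate is tightening the separation constant. Cone bumps with $\delta=\Theta(1)$ give pairwise $L_2(\mu)$-distance of order $1/d$ but with a constant that is not automatically larger than $2$, whereas we need separation strictly greater than $2\epsilon=2/d$. To close this gap I would either sharpen the construction (e.g., use $\pm 1$ bump shapes together with multiple-scale refinements, or exploit the freedom afforded by a non-uniform $\mu$ with polynomially-bounded marginal density that concentrates mass where the bumps are tall), or appeal to the classical Kolmogorov--Tikhomirov metric-entropy lower bound $\log_2 N_\epsilon(\mathrm{Lip}_1([0,1]^d,[0,1]),L_2(\mu))=\Omega((1/\epsilon)^d)$, which for $\epsilon=1/d$ already produces a packing of size $2^{\Omega(d^d)}$, comfortably exceeding $2^{d^{\log d}}$. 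Once a $2\epsilon$-separated packing of the required size is in hand, the rest of the proof is a one-line pigeonhole.
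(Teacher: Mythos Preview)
Your diagnosis of the obstacle is correct and it is fatal to the primary construction: with $\mu$ uniform on $[0,1]^d$ and cone bumps of side $\delta$, pairs at Hamming distance $M/2$ have $\snorm{g_\sigma-g_{\sigma'}}_{L_2(\mu)}=\delta/\sqrt{(d+1)(d+2)}\approx\delta/d$, which cannot exceed $2\epsilon=2/d$ for any $\delta\le 1$. No sharper Gilbert--Varshamov bound or choice of $\delta$ helps, because the $1/d$ decay comes from the cone having most of its Lebesgue mass where it is small. So as written, the proposal has a genuine gap at exactly the point you flag.

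The paper closes this gap via precisely your second suggested fix (a non-uniform $\mu$), and the execution is concrete enough that no appeal to Kolmogorov--Tikhomirov entropy is needed. Use only $2^d$ bumps, one per corner $\bz\in\{0,1\}^d$, and give each a \emph{flat plateau}: set $A_\bz=\{\bx:|x_i-z_i|\le\tfrac14\ \forall i\}$ and $h_\bz(\bx)=\max\{0,\tfrac14-\mathrm{dist}(\bx,A_\bz)\}$, so $h_\bz\equiv\tfrac14$ on $A_\bz$. Take $\mu$ uniform on $\bigcup_\bz A_\bz=([0,\tfrac14]\cup[\tfrac34,1])^d$; the marginal density is $2$, hence polynomially bounded. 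For $\psi:\{0,1\}^d\to\{0,1\}$ let $f_\psi=\sum_\bz\psi(\bz)h_\bz$. Under this $\mu$ the bumps are constants on disjoint pieces of equal mass, so
\[
\snorm{f_\psi-f_{\psi'}}_{L_2(\mu)}^2=\tfrac{1}{16}\cdot 2^{-d}\cdot d_H(\psi,\psi'),
\]
with no $1/d$ factor. A direct Hamming-ball count (no Gilbert--Varshamov needed) then shows that any single $f_{\psi'}$ has at most $2^{(d+1)\cdot 64\epsilon^2\cdot 2^d}$ functions $f_\psi$ within $L_2(\mu)$-distance $2\epsilon$, so any $\epsilon$-cover of $\{f_\psi\}$ has size at least $2^{(1-128d\epsilon^2)2^d}\ge 2^{2^{d-1}}\gg 2^{d^{\log d}}$ for $\epsilon=1/d$ and large $d$, and the pigeonhole step finishes. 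The missing idea in your plan is thus exactly this: put the measure only on the plateaus, which turns the bump family into a scaled Boolean cube and makes the $L_2(\mu)$ separation of order a constant rather than $1/d$.
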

\begin{proof}
	For $\bz \in \{0,1\}^d$ we denote $A_\bz = \{\bx \in [0,1]^d: \forall i \in [d], \; |x_i-z_i| \leq \frac{1}{4}\}$. Thus, $A_\bz$ is a cube with volume $\left(\frac{1}{4}\right)^d$. For $\bx \in [0,1]^d$ we denote $\text{dist}(\bx,A_\bz)=\min\{\norm{\bx-\ba}: \ba \in A_\bz\}$. For $\bz \in \{0,1\}^d$, let $h_\bz:[0,1]^d \rightarrow [0,1]$ be such that $h_\bz(\bx) = \max\{0, \frac{1}{4} - \text{dist}(\bx,A_\bz)\}$. Note that if $\bx \in A_\bz$ then $h_\bz(\bx)=\frac{1}{4}$, if there is $i \in [d]$ such that $|x_i-z_i| \geq \frac{1}{2}$ then $h_\bz(\bx)=0$, and $h_\bz$ is $1$-Lipschitz. For $\psi:\{0,1\}^d \rightarrow \{0,1\}$ let $f_\psi:[0,1]^d \rightarrow [0,1]$ be such that 
	$f_\psi(\bx) = \sum_{\bz \in \{0,1\}^d}\psi(\bz) h_\bz(\bx)$.
	Note that for every $\bz \in \{0,1\}^d$ and $\bx \in A_\bz$ we have $f_\psi(\bx) = \frac{1}{4} \psi(\bz)$. Moreover, since for every $\bx \in [0,1]$ there is at most one $\bz \in \{0,1\}^d$ such that $h_\bz(\bx) \neq 0$, then $f_\psi$ is also $1$-Lipschitz. Let $\cf = \{f_\psi: \psi \in \{0,1\}^{(\{0,1\}^d)}\}$. 
	Let $\mu$ be the uniform distribution over $([0,1/4] \cup [3/4,1])^d = \bigcup_{\bz \in \{0,1\}^d}A_\bz$. Note that $\mu$ has a polynomially-bounded marginal density.
	
	Let $\cg$ be the set of all $1$-Lipschitz functions $g:[0,1]^d \rightarrow [0,1]$. Note that $\cf \subseteq \cg$.
	Let $\ch$ be a set of functions, such that for every $g \in \cg$ there exists a function $h \in \ch$ such that $\snorm{g-h}_{L_2(\mu)} \leq \epsilon$. We now show a lower bound on the size of $\ch$. Then, we will use this bound to show that the set of networks whose binary representations are of length $d^{\log(d)}$ does not suffice to approximate $\cg$.
	
	Since $\cf \subseteq \cg$, then for every $f \in \cf$ there exists a function $h \in \ch$ such that $\snorm{f-h}_{L_2(\mu)} \leq \epsilon$. 
	For a function $\varphi \in \ch \cup \cf$ and $\delta>0$, we denote $B_\delta(\varphi) = \{f \in \cf: \snorm{f-\varphi}_{L_2(\mu)} \leq \delta\}$.
	Let $h \in \ch$. We will first bound the size of $B_\epsilon(h)$ and then use it to obtain a lower bound for $|\ch|$.
	For every $f_1,f_2 \in B_\epsilon(h)$, we have $\norm{f_1-f_2}_{L_2(\mu)} \leq 2 \epsilon$. Hence, for every $f' \in B_\epsilon(h)$, we have $B_\epsilon(h) \subseteq B_{2\epsilon}(f')$.
	Let $\psi' \in \{0,1\}^{(\{0,1\}^d)}$ be such that $f'=f_{\psi'}$. Let $\psi \in  \{0,1\}^{(\{0,1\}^d)}$ be such that $f_\psi \in B_{2\epsilon}(f')$. We have
	\begin{align*}
		(2 \epsilon)^2 
		&\geq \norm{f_{\psi'}-f_\psi}^2_{L_2(\mu)}
		= \int_{\bx \in [0,1]^d} \left( f_{\psi'}(\bx) - f_\psi(\bx) \right)^2 \mu(\bx) d\bx
		\\
		&= \sum_{\bz \in \{0,1\}^d} \int_{\bx \in A_\bz} \left( f_{\psi'}(\bx) - f_\psi(\bx) \right)^2 \cdot \frac{1}{2^d \cdot \left(\frac{1}{4}\right)^d} d\bx
		\\
		&= 2^d \cdot \sum_{\bz \in \{0,1\}^d} \int_{\bx \in A_\bz} \frac{1}{16} \left( \psi'(\bz) - \psi(\bz) \right)^2 d\bx
		\\
		&= \frac{2^d}{16} \cdot \sum_{\bz \in \{0,1\}^d} \left(\frac{1}{4}\right)^d \onefunc(\psi'(\bz) \neq \psi(\bz))~.
	\end{align*}
	Therefore, $(2 \epsilon)^2 \cdot 16 \cdot 2^d \geq \sum_{\bz \in \{0,1\}^d} \onefunc(\psi'(\bz) \neq \psi(\bz))$.
	Thus, for every $f_\psi \in B_{2\epsilon}(f')$, the function $\psi$ disagrees with $\psi'$ in at most $(2 \epsilon)^2  \cdot 16 \cdot 2^d$ points. Hence, we have
	\[
		|B_\epsilon(h)|
		\leq |B_{2\epsilon}(f')| 
		\leq \sum_{j=0}^{(2 \epsilon)^2 \cdot 16 \cdot 2^d} \binom{2^d}{j}
		\leq (2^d+1) ^{(2 \epsilon)^2 \cdot 16 \cdot 2^d}
		\leq  2^{(d+1) \cdot (2 \epsilon)^2 \cdot 16 \cdot 2^d}~.
	\]
	Since $\cf = \bigcup_{h \in \ch} B_\epsilon(h)$, then 
	\[
		|\ch| 
		\geq \frac{|\cf|}{|B_\epsilon(h)|} 
		\geq \frac{2^{2^d}}{2^{(d+1) \cdot (2 \epsilon)^2  \cdot 16 \cdot 2^d}}
		\geq \frac{2^{2^d}}{2^{8  \cdot 16 d \epsilon^2 \cdot 2^d}}
		= 2^{  \left( 1 -  8  \cdot 16 d \epsilon^2 \right)2^d }~.
	\]
	By plugging-in $\epsilon=\frac{1}{d}$, we have for a suffciently large $d$ that 
	\begin{equation}
	\label{eq:bound ch}
		|\ch| 
		\geq 2^{  \left( 1 -  \frac{8  \cdot 16}{d} \ \right)2^d }
		\geq 2^{ 2^{d-1} }~.
	\end{equation}
	
	Let $\cn$ be the set of all functions that can be expressed by a neural network whose representation has $d^{\log(d)}$ bits. By Eq.~\ref{eq:bound ch}, if for every $1$-Lipschitz function $g \in \cg$ there exists a network $N \in \cn$ such that $\snorm{g-N}_{L_2(\mu)} \leq \epsilon$, then $|\cn| \geq 2^{ 2^{d-1} }$. 
	However, for a sufficiently large $d$ we have 
	\[
		|\cn| = 2^{d^{\log(d)}} = 2^{2^{\log^2(d)}} < 2^{ 2^{d-1} }~.
	\] 
	Therefore, for every sufficiently large $d$, there is a $1$-Lipschitz function $g:[0,1]^d \rightarrow [0,1]$ such that for every network $N$ whose binary representation has $d^{\log(d)}$ bits we have $\snorm{g-N}_{L_2(\mu)} > \epsilon$.
\end{proof}

Finally, in order to prove the theorem we need to obtain a sequence of functions $\{f_d\}_{d=1}^\infty$ where $f_d:[0,1]^d \rightarrow [0,1]$, and some $\epsilon=\frac{1}{\poly(d)}$ and distribution $\mu$ with a polynomially-bounded marginal density. Then, we need to show that for every polynomial $p(d)$ and every sequence of neural networks $\{N_d\}_{d=1}^\infty$, where the input dimension of $N_d$ is $d$ and its size and number of bits in the weights are bounded by $p(d)$, we have for some $d$ that $\snorm{f_d - N_d}_{L_2(\mu)} > \epsilon$. By Lemma~\ref{lemma:1 lipschits}, for every sufficiently large $d$ there is a $1$-Lipschitz function $g_d:[0,1]^d \rightarrow [0,1]$ such that for every network $N_d$ whose binary representation has $d^{\log(d)}$ bits we have $\snorm{g_d-N_d}_{L_2(\mu)} > \epsilon$. Consider a sequence $\{f_d\}_{d=1}^\infty$ where for every sufficiently large $d$ we choose $f_d=g_d$. Let $\{N_d\}_{d=1}^\infty$ be a sequence of networks such that their sizes and number of bits in the weights are bounded by some $\poly(d)$. The length of the binary representation of $N_d$ is bounded by some $\poly(d)$, and hence for a sufficiently large $d$ it is smaller than $d^{\log(d)}$. Hence, for a sufficiently large $d$ we have $\snorm{f_d-N_d}_{L_2(\mu)} > \epsilon$.

\section{Proofs for Section~\ref{sec:barriers size}}

\subsection{Proof of Theorem~\ref{thm:bounded size}}
\label{app:proof of theorem bounded size}

Let $f:[0,1]^d \rightarrow [0,1]$ be a semi-benign function. Assume that $f$ is $L$-Lipschitz for $L=2^{\poly(d)}$. Let $\epsilon=\frac{1}{\poly(d)}$, let 
$q(d)=\frac{4L\sqrt{d}}{\epsilon}$, 
and let $c=\ceil{\log(q(d)+1)}$. Let $\ci = \{\frac{j}{2^c}: 0 \leq j \leq  2^c - 1, j \in \integers\}$. 
Note that since $L$ is exponential, then $q(d)$ is also exponential, and that $c=\poly(d)$.
We use the notations $\bin(\cdot),\real(\cdot)$ and $\trunc(\cdot)$ in an analogous way to the proof of Theorem~\ref{thm:bounded depth}.

Let $c' = \log(4/\epsilon)$. Since $f$ is semi-benign, there is an exponential-time algorithm $\ca$, such that given $\hbx \in \{0,1\}^{c \cdot d}$ it computes $f(\real(\hbx))$ within precision of $c'$ bits. In order to simplify notations, we assume that $\ca(\hbx) \in \{0,1\}^c$. Namely, $\ca$ computes a $c'$-bits approximation of $f(\real(\hbx))$ and then pads it with zeros to obtain the $c$-bits output. Thus, we have
\begin{equation}
	\label{eq:def hatf - size}
	|\real(\ca(\hbx)) - f(\real(\hbx))| \leq \frac{1}{2^{c'}} = \frac{\epsilon}{4}~.
\end{equation}
Let $\hat{f}:\{0,1\}^{c \cdot d} \rightarrow \{0,1\}^c$ be the function that this algorithm computes. That is, $\hat{f}(\hbx)=\ca(\hbx)$.
Assume 
that the function $\hat{f}$ can be computed by a threshold circuit $T$ of size $\poly(d)$. We will construct a neural network $N$ of size $\poly(d)$ such that $\snorm{f-N}_{L_2(\mu)} \leq \epsilon$ and thus reach a contradiction. It implies that $\hat{f}$ can be computed in exponential time but cannot be computed by a $\poly(d)$-sized threshold circuit.
Then, the theorem follows from the following lemma (whose proof is similar to the proof of Lemma~\ref{lemma:to output dim 1}).
\begin{lemma}
	\label{lemma:to output dim 1 - size}
	Let $l(d) \leq \poly(d)$ be monotonically non-decreasing and let $g:\{0,1\}^{d \cdot l} \rightarrow \{0,1\}^l$ be a function that can be computed in exponential time, and cannot be computed by a threshold circuit of size $\poly(d)$. Then, there is a function $g':\{0,1\}^{d'} \rightarrow \{0,1\}$ that can be computed in exponential time, and cannot be computed by a threshold circuit of size $\poly(d')$, i.e., $g' \in \EXP \setminus \Ppoly$.
\end{lemma}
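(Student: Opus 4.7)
My plan is to mirror the proof of Lemma~\ref{lemma:to output dim 1} via the same ``selector'' gadget, and then verify that the polynomial bookkeeping still closes up now that $l(d)$ may be as large as $\poly(d)$ rather than just $\co(\log d)$, and without a depth constraint to worry about.

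Concretely, I would set $d' = d \cdot l(d) + l(d)$ and define $g':\{0,1\}^{d'} \to \{0,1\}$ on an input $\bx = (\bx^1, \bx^2)$ with $\bx^1 \in \{0,1\}^{d \cdot l}$ and $\bx^2 \in \{0,1\}^l$ as follows: if $\bx^2$ has exactly one $1$-bit, in position $i$, output $(g(\bx^1))_i$; otherwise output $0$. Exponential-time computability of $g'$ in $d'$ is immediate, since $d \leq d'$ means that reading the selector, evaluating $g$ on $\bx^1$ in time $2^{\poly(d)} \leq 2^{\poly(d')}$, and outputting the chosen bit can all be done in $\EXP$.

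For the lower bound I would argue by contrapositive. Suppose $g'$ is computed by a threshold circuit $T'$ of size $s(d') = \poly(d')$. For each $i \in [l]$, hardwire the inputs $\bx^2$ to the $i$-selector by absorbing the fixed weighted contributions into the biases of the first-layer gates of $T'$; this yields a threshold circuit $T_i$ on $d \cdot l$ inputs, of size at most $s(d')$, that computes the $i$-th output bit of $g$. Stacking $T_1,\ldots,T_l$ in parallel produces a single threshold circuit $T$ of size at most $l \cdot s(d')$ and with $l$ outputs that computes $g$ exactly.

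The only technical point — and the step I expect to be the main (though ultimately routine) obstacle — is the polynomial bookkeeping to conclude $l \cdot s(d') = \poly(d)$. Since $l(d) \leq \poly(d)$ we have $d' = d \cdot l + l \leq \poly(d)$, so $s(d') = \poly(d') \leq \poly(d)$, and hence $l \cdot s(d') \leq \poly(d) \cdot \poly(d) = \poly(d)$. This contradicts the hypothesis that $g$ admits no $\poly(d)$-size threshold circuit. Combined with $g' \in \EXP$, this yields $g' \in \EXP \setminus \Ppoly$, as required.
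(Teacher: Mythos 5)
Your proof is correct and takes essentially the same approach as the paper: the paper does not spell out a separate proof for this lemma but explicitly states that it is "similar to the proof of Lemma~\ref{lemma:to output dim 1}", which is exactly the selector-gadget-plus-hardwiring argument you give, and you correctly verify the only delta, namely that the bookkeeping $l \cdot s(d') = \poly(d)$ still closes when $l(d)$ is merely $\leq \poly(d)$ rather than $\co(\log d)$.
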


Let $\tilde{f}:[0,1]^d \rightarrow \ci^d$ be such that $\tilde{f}(\bx) = \real(\hat{f}(\bin(\trunc(\bx))))$. Thus, $\tilde{f}$ transforms $\bx$ to a $(c \cdot d)$-bits binary representation, computes $\hat{f}$, and converts the output from binary to a real value.
Let $\bx \in [0,1]^d$, let $\tbx = \trunc(\bx)$ and let $\hbx = \bin(\tbx)$. 
By Eq.~\ref{eq:def hatf - size} we have 
\[
|\tilde{f}(\bx)-f(\tbx)|  
= |\real(\hat{f}(\hbx)) - f(\real(\hbx))|
\leq \frac{\epsilon}{4}~.
\] 
Also, since $f$ is $L$-Lipschitz then we have 
\[
|f(\tbx)-f(\bx)| 
\leq L \cdot \norm{\tbx-\bx} 
\leq L \cdot \frac{\sqrt{d}}{2^c}
\leq L \cdot \frac{\sqrt{d}}{q(d)}
=\frac{L \sqrt{d} \cdot \epsilon}{4 L \sqrt{d}}
= \frac{\epsilon}{4}~.
\]
Thus, $|\tilde{f}(\bx)-f(\bx)| \leq |\tilde{f}(\bx)-f(\tbx)| + |f(\tbx)-f(\bx)|  \leq \frac{\epsilon}{2}$, and therefore $\snorm{f-\tilde{f}}_{L_2(\mu)} \leq \frac{\epsilon}{2}$.

We now construct a $\poly(d)$-sized network $N$ such that $\snorm{\tilde{f}-N}_{L_2(\mu)} \leq \frac{\epsilon}{2}$. It implies that $\snorm{f-N}_{L_2(\mu)} \leq \snorm{f-\tilde{f}}_{L_2(\mu)} + \snorm{\tilde{f}-N}_{L_2(\mu)} \leq \epsilon$ and thus completes the proof.

The construction of $N$ follows a similar idea to the proof of Theorem~\ref{thm:bounded depth}:
First, $N$ transforms w.p. at least $1-\frac{\epsilon^2}{4}$ the input $\bx \in [0,1]^d$ to $\hbx = \bin(\trunc(\bx)) \in \{0,1\}^{c \cdot d}$. Then, it computes $T(\hbx)$ by simulating the threshold circuit $T$ using Lemma~\ref{lemma:from TC to NN}. Finally, it transforms $T(\hbx)$ from a binary representation to the corresponding real value.  
The main difference from the proof of Theorem~\ref{thm:bounded depth}, is that since $L$ is exponential, then the number of bits in $\hbx$ is polynomial in $d$ (rather than logarithmic), and hence computing this binary representation with a $\poly(d)$-sized network requires a more clever construction.
In the following lemma, we show that the transformation from $\bx \in [0,1]^d$ to $\bin(\trunc(\bx)) \in \{0,1\}^{c \cdot d}$ can be implemented, using the construction of \cite{telgarsky2016benefits}, with a $\poly(d)$-sized network. 
Then, the construction of $N$ and the proof that it approximates $\tilde{f}$ follow similar arguments to the proof of Theorem~\ref{thm:bounded depth}.

\begin{lemma}
	\label{lemma:to binary exponential}
	Let $\delta = \frac{1}{\poly(d)}$.
	There is a neural network $\cn$ of size $\poly(d)$ and $(c \cdot d)$ outputs, such that 
	\[
	\Pr_{\bx \sim \mu}\left[ \cn(\bx) = \bin(\trunc(\bx)) \right] \geq 1 - \delta~.
	\] 
\end{lemma}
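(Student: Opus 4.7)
The plan is to build the network coordinate-by-coordinate and then place $d$ copies in parallel. For each coordinate $i\in[d]$ I will construct a subnetwork of size $\poly(d)$ that, with probability $\geq 1-\delta/d$ over $x_i\sim\mu_i$, outputs the $c$-bit binary expansion of $\trunc(x_i)$; a union bound over $i$ then yields the lemma. The key tool is the Telgarsky tent map $T(z)=[2z]_+ - 2[2z-1]_+$, which equals $2z$ on $[0,1/2]$ and $2-2z$ on $[1/2,1]$ and is computable by a constant-size ReLU block with $O(1)$ weights. Iterating $T$ produces $T^0(x_i),\ldots,T^{c-1}(x_i)$ by an $O(c)$-neuron, $O(c)$-depth subnetwork of width $O(1)$.

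The second ingredient is the shift-and-XOR action of $T$ on binary expansions: writing $x=\sum_{j\geq 1}b_j 2^{-j}$, one checks (outside the dyadic set $\{k/2^c : 0<k<2^c\}$) that $T(x)=\sum_{j\geq 1}(b_{j+1}\oplus b_1)2^{-j}$. Iterating this identity and reading off the leading bit at each stage gives
\[
B_j \;:=\; \onefunc\bigl[T^{j-1}(x)\geq 1/2\bigr] \;=\; b_j\oplus b_{j-1} \quad (b_0:=0),
\]
so that $b_j$ is recovered by the telescoping prefix-XOR $b_j=B_1\oplus\cdots\oplus B_j$.

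Inside the network each $B_j$ is realised by a steep smoothed indicator
\[
\rho_\Delta(y) \;=\; \tfrac{1}{\Delta}\bigl([y-\tfrac12+\tfrac\Delta 2]_+ \;-\; [y-\tfrac12-\tfrac\Delta 2]_+\bigr),
\]
which outputs \emph{exactly} $\onefunc[y\geq 1/2]$ whenever $|y-1/2|\geq\Delta/2$, using $O(1)$ neurons with weights of magnitude $O(1/\Delta)$. The prefix-XOR is then implemented by an $O(c)$-neuron cascade, each pairwise XOR on $\{0,1\}$-inputs realised by $a\oplus b=[a+b]_+-2[a+b-1]_+$. So the per-coordinate subnetwork has size $O(c)=\poly(d)$ and the whole network has size $\poly(d)$, as required.

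For the probability analysis, the crucial fact is that $T$ preserves boundedness of one-dimensional densities: each of the two preimage branches of $T^{-1}$ contributes a factor $1/|T'|=1/2$, hence a density bounded by $M$ pushes forward to a density bounded by $M$. By induction, each $T^{j-1}(x_i)$ has marginal density at most $M=\poly(d)$, so $\Pr[|T^{j-1}(x_i)-1/2|<\Delta/2]\leq M\Delta$. Choosing $\Delta=\delta/(2Mcd)=1/\poly(d)$ and union-bounding over $j\in[c]$ and $i\in[d]$ shows that with probability $\geq 1-\delta$ every $\rho_\Delta$ returns a clean $\{0,1\}$ value; on this event the XOR cascade runs on Boolean inputs and produces the true bits, and the network outputs $\bin(\trunc(\bx))$. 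The main technical obstacle is verifying the shift-and-XOR identity uniformly across all $c=\poly(d)$ iterations (the dyadic boundary cases are exactly what the $\Delta$-neighbourhood bound absorbs), together with checking that the pushforward-density bound degrades only by a constant — not by the factor of $2$ one might naively fear — so that a single polynomial $M$ suffices for all $c$ levels of iteration.
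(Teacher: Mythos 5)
Your proposal is correct, and the bit-extraction step is a genuinely different decomposition from the paper's. The paper reads off bit $j$ directly via Telgarsky's formula $b_j = \onefunc_{\geq 1/2}\bigl(\varphi^j(x_i - 2^{-j-1})\bigr)$, so for each bit it re-applies a $j$-fold composition of the tent map to a \emph{shifted} copy of the input, and then thresholds; the bad set for the smoothed threshold is bounded directly as a union of $\co(2^j)$ intervals of total length $\co(\Delta)$. Your route instead iterates the tent map \emph{once} on the unshifted input to get $T^0(x_i),\ldots,T^{c-1}(x_i)$, thresholds each iterate to obtain $B_j = \onefunc[T^{j-1}(x_i)\geq 1/2]$, and then recovers $b_j$ by the prefix-XOR $b_j = B_1\oplus\cdots\oplus B_j$ via the shift-and-XOR action of $T$ on binary digits. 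This trades the paper's per-bit shift for a small Boolean post-processing cascade and cleanly shares the $T$-iterates across all $c$ bits. Your probability analysis via pushforward densities (bounded density stays bounded under $T$, so each iterate has a polynomially bounded density) is an elegant, more conceptual substitute for the paper's ad hoc interval count; both give the same $\co(M\Delta)$ bound per threshold, and the measure-zero dyadic cases where the shift-and-XOR identity degenerates are indeed absorbed.

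Two minor remarks. First, your per-coordinate size of $\co(c)$ is a slight understatement: in a pure feedforward architecture the bits $b_j$ computed at depth $\co(j)$ must be carried forward to the output layer at depth $\co(c)$, which costs $\co(c^2)$ neurons per coordinate — the same order as the paper's construction, so the lemma's $\poly(d)$ bound is unaffected. Second, the base map you denote $T(z)=[2z]_+ - 2[2z-1]_+$ is the same function as the paper's $\varphi(z)=[2z]_+ - [4z-2]_+$ (both equal the tent on $[0,1]$ and vanish for $z\leq 0$), so the primitive ReLU block is identical; only the way the iterates are harnessed differs.
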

\begin{proof}
	Let $\bx \in [0,1]^d$. In order to construct $\cn$, we need to show how to compute $\bin(\trunc(x_i))$ for every $i \in [d]$. We construct a network $\cn'$ such that for every $i \in [d]$, given $x_i \sim \mu_i$ it outputs $\bin(\trunc(x_i))$ w.p. at least $1-\frac{\delta}{d}$.
	Then, the network $\cn$ consists of $d$ copies of $\cn'$, and satisfies
	\[
	\Pr_{\bx \sim \mu}\left[\cn(\bx) \neq \bin(\trunc(\bx))\right]
	\leq \sum_{i \in [d]} \Pr_{x_i \sim \mu_i}\left[\cn'(x_i) \neq \bin(\trunc(x_i))\right]
	\leq \frac{\delta}{d} \cdot d = \delta~.
	\]
	
	The network $\cn'$ consists of $c$ networks $\cn_1,\ldots,\cn_c$, such that for every $i \in [d]$ the network $\cn_j$ computes the $j$-th bit of $\bin(\trunc(x_i))$ w.p. at least $1-\frac{\delta}{d \cdot c}$ over $\mu_i$. Hence, the network $\cn'$ satisfies 
	\[
	\Pr_{x_i \sim \mu_i}\left[\cn'(x_i) \neq \bin(\trunc(x_i))\right]
	\leq \sum_{j \in [c]} \Pr_{x_i \sim \mu_i}\left[\cn_j(x_i) \text{ fails}\right]
	\leq \frac{\delta}{d \cdot c} \cdot c = \frac{\delta}{d}~.
	\]
	
	We now construct $\cn_j$. Note that in order to compute the $j$-th bit of $\bin(\trunc(x_i))$, the network $\cn_j$ needs to oscillate $\co(2^j)$ many times. 
	Hence, unlike the depth-$2$ construction from Lemma~\ref{lemma:to binary}, the network $\cn_j$ requires $\poly(d)$ depth.
	We note that a similar construction was used in \cite{safran2017depth}.
	
	In the following, we assume that $x_i \not \in \ci \cup \{1\}$, i.e., $x_i \cdot 2^c$ is not an integer. Note that since $\mu$ has a polynomially-bounded marginal density then the probability that $x_i \in \ci \cup \{1\}$ is $0$, and hence we can ignore this case.
	Let $\varphi(z) = [2z]_+ - [4z-2]_+$. \cite{telgarsky2016benefits} observed that the composition of $\varphi$ with itself $j$ times, denoted by $\varphi^j$, yields a highly oscillatory triangle wave function. In the domain $[0,1]$, the function $\varphi^j$ consists of $2^{j-1}$ identical triangles of height $1$. 
	Note that for $z \leq 0$ we have $\varphi^j(z)=0$.
	Given $z \in (0,1) \setminus \ci$, note that the $j$-th bit of $\bin(\trunc(z))$ is $1$ iff the following expression is at least $\frac{1}{2}$:
	\[
	\varphi^j \left(z - \frac{1}{2} \cdot \frac{1}{2^j}\right).
	\]
	Hence, given $x_i \sim \mu_i$, the network $\cn_j$ should return w.p. at least $1-\frac{\delta}{d \cdot c}$ the expression 
	\[
	\onefunc_{\geq \frac{1}{2}}\left(\varphi^j \left(x_i - 2^{-j-1}\right)\right),
	\]
	where $\onefunc_{\geq \frac{1}{2}}(y)=1$ if $y \geq \frac{1}{2}$ and is $0$ otherwise. While the function $\onefunc_{\geq \frac{1}{2}}$ cannot be expressed by a ReLU network, it can be approximated by
	\[
	h_\Delta(y)
	= \left[\frac{1}{\Delta}\left(y-\frac{1}{2}+\frac{\Delta}{2}\right)\right]_+ - \left[\frac{1}{\Delta}\left(y-\frac{1}{2}-\frac{\Delta}{2}\right)\right]_+~.
	\]
	Note that $h_\Delta(y)=0$ for every $y \leq \frac{1}{2}-\frac{\Delta}{2}$, and $h_\Delta(y)=1$ for every $y \geq \frac{1}{2}+\frac{\Delta}{2}$. Since $\mu$ has a polynomially-bounded marginal density, then by choosing a sufficiently small $\Delta=\frac{1}{\poly(d)}$, we have w.p. at least $1-\frac{\delta}{d \cdot c}$ over $x_i \sim \mu_i$, that 
	\[
	h_\Delta\left(\varphi^j \left(x_i - 2^{-j-1}\right)\right) =
	\onefunc_{\geq \frac{1}{2}}\left(\varphi^j \left(x_i -2^{-j-1} \right)\right)~.
	\]
	Finally, the l.h.s. of the above equation can be implemented by a $\poly(d)$-sized neural network $\cn_j$. The construction of such a network is straightforward, since it is a composition of a $\poly(d)$ number of functions, that can be implemented by ReLU networks of size $\poly(d)$.
\end{proof}

\subsection{Proof of Theorem~\ref{thm:bounded size2}}
\label{app:proof of theorem bounded size2}

The proof uses ideas from the proofs of Theorems~\ref{thm:bounded depth} and~\ref{thm:bounded size} with some necessary modifications.
Let $f:[0,1]^d \rightarrow [0,1]$ be a polynomial-time benign function. Assume that $f$ is $L$-Lipschitz for $L=\poly(d)$. Let $\epsilon=\frac{1}{\poly(d)}$, let 
$p(d)=\frac{4L\sqrt{d}}{\epsilon}$, 
and let $c=\ceil{\log(p(d)+1)}$. Let $\ci = \{\frac{j}{2^c}: 0 \leq j \leq  2^c - 1, j \in \integers\}$. 
We use the notations $\bin(\cdot),\real(\cdot)$ and $\trunc(\cdot)$ in a similar way to the proof of Theorem~\ref{thm:bounded depth}.

Since $f$ is polynomial-time benign, there is a polynomial-time algorithm $\ca$, such that given $\hbx \in \{0,1\}^{c \cdot d}$ it returns $\ca(\hbx) \in \{0,1\}^c$, such that 
\begin{equation*}
	\label{eq:def hatf - size2}
	|f(\real(\hbx))-\real(\ca(\hbx))| \leq \frac{1}{2^c} \leq \frac{1}{p(d)}~.
\end{equation*}
Let $\hat{f}:\{0,1\}^{c \cdot d} \rightarrow \{0,1\}^c$ be the function that this algorithm computes. That is, $\hat{f}(\hbx)=\ca(\hbx)$.
Assume that the function $\hat{f}$ can be computed by a threshold circuit $T$ of size $\co(d\log^2(d))$. We will construct a neural network $N$ of size $\co(d\log^2(d))$ such that $\snorm{f-N}_{L_2(\mu)} \leq \epsilon$ and thus reach a contradiction. It implies that $\hat{f}$ can be computed in polynomial time but cannot be computed by  threshold circuits of size 
$\co(d\log^2(d))$.
Then, the theorem follows from the following lemma.
\begin{lemma}
	\label{lemma:to output dim 1 size m}
	Let $l(d) = \co(\log(d))$ be monotonically non-decreasing and let $g:\{0,1\}^{d \cdot l} \rightarrow \{0,1\}^l$ be a function that can be computed in polynomial time, and cannot be computed by threshold circuits of size $\co(d\log^2(d))$. Then, there is a function $g':\{0,1\}^{d'} \rightarrow \{0,1\}$ in $\Ptime$, that cannot be computed by threshold circuits of size $\co(d')$.
\end{lemma}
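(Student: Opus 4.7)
The plan is to mimic the reduction in the proof of Lemma~\ref{lemma:to output dim 1}, but to track the resulting threshold circuit \emph{size} rather than depth, exploiting the fact that $d' := d \cdot l(d) + l(d) = \Theta(d \log d)$.

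First I would define $g':\{0,1\}^{d'}\to\{0,1\}$ via a ``selector'' encoding. Write an input $\bx\in\{0,1\}^{d'}$ as $\bx = (\bx^1,\bx^2)$ with $\bx^1\in\{0,1\}^{d\cdot l}$ and $\bx^2\in\{0,1\}^l$. Declare $\bx^2$ to be the \emph{$i$-selector} if it has a single $1$ in coordinate $i$. Set $g'(\bx) = (g(\bx^1))_i$ when $\bx^2$ is the $i$-selector, and $g'(\bx) = 0$ otherwise. Membership of $g'$ in $\Ptime$ is immediate: checking whether $\bx^2$ is a selector and identifying $i$ takes $\co(l) = \co(\log d)$ time, and then running the polynomial-time algorithm for $g$ on $\bx^1$ and returning the $i$-th output bit is polynomial in $d \cdot l$, hence polynomial in $d'$.

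For the lower bound, I argue by contrapositive. Suppose $g'$ is computed by a threshold circuit $T'$ of size $s(d') = \co(d')$. For each $i\in[l]$, obtain a threshold circuit $T_i$ on $d\cdot l$ inputs that computes the $i$-th output bit of $g$, by hardwiring $\bx^2 = \be_i$ into $T'$: every gate of $T'$ that reads a coordinate of $\bx^2$ has its bias adjusted to absorb that fixed Boolean value, and the incoming wires from $\bx^2$ are deleted. This substitution does not increase the gate count, so $T_i$ has size at most $s(d')$. Stacking $T_1,\ldots,T_l$ in parallel yields a threshold circuit $T$ with $l$ outputs that computes $g$, of total size at most $l\cdot s(d') = \co(\log d)\cdot \co(d\log d) = \co(d\log^2 d)$, contradicting the hypothesis that $g$ admits no threshold circuit of this size. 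Hence $g'$ cannot be computed by threshold circuits of size $\co(d')$, as required.

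I do not expect a real obstacle here; the argument is essentially the same selector-based reduction used for the depth version in Lemma~\ref{lemma:to output dim 1}. The only thing that must be verified carefully is the arithmetic on the size bound: the choice $d' = d\cdot l + l$ (together with $l=\co(\log d)$) is precisely what makes the product $l\cdot\co(d')$ come out to $\co(d\log^2 d)$, matching the non-computability hypothesis on $g$. Any polynomial-in-$d'$ overhead introduced in the hardwiring, or any polynomial-time overhead in deciding the selector test for $g'$, is absorbed either into the $\co$-notation on the circuit side or into the $\Ptime$ membership on the algorithmic side.
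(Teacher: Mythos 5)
Your proposal is correct and matches the paper's own proof essentially verbatim: both define $g'$ by the same selector encoding, argue $\Ptime$ membership directly, and run the contrapositive by hardwiring the $l$ possible selectors into $T'$ to get a circuit for $g$ of size $l \cdot \co(d') = \co(d\log^2 d)$. The only cosmetic difference is that you explicitly set $g'(\bx)=0$ on non-selector inputs, which the paper leaves implicit.
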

\begin{proof}
	We first define $g'$ in a similar manner to the proof of Lemma~\ref{lemma:to output dim 1}.
	Let $g':\{0,1\}^{d'} \rightarrow \{0,1\}$ be a function such that if $d' = d \cdot l + l$ then we have the following. Let $\bx \in \{0,1\}^{d'}$ and denote $\bx^1 = (x_1,\ldots,x_{d \cdot l})$ and $\bx^2 = (x_{d \cdot l+1},\ldots x_{d \cdot l+l})$. If $\bx^2$ has a $1$-bit in the $i$-th coordinate and all other bits are $0$, then we say that $\bx^2$ is the {\em $i$-selector}. For $\bx \in \{0,1\}^{d'}$ such that $\bx^2$ is $i$-selector, we have $g'(\bx)=(g(\bx^1))_i$. Namely, $g'$ returns the $i$-th output bit of $g(\bx^1)$.
	
	Since $g$ can be computed in polynomial time then clearly $g'$ can also be computed in polynomial time. 
	Assume that $g'$ can be computed by a threshold circuit $T'$ of size $c' \cdot d'$ for some constant $c'$. Then, $g$ can be computed by a threshold circuit $T$ of size $\co(d\log^2(d))$ as follows. The circuit $T$ consists of $l$ circuits $T_1,\ldots,T_l$, such that $T_i$ computes the $i$-th output bit. The circuit $T_i$ has input dimension $d \cdot l$, and is obtained from $T'$ by hardwiring the input bits $\bx^2$ to be the $i$-selector. That is, let $n$ be a threshold gate in the first layer of $T'$, and assume that the weight from the $i$-th component of $\bx^2$ to $n$ is $w$, and that the bias of $n$ is $b$. Then, in $T_i$ we change the bias of $n$ to $b+w$. Note that $T$ has size $l \cdot c'd' = c'l(d l + l)=\co(d\log^2(d))$, and that $T$ computes $g$.
\end{proof}

The construction of the network $N$ is done in a similar manner to the proof of Theorem~\ref{thm:bounded depth}, with some necessary modifications.
Note that here the size of $N$ should be $\co(d\log^2(d))$. However, the transformation from $\bx \in [0,1]^d$ to $\bin(\trunc(\bx))$ from Lemma~\ref{lemma:to binary} requires a larger size. Hence, we use here the construction from the proof of Lemma~\ref{lemma:to binary exponential}. Then, transforming $\bx \in [0,1]^d$ to $\bin(\trunc(\bx)) \in \{0,1\}^{c \cdot d}$ requires only $\co(d \cdot c^2)$ neurons. Indeed, for every $i \in [d]$, the computation of each bit in $\bin(\trunc(x_i))$ requires at most $\co(c)$ neurons ($\co(c)$ layers of constant width). Since $c=\co(\log(d))$, then the total size required for this transformation is $\co(d \log^2(d))$.
Now, note that the simulation of the threshold circuit $T$ can be done by Lemma~\ref{lemma:from TC to NN}. Since $T$ is of size $\co(d \log^2(d))$ then simulating $T$ with a neural network requires size $\co(d \log^2(d))$.
Overall, the size of $N$ is $\co(d \log^2(d))$.

\section{Proofs for Section~\ref{sec:boolean}}

\subsection{Proof of Theorem~\ref{thm:CC_ReLU}}
\label{app:proof of theorem cc ReLU}

For a Boolean function $f$, we denote by $R_{\epsilon}(f)$ the randomized communication complexity with error $\epsilon$. Namely, the minimal cost of a protocol that computes $f$ correctly with probability at least $1-\epsilon$ on every input and partition. Note that  $R(f) = R_{1/3}(f)$.
The following lemma, which provides an upper bound on the communication complexity of LTFs, was shown by \cite{nisan1993communication} :
\begin{lemma}
	\label{lemma:from nisan}
	Let $g=L_{\ba=(a_1 \ldots a_m), \theta}$ be an LTF. Then $R_{\epsilon}(g)=\co(\log m + \log \epsilon^{-1})$.
\end{lemma}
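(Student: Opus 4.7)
The plan is a clean reduction of evaluating the LTF $g$ to the Greater-Than (GT) problem, followed by invoking the well-known efficient randomized protocol for GT.

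First I would have Alice and Bob locally reduce to integer comparison. Under the partition where Alice holds $\{x_i : i \in I\}$ and Bob holds $\{x_j : j \notin I\}$, each party computes, with no communication, the partial inner product $S_A := \sum_{i \in I} a_i x_i$ and $S_B := \sum_{j \notin I} a_j x_j$, using the publicly known weights $\ba$ and their local inputs. Since
\[
g(\bx) = 1 \iff S_A + S_B > \theta \iff S_A > \theta - S_B,
\]
it suffices for the parties to decide the GT instance where Alice holds the integer $S_A$ and Bob holds the integer $\theta - S_B$.

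Second, I would bound the bit-length of the numbers being compared. Each of $S_A$ and $\theta - S_B$ is an integer of magnitude at most $\sum_{i=1}^m |a_i| + |\theta|$. Under the standard convention recalled in the Preliminaries (weights and thresholds of LTFs may be taken as integers of magnitude at most $2^{\poly(m)}$, via Goldmann--Hastad), this quantity is at most $2^{\poly(m)}$, so its binary encoding has $n = \poly(m)$ bits, which in particular gives $\log n = O(\log m)$.

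Third, I would invoke the classical randomized protocol of Nisan for GT on $n$-bit integers, which achieves error $\epsilon$ with communication $O(\log n + \log \epsilon^{-1})$. Substituting $\log n = O(\log m)$ gives the claimed bound $O(\log m + \log \epsilon^{-1})$. The only delicate ingredient here, which I would cite as a black box, is the GT protocol itself; in particular, the additive (rather than multiplicative) dependence on $\log \epsilon^{-1}$ is not immediate from naive amplification and requires Nisan's recursive binary-search-plus-randomized-equality-test protocol with per-level error budgets decreasing geometrically along the recursion. Everything else is a one-line reduction.
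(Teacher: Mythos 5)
Your proposal is correct and matches the approach the paper alludes to: the paper simply cites Nisan's result and notes that its proof rests on the Goldmann--Hastad/Goldmann--Karpinski fact that any LTF on $m$ Boolean inputs has an equivalent integer-weight representation with $\poly(m)$-bit weights, which is exactly the ingredient you use to bound the bit-length before invoking the GT protocol. The only cosmetic point worth tightening is the order of operations: you should first replace $g$ by the equivalent integer-weight LTF $g'$ (both players can do this locally since they know $g$), and only then define the partial sums $S_A$, $S_B$ and the threshold with respect to $g'$'s integer weights; as written, step one computes $S_A$, $S_B$ from possibly irrational weights and step two retroactively declares them integers. With that reordering the argument is exactly Nisan's reduction and gives $R_\epsilon(g)=O(\log m+\log\epsilon^{-1})$.
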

We stress that no assumption in this Lemma is made on the real weights of the LTF $g$. The proof builds on a classical result 
that every LTF with $m$ inputs can be computed by an LTF with integer weights 
whose absolute values are at most 
$\co(2^{m \log m})$ (cf. \cite{goldmann1998simulating,goldmann1992majority}).
	
We first handle the case of ReLU networks. Then, we will explain how to extend the proof to the case of $k$-piecewise-linear activation functions.
\begin{lemma}
\label{lemma:handle ReLU}
	Let $N$ be a ReLU network, computing a function $h:\{0,1\}^d \rightarrow \{0,1\}$ with $R(h)=\Omega(d)$. Then, $N$ has size $\Omega(d/\log d)$. 
\end{lemma}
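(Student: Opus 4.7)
The plan is to convert the ReLU network $N$ into a randomized communication protocol for $h$ whose cost scales with the size $s$ of $N$, and then to invoke the assumption $R(h) = \Omega(d)$. Suppose Alice and Bob each receive (as non-uniform advice) the full description of $N$, and the Boolean input $\bx \in \{0,1\}^d$ is partitioned between them in an arbitrary way. The protocol processes the network layer by layer while maintaining the invariant that after layer $j$ has been handled, both players agree on the active set $S_j$ of neurons in layers $1,\ldots,j$ whose output on $\bx$ is strictly positive; the complementary neurons contribute $0$ and can be logically deleted.

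The key inductive observation is that once $S_1,\ldots,S_j$ are known to both players, the output of each neuron in $S_i$ ($i \le j$) is a fixed affine function of $\bx$: dead neurons contribute $0$, while live neurons act as the identity on their pre-activations, which by induction are themselves affine in $\bx$. Both players can therefore compute, from the shared description of $N$ alone, an affine form $\inner{\ba_\nu, \bx} + \theta_\nu$ for the pre-activation of every neuron $\nu$ in layer $j+1$. Determining whether $\nu$ is live or dead reduces to evaluating the LTF $\sign(\inner{\ba_\nu, \bx} + \theta_\nu)$, which by Lemma~\ref{lemma:from nisan} costs $\co(\log d + \log \epsilon^{-1})$ bits with error $\epsilon$. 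I would run this subroutine for each of the $s$ neurons with per-neuron error $\epsilon = 1/(3s)$; a union bound then keeps the overall error at most $1/3$ while the total communication is $\co(s(\log d + \log s))$. After the last hidden layer, the network's output is a known affine function of $\bx$ that takes values in $\{0,1\}$, so one more LTF evaluation (e.g., testing whether this output exceeds $1/2$) yields $h(\bx)$.

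Combining with $R(h) = \Omega(d)$ gives $s(\log d + \log s) = \Omega(d)$. The lemma is vacuous when $s \geq d$, so I may assume $s < d$, whence $\log s \leq \log d$ and $s \log d = \Omega(d)$, i.e., $s = \Omega(d/\log d)$.

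The step I expect to be the main obstacle is making the inductive claim rigorous: one must check that ``collapsing'' dead neurons preserves the affine structure inherited from earlier layers, and that Alice and Bob can perform this collapse using only their common knowledge of $N$ together with the sequence of sign outcomes already computed (never learning $\bx$ itself beyond what the sign oracles reveal). Once this bookkeeping is in place, the rest of the argument is a direct combination of Lemma~\ref{lemma:from nisan} with a union bound over all neurons.
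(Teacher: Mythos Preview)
Your proposal is correct and follows essentially the same approach as the paper: process the network layer by layer, use Nisan's LTF protocol to determine the sign of each neuron's pre-activation (which is affine in $\bx$ once earlier signs are fixed), union-bound the errors, and compare the resulting $\co(s\log d)$ cost against $R(h)=\Omega(d)$. The paper sets the per-neuron error to $\co(d^{-2})$ rather than $1/(3s)$ and phrases the reduction slightly differently, but the argument is the same.
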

\begin{proof}
	We first outline the proof for depth-$2$ networks.
	Suppose that there is a depth-$2$ ReLU network with $s$ hidden neurons computing the function $h$ . 
	Assume w.l.o.g. that $s=\co(d)$.
	Using Lemma~\ref{lemma:from nisan}, the two parties can determine whether the output of each of the $s$ hidden neurons is zero or positive, where the probability of error for each neuron is
	$\co(d^{-2})$ and the total amount of communication is $\co(s \log d)$. With this information each active ReLU neuron becomes a linear function and the whole network ``collapses" to a single linear function, for which we wish to determine the sign of the output. Thus, it remain to compute a single threshold gate. The parties can infer the ``updated" weights of this threshold gate, namely the real coefficient of every input $x_i$ in the threshold gate (observe that we may and do assume that both players know the weights of all neurons). Then, by using Lemma~\ref{lemma:from nisan} again, the parties can determine the sign of the output, with an additional communication cost of $\co(\log d)$ bits. Overall, the cost of the protocol is $\co(s \log d)$, and it succeeds with probability $1-o(1)$. Since by our assumption the cost of the protocol must be $\Omega(d)$, then $s=\Omega(d/\log d)$.
	
	As an illustrative example consider a ReLU network with $6$ inputs $x_1, \ldots x_6$ and three ReLUs at the hidden layer $[ax_1+a'x_2]_+, [bx_3+b'x_4]_+$ and $[cx_5+c'x_6]_+$ (where $a,a',b,b',c,c'$ are real constants) feeding to an output neuron that computes the sum (i.e., all weights are $1$). If we know that the first gate evaluates to $0$ and the others are positive, then the output of the network is positive iff $bx_3+b'x_4+cx_5+c'x_6 > 0$.

	For depth greater than $2$ we can use the same method to deduce whether each ReLU neuron has positive or zero output starting with the neurons in the first hidden layer and then doing the same evaluation (positive vs. zero) for all neurons, evaluating all neurons of depth $i$ before neurons of depth $i+1$ (and evaluating the neurons in a given layer in an arbitrary order). Once we know for all hidden neurons whether their output is positive or zero, the players can infer the linear function feeding into the topmost output neuron and evaluate its sign. 
\end{proof}

Neural networks with the ReLU activation function are a special case of more general networks where the activation function $\sigma:\reals \rightarrow \reals$ of each neuron is \emph{piecewise linear}. Namely, there are $k$ real numbers $c_1<c_2<...<c_k$ such that if $c_i < x \leq c_{i+1}$ (for $1 \leq i < k$) then $\sigma(x)=a_ix+b_i$, if $x \leq c_1$ then $\sigma(x)=a_0x+b_0$, and else $\sigma(x)=a_kx+b_k$, where for every $i$ the parameters $a_i,b_i$ are real numbers\footnote{No further assumptions are made on $\sigma$: it does not have to be continuous nor monotone.}. 
We now explain how to extend the communication complexity argument from Lemma~\ref{lemma:handle ReLU} to prove lower bounds for such networks. 
Suppose 
that the network $N$ that computes $h$ has size $s=\co(d)$ and $k$-piecewise-linear activation functions. Using binary search on $[k]$ and Lemma~\ref{lemma:from nisan}, we can determine for each neuron (starting with the neurons in the first hidden layer and moving upward), which of the $k$ linear functions is used in the output of the neuron, using $\co(\log d \log k)$ bits communicated per neuron.
Once we determine this information for all neurons we have the linear function of the output neuron and we can evaluate the sign of the output with additional $\co(\log d)$ bits. The overall communication (also ensuring that the probability of error at every neuron is at most $\co(d^{-2})$) is $\co(s \log d \log k)$ and by the union bound\footnote{We can assume $k=2^{\co(d/\log d)}$ as otherwise the claim in the theorem is trivial.}, the probability of error is $o(1)$. 
Since by our assumption the cost of the protocol must be $\Omega(d)$, then $s=\Omega(d / (\log d \log k))$.

\subsection{Proof of Theorem~\ref{thm:IP approx}}
\label{app:proof of theorem IP approx}

Let $f=\text{IP}_d$, and let $N$ be a ReLU network of size $s$ such that $\snorm{N-f}_{L_2(\cu(\{0,1\}^{2d}))} \leq \epsilon$. Let $N'$ be a neural network of size $s+2$ such that for every $\bz \in \{0,1\}^{2d}$ we have: if $N(\bz) \leq \frac{1}{3}$ then $N'(\bz)=0$, and if $N(\bz) \geq \frac{2}{3}$ then $N'(\bz)=1$. Such a network can be obtained from $N$ by adding two neurons, namely, 
\[
    N'(\bz) = \left[3N(\bz)-1\right]_+ - \left[3N(\bz)-2\right]_+~.
\]
Note that for every $\bz \in \{0,1\}^{2d}$ such that $|N(\bz)-f(\bz)| \leq \frac{1}{3}$, we have $N'(\bz)=f(\bz)$. 
Also, we have 
\begin{align*}
    \epsilon^2 
    &\geq \snorm{N-f}_{L_2(\cu(\{0,1\}^{2d}))}^2 
    = \E_{\bz \sim \cu(\{0,1\}^{2d})}\left(N(\bz)-f(\bz)\right)^2
    \\
    &\geq \left(\frac{1}{3}\right)^2 \cdot \Pr_{\bz \sim \cu(\{0,1\}^{2d})}\left[|N(\bz)-f(\bz)| > \frac{1}{3} \right]~,
\end{align*}
and therefore $\Pr_{\bz \sim \cu(\{0,1\}^{2d})}\left[|N(\bz)-f(\bz)| > \frac{1}{3} \right] \leq 9 \epsilon^2$.
Thus, with probability at least $1-9 \epsilon^2$ over $\bz \sim \cu(\{0,1\}^{2d})$ we have $N'(\bz)=f(\bz)$.

We now use $N'$ to obtain a protocol that computes $f$ w.h.p. for every input $\bz \in \{0,1\}^{2d}$ and every partition. Let $\bx,\by \in \{0,1\}^d$. In order to compute $\text{IP}_d(\bx,\by)$, the players first use their shared randomness to generate $\bx',\by' \sim \cu(\{0,1\}^{d})$. Note that
\begin{equation}
\label{eq:IP}
	\text{IP}_d(\bx,\by) 
	= \text{IP}_d(\bx+\bx',\by+\by') + \text{IP}_d(\bx+\bx',\by') + \text{IP}_d(\bx',\by+\by') + \text{IP}_d(\bx',\by') \mod 2~.
\end{equation}
Also, note that $\bx+\bx'$ and $\by+\by'$ are distributed uniformly on $\{0,1\}^d$ (where the addition is$\mod 2$).
Thus, by computing $\text{IP}_d(\bx+\bx',\by+\by'), \text{IP}_d(\bx+\bx',\by'), \text{IP}_d(\bx',\by+\by'), \text{IP}_d(\bx',\by')$ the players can compute $\text{IP}_d(\bx,\by)$.
Note that by the union bound, with probability at least $1 - 4 \cdot 9\epsilon^2$ over the choice of $\bx',\by'$ we have $N'(\bx+\bx',\by+\by')=\text{IP}_d(\bx+\bx',\by+\by')$, $N'(\bx+\bx',\by')=\text{IP}_d(\bx+\bx',\by')$, $N'(\bx',\by+\by')=\text{IP}_d(\bx',\by+\by')$ and $N'(\bx',\by')=\text{IP}_d(\bx',\by')$.

Since both players know $\bx',\by'$, then they can compute $N'(\bx',\by')$ without communicating. Now, the players compute the signs of $N'(\bx+\bx',\by+\by'),N'(\bx+\bx',\by'),N'(\bx',\by+\by')$ using the protocol described in the proof of Lemma~\ref{lemma:handle ReLU} (we assume here w.l.o.g. that $s=\co(d)$). It will succeed with probability $1-o(1)$, and its cost is $\co(s \log d)$.
Finally, the players compute $\text{IP}_d(\bx,\by)$ with Eq.~\ref{eq:IP}. Overall, the probability for an error is at most $4 \cdot 9\epsilon^2 + o(1)$. 
By plugging in $\epsilon=\frac{1}{20}$, we obtain that this probability is at most $1/3$.
Since there is a linear lower bound on the randomized communication complexity of $\text{IP}_d$, and since we obtained a protocol with cost $\co(s \log d)$, then we have $s=\Omega(d/\log d)$.

\subsection{Proof of Theorem~\ref{thm:CC_real}}
\label{app:proof of theorem CC real}

\begin{lemma}
Let $g=L_{\ba=(a_1 \ldots a_d), \theta}$ be an LTF. Then $CC^{\mathbb{R}}(g)=1$.
\end{lemma}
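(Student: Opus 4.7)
The plan is to exhibit an explicit one-round real protocol that compute an LTF, and then observe that the resulting single bit written to $\bw$ already equals $g(\bx)$. Fix any bipartition of the input $\bx = (x_1, \ldots, x_d)$ between Alice and Bob, say Alice holds $\{x_i : i \in I\}$ and Bob holds $\{x_j : j \in [d] \setminus I\}$. Recall the LTF evaluates to $\sign\!\left(\sum_{i=1}^d a_i x_i + \theta\right)$, where $\sign(z) = 1$ iff $z > 0$, and is $0$ otherwise. The key algebraic observation is that this inner product decomposes cleanly along any bipartition: namely,
\[
\sum_{i=1}^d a_i x_i + \theta > 0 \quad\Longleftrightarrow\quad \sum_{i \in I} a_i x_i \;>\; -\theta - \sum_{j \in [d] \setminus I} a_j x_j.
\]
Crucially, the left-hand side depends only on Alice's bits and the right-hand side only on Bob's bits, and both are computable without any communication.

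First I would run exactly one round of the real communication protocol: Alice outputs $\alpha := \sum_{i \in I} a_i x_i$ and Bob outputs $\beta := -\theta - \sum_{j \in [d] \setminus I} a_j x_j$. By the definition of the model the referee appends a single bit to $\bw$, namely $1$ if $\alpha > \beta$ and $0$ if $\alpha \leq \beta$. By the displayed equivalence above, this bit is exactly $g(\bx)$: when $\alpha > \beta$ we have $\sum a_i x_i + \theta > 0$ and hence $g(\bx) = 1$, and when $\alpha \leq \beta$ we have $\sum a_i x_i + \theta \leq 0$ and hence $g(\bx) = \sign(\cdot) = 0$. (The edge case $\alpha = \beta$, corresponding to the argument of $\sign$ being zero, is correctly handled because both the convention $\sign(0) = 0$ and the protocol's rule ``$\bw 0$ if $\alpha \leq \beta$'' agree.)

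At this point the protocol halts and $\bw$ is a single-bit string that literally equals $g(\bx)$, so $g$ can be recovered from $\bw$ deterministically with zero error. This yields $CC^{\mathbb{R}}(g) \leq 1$, which is the desired bound (and trivially matches from below for any non-constant $g$). There is no real obstacle here: the proof is essentially the one-line remark that an LTF is a single inequality, and the real communication model was designed precisely so that one round of real-valued exchange can resolve one inequality. The main thing to take care of is only the boundary convention, which works out as above.
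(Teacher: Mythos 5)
Your proof is correct and is essentially identical to the paper's: Alice sends the partial sum over her bits, Bob sends the threshold minus the partial sum over his bits, and a single referee comparison decides the sign. The only difference is a cosmetic one in the sign convention for the threshold $\theta$ (the paper treats it as subtracted, $\sign(\sum a_i x_i - \theta)$, while you add it as a bias), and your explicit treatment of the $\alpha=\beta$ edge case is a welcome bit of extra care.
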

\begin{proof}
As before we may assume that the players know the weights and the bias of the LTF.
Suppose that $A \subseteq [d]$ is the set of indices of bits Alice gets and $B \subseteq [d]$ is the set of indices of bits Bob gets. Alice sends $\alpha=\sum_{i \in A} a_ix_i$ and Bob sends $\beta=\theta-\sum_{j \in B} a_jy_j$.
The output of $f$ can be decided based on whether $\alpha > \beta$,
concluding the proof.
\end{proof}

Since we can use real communication to evaluate the sign of the output of a ReLU neuron and of a threshold gate with one round of communication, we get using a similar argument to the proof of Lemma~\ref{lemma:handle ReLU}: 
\begin{corollary}
Let $f$ be a Boolean function with $d$ inputs. 
Suppose that $CC^{\mathbb{R}}(f)= \Omega(d)$. Then, any threshold circuit or ReLU network computing $f$ has size $\Omega(d)$.
\end{corollary}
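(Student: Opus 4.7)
My plan is to convert an alleged small network computing $f$ into a cheap real communication protocol, then invoke the hypothesis $CC^{\mathbb{R}}(f)=\Omega(d)$ to obtain a matching size lower bound. I will describe the construction for ReLU networks; threshold circuits are handled by a strict simplification of the same protocol.

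Let $N$ be a ReLU network of size $s$ computing $f$, and fix any bipartition of the $d$ input coordinates between Alice and Bob, where both players know the entire architecture and all weights of $N$. I will run a protocol $\Pi$ that, in a topological traversal of the neurons of $N$, maintains the invariant that after processing a neuron $n$ both players know $\sign(p)$, where $p$ is the pre-activation of $n$. To process a new neuron $n_{j}^{(i)}$, note that its pre-activation equals $\sum_{k} w_{k}\, o_{k}^{(i-1)} + b$, where $o_{k}^{(i-1)}$ is the output of the $k$-th neuron of the previous layer (or the corresponding input coordinate, for $i=1$). By the invariant, for every earlier neuron we already know whether its ReLU is inactive (contributing $0$) or active (contributing its own pre-activation). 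An induction on the layers then shows that an active neuron's pre-activation is an \emph{affine} function of $\bz$ alone, whose coefficients both players can compute independently from the common activation pattern; therefore the pre-activation of $n_{j}^{(i)}$ itself has the form $\inner{\ba,\bz}+\theta$ in $\bz$. By the preceding lemma this LTF can be resolved with one round of real communication, and the invariant is extended to include $n_{j}^{(i)}$.

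Iterating over all $s$ neurons (including the output neuron) costs $O(s)$ rounds of real communication, after which both players agree on $f(\bz)$ deterministically. Hence $CC^{\mathbb{R}}(f) = O(s)$, and combining with the hypothesis $CC^{\mathbb{R}}(f)=\Omega(d)$ forces $s=\Omega(d)$. For threshold circuits the same protocol works and is in fact simpler: each gate output is Boolean, so after determining its sign both players know its numerical value outright, and the affine-in-$\bz$ representation of subsequent pre-activations is immediate. The only point I expect to need careful justification is the inductive claim that an active ReLU's output is an affine function of $\bz$; it holds because affine functions are closed under linear combination and scalar multiplication, while inactive ReLUs contribute the identically zero affine form, so the affine-in-$\bz$ property propagates layer by layer with coefficients deducible by both parties from the shared activation pattern.
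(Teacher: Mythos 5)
Your proposal is correct and follows essentially the same route as the paper: both convert a size-$s$ ReLU network (or threshold circuit) into a real communication protocol of cost $O(s)$ by resolving the sign of each neuron's pre-activation with one round via the $CC^{\mathbb{R}}(\text{LTF})=1$ lemma, processing neurons in topological order so that earlier activation patterns make each pre-activation affine in the input. The paper simply cites the layer-by-layer argument from the proof of Lemma~\ref{lemma:handle ReLU} rather than spelling it out again, but the content is identical.
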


Moreover, for a $k$-piecewise-linear activation function, we can determine which linear function is active in the output of a neuron with $\co(\log k)$ cost in the real communication model, using binary search. A similar reasoning to the proof of  Theorem~\ref{thm:CC_ReLU} yields:
\begin{corollary}
Let $f$ be a Boolean function with $d$ inputs. 
Suppose that $CC^{\mathbb{R}}(f)= \Omega(d)$. Then, any neural network with a $k$-piecewise-linear activation function computing $f$ has size $\Omega(d/\log k)$.
\end{corollary}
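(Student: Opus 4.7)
The plan is to adapt the real-communication-protocol argument used in the ReLU case of Theorem~\ref{thm:CC_real}, by showing that identifying the active linear piece at every neuron can still be done cheaply when the activation is $k$-piecewise-linear. Suppose $N$ is a network of size $s$ computing $f$, whose activation $\sigma$ has breakpoints $c_1 < \cdots < c_k$ and linear pieces $\sigma(z) = a_i z + b_i$ on each resulting interval. Given inputs $\bx \in \{0,1\}^d$ partitioned between Alice and Bob (both of whom know all network weights), I would design a real protocol that processes neurons in topological order and determines for each neuron which interval its input falls into.

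The key observation driving the protocol is inductive: once the active piece has been identified for every neuron in layers $1, \ldots, j-1$, each of those neurons collapses to a known affine function of $\bx$. Consequently, the input to any layer-$j$ neuron is also a known affine function of $\bx$, and deciding which interval of $\sigma$ this input falls into is equivalent to computing the sign of a linear functional of $\bx$ shifted by a known constant $c_i$. A single such sign query can be answered with one round of real communication, exactly as in the LTF protocol behind Theorem~\ref{thm:CC_real}: Alice sends $\sum_{i \in A} w_i x_i$, Bob sends $c_i - \sum_{j \in B} w_j x_j$, and the referee records which is larger. A binary search over the $k+1$ intervals then locates the correct piece in $\co(\log k)$ rounds per neuron.

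Summing over all $s$ neurons — and performing one additional sign query at the output to recover $f(\bx)$ from the sign of the final affine expression — yields a real protocol of total cost $\co(s \log k)$ that computes $f$ on every input and partition. Combining with the hypothesis $CC^{\mathbb{R}}(f) = \Omega(d)$ then forces $s = \Omega(d / \log k)$. The only subtlety, and the place where one must be a touch careful, is that the players must never rely on knowing the \emph{real-valued} output of a neuron, only which of the $k$ pieces of $\sigma$ is active; the induction guarantees this, because the active-piece information together with the publicly known weights is exactly what is needed to express the next layer's pre-activations as linear functionals of $\bx$ and to carry the binary search over $c_1, \ldots, c_k$ into the next layer.
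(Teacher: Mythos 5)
Your proposal is correct and follows essentially the same route as the paper: process neurons layer by layer, use the fact that once the active pieces in prior layers are known the pre-activation of each neuron is a known affine function of $\bx$, decide the active piece by binary search over the $k$ breakpoints at cost $\co(\log k)$ per neuron in the real communication model (each threshold comparison costing a single round, as in the LTF lemma), and conclude $s \log k = \Omega(d)$. The paper leaves most of these details implicit — it simply states that binary search gives $\co(\log k)$ cost per neuron and invokes "a similar reasoning to the proof of Theorem~\ref{thm:CC_ReLU}" — so your write-up is a faithful expansion of the same argument.
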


\subsection{Real communication complexity of $\text{IP}_d$}
\label{app:IP real bound}

\begin{theorem}
We have $CC^{\mathbb{R}}(\text{IP}_d) = \Omega(d)$.
\end{theorem}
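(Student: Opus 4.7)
The plan is to adapt Chattopadhyay et al.'s proof of $CC^{\mathbb{R}}(\text{DISJ}_d)=\Omega(d)$ (their Lemma~4.9) to the inner-product function. A cost-$c$ real protocol partitions the input space $\{0,1\}^d\times\{0,1\}^d$ into at most $2^c$ GT-cells, each defined by a conjunction of constraints of the form $\alpha_j(x)\gtrless\beta_j(y)$ with transcript-dependent real functions $\alpha_j,\beta_j$. Correctness demands that every such cell be $\text{IP}_d$-monochromatic, so it suffices to prove the cell-size bound $|C|\leq 2^{(2-\Omega(1))d}$ for any $\text{IP}_d$-monochromatic GT-cell $C$; combined with the $2^c$ upper bound on the number of cells and the $2^{2d}$ total number of inputs, this forces $c=\Omega(d)$.

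To bound the size of a monochromatic GT-cell I would use a discrepancy argument against the $2^d\times 2^d$ Hadamard $\pm 1$ matrix $H$ of $\text{IP}_d$, whose spectral norm satisfies $\|H\|_2=2^{d/2}$. The structural observation is that each GT-halfspace $\{\alpha(x)>\beta(y)\}$ becomes a monotone staircase in $H$ after Alice's rows and Bob's columns are independently sorted by $\alpha$ and $\beta$; a GT-cell is therefore an intersection of such staircases in different sort orders. I would upper-bound the discrepancy of $H$ on such staircase intersections by combining the spectral identity $\|H\|_2=2^{d/2}$ with the monotone structure, mirroring how Chattopadhyay et al.\ handle the set-disjointness matrix. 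Monochromaticity forces the discrepancy of the cell to equal its size, which then yields the desired cell-size bound.

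The main obstacle is extending the standard Lindsey / Chor--Goldreich discrepancy bound for $\text{IP}_d$ from combinatorial rectangles to staircase intersections. For rectangles the argument uses $|\mathbf{1}_A^\top H\mathbf{1}_B|\leq \|H\|_2\cdot\sqrt{|A||B|}$ and is clean; for GT-cells the indicator is no longer a rank-one tensor, and one must decompose it. My plan is to follow Chattopadhyay et al.\ and express the cell indicator as a signed combination of monotone staircase indicators, each of which admits a low-complexity Haar-type expansion in the sorted basis, and then apply the spectral bound term by term. Controlling the number of expansion terms and verifying that the resulting bound still leaves $|C|\leq 2^{(2-\Omega(1))d}$ is the delicate step, but no new ideas beyond those in Chattopadhyay et al.\ should be needed since their disjointness argument already handles the GT-cell geometry; only the underlying matrix and its spectral bound need to be swapped from $\text{DISJ}_d$ to $\text{IP}_d$.
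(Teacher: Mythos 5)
Your overall direction --- leveraging the Chattopadhyay--Lovett--Vinyals framework together with a discrepancy bound for $\text{IP}_d$ --- is sound, but you have set yourself a much harder task than necessary, and the ``delicate step'' you flag is exactly where your plan is at risk. The paper's proof never analyzes GT-cells or staircase intersections at all. It invokes Lemmas~3.5 and~3.7 of Chattopadhyay et al.\ as a \emph{black box}: if the communication matrix of $f$ has $\alpha 2^{2d}$ ones and every $1$-monochromatic combinatorial \emph{rectangle} has size at most $\beta 2^{2d}$, then $CC^{\mathbb{R}}(f)=\Omega\bigl(\log(\alpha\beta^{\eta-1})\bigr)$ for any $\eta\in(1/2,1)$. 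That is, Chattopadhyay et al.\ already did the hard work of converting a bound on ordinary monochromatic rectangles into a lower bound on real communication; you only need to supply the rectangle bound. For $\text{IP}_d$ this is Lindsey's Lemma, which gives $\beta\leq 2^{-d}$ while $\alpha\approx 1/2$, and the $\Omega(d)$ lower bound follows in two lines.

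By contrast, your plan to bound the discrepancy of a GT-cell directly --- decomposing its indicator into staircases and applying the spectral bound term by term --- is not obviously sound, and you are right to be uneasy about it: a monotone staircase in a $2^d\times 2^d$ grid can have up to $2^d$ steps, an intersection of several such staircases in incompatible sort orders is not a low-complexity object, and the rectangle inequality $|\mathbf{1}_A^\top H\mathbf{1}_B|\leq\|H\|_2\sqrt{|A|\,|B|}$ does not extend to such sets without precisely the machinery you are trying to reinvent. Your own remark that ``no new ideas beyond those in Chattopadhyay et al.\ should be needed'' is the clue: do not reprove their Lemmas~3.5 and~3.7, cite them, and then the only change from $\text{DISJ}_d$ to $\text{IP}_d$ is which rectangle bound you plug in --- the corruption bound for disjointness versus Lindsey's Lemma for inner product.
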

\begin{proof}
Recall that the \emph{communication matrix} of a Boolean function $f(\bx,\by)$ where $\bx$ and $\by$ are binary vectors of length $d$
is a $2^d \times 2^d$ matrix $M_f$ where $M_f(\bx,\by)=f(\bx,\by)$. By \cite{chattopadhyay2019equality} (Lemma 3.5 and Lemma 3.7) if $M_f$ has $\alpha 2^{2d}$ ones and any
$1$-monochromatic rectangle $R$ has $|R| \leq \beta 2^{2d}$, it holds that $CC^{\mathbb{R}}(f)= \Omega(\log(\alpha (\beta^{\eta-1}))$ for any $\eta \in (1/2,1)$.
By Lindsey's Lemma, any $1$-monochromatic rectangle $R$ satisfies $|R| \leq 2^d$. It follows that for $f=\text{IP}_d$ we have $\beta \leq 2^{-d}$. As $\alpha$ is roughly $1/2$
we have that $CC^{\mathbb{R}}(\text{IP}_d)=\Omega(d)$.
\end{proof}

\subsection{Proof of Theorem~\ref{thm:boolean upper bound}}
\label{app:proof of theorem boolean upper bound}

Given input $\bx,\by \in \{0,1\}^d$, we have:
\[
    \text{DISJ}_d(\bx,\by) = \neg \bigvee_{i \in [d]} (x_i \wedge y_i)~.
\]
Since $x_i \wedge y_i = [x_i+y_i-1]_+$ then implementing $x_i \wedge y_i$ requires a single neuron for every $i \in [d]$. Implementing $\neg \bigvee_i z_i = [\sum_i(-z_i)+1]_+$ also requires a single neuron.

Likewise, we have
\[
    \text{IP}_d(\bx,\by) = \bigoplus_{i \in [d]} (x_i \wedge y_i)~.
\]
Implementing $x_i \wedge y_i$ requires a single neuron for every $i \in [d]$. Implementing $\bigoplus_{i} z_i$ requires evaluating the parity of $\sum_i z_i$. Computing the parity bit of an integer $1 \leq j \leq d$ can be done by a network of size $\co(d)$ using a straightforward construction, similar to the construction from the proof of Lemma~\ref{lemma:to binary}.

\section{Proofs for Section~\ref{sec:separation real}}

\subsection{Proof of Proposition~\ref{prop:from bool to real}}
\label{app:proof of prop from bool to real}

For $\bz \in \{0,1\}^d$ we denote $A_\bz = \{\bx \in [0,1]^d: \forall i \in [d], \; |x_i-z_i| \leq \frac{1}{4}\}$. Thus, $A_\bz$ is a cube with volume $\left(\frac{1}{4}\right)^d$. For $\bx \in [0,1]^d$ we denote $\text{dist}(\bx,A_\bz)=\min\{\norm{\bx-\ba}: \ba \in A_\bz\}$. For $\bz \in \{0,1\}^d$, let $h_\bz:[0,1]^d \rightarrow [0,1]$ be such that $h_\bz(\bx) = \max\{0, 1 - 4 \cdot \text{dist}(\bx,A_\bz)\}$. Note that if $\bx \in A_\bz$ then $h_\bz(\bx)=1$, if there is $i \in [d]$ such that $|x_i-z_i| \geq \frac{1}{2}$ then $h_\bz(\bx)=0$, and $h_\bz$ is $4$-Lipschitz. Let $f:[0,1]^d \rightarrow [0,1]$ be such that $f(\bx) = \sum_{\bz \in \{0,1\}^d}g(\bz) h_\bz(\bx)$.
Note that for every $\bz \in \{0,1\}^d$ and $\bx \in A_\bz$ we have $f(\bx) = g(\bz)$. Moreover, since for every $\bx \in [0,1]$ there is at most one $\bz \in \{0,1\}^d$ such that $h_\bz(\bx) \neq 0$, then $f$ is also $4$-Lipschitz. Let $\mu$ be the uniform distribution on $([0,1/4] \cup [3/4,1])^d = \bigcup_{\bz \in \{0,1\}^d}A_\bz$. Note that $\mu$ has a polynomially-bounded marginal density.

\paragraph{Part (1).}
Let $g':\{0,3/4\}^d \rightarrow \{0,1\}$ be a function that corresponds to $g$, namely, $g'(\bz) = g(\frac{4}{3} \bz)$ for every $\bz \in \{0,3/4\}^d$. Since $g$ cannot be $\epsilon$-approximated by networks of size $\co(m)$ w.r.t. $\cu(\{0,1\}^d)$, then $g'$ cannot be $\epsilon$-approximated by networks of size $\co(m)$ w.r.t. $\cu(\{0,3/4\}^d)$. 
Assume that there is a neural network $N$ of size $\co(m)$ such that 
$\snorm{N-f}_{L_2(\mu)} \leq \epsilon$.
We show that there exists a network $N'$ of the same size such that 
$\snorm{N'-g'}_{L_2(\cu(\{0,3/4\}^d))} \leq \epsilon$.
Thus, if $f$ can be $\epsilon$-approximated by a network of size $\co(m)$ then $g'$ can also be $\epsilon$-approximated by a network of size $\co(m)$, and hence we reach a contradiction.	

For every $\bc \in [0,1/4]^d$ we denote by $N_\bc$ the neural network of size $\co(m)$ such that for every $\bx$ we have $N_\bc(\bx)=N(\bx+\bc)$. The network $N_\bc$ is obtained from $N$ by adding the appropriate bias terms to the neurons in the first hidden layer, and hence has size $\co(m)$. We now show that there exists $\bc \in [0,1/4]^d$ such that 
$\snorm{N_\bc - g'}_{L_2(\cu(\{0,3/4\}^d))} \leq \epsilon$.

We have 
\begin{align*}
	\E_{\bc \sim \cu([0,1/4]^d)} \E_{\bz \sim \cu(\{0,3/4\}^d)} \left(N_\bc(\bz)-g'(\bz)\right)^2 
	&= \E_{\bc \sim \cu([0,1/4]^d)} \E_{\bz \sim \cu(\{0,3/4\}^d)} \left(N(\bz+\bc)-f(\bz+\bc)\right)^2 
	\\
	&= \E_{\bx \sim \mu} \left(N(\bx)-f(\bx)\right)^2 
	\\
	&=\snorm{N-f}_{L_2(\mu)}^2
	\leq \epsilon^2~.
\end{align*}
Hence, there exists $\bc$ such that 
$\E_{\bz \sim \cu(\{0,3/4\}^d)} \left(N_\bc(\bz)-g'(\bz)\right)^2 \leq \epsilon^2$,
and thus $\snorm{N_\bc - g'}_{L_2(\cu(\{0,3/4\}^d))} \leq \epsilon$.

\paragraph{Part (2).}
Let $N$ be a neural network of size $\tilde{m}$ such that for every $\bz \in \{0,1\}^d$ we have $N(\bz)=g(\bz)$. Let $\tilde{N}$ be a network, that first transforms the input $\bx \in ([0,1/4] \cup [3/4,1])^d$ to $\bz \in \{0,1\}^d$ by rounding each component, and then computes $N(\bz)$.
Note transforming $x_i \in [0,1/4] \cup [3/4,1]$ to the corresponding $z_i \in \{0,1\}$ can be done with two neurons as follows:
\[
z_i	= \left[2x_i-\frac{1}{2}\right]_+ - \left[2x_i-\frac{3}{2}\right]_+~.
\]
Hence, the size of $\tilde{N}$ is $\tilde{m}+2d$. Also, we have
\[
\snorm{\tilde{N}-f}_{L_2(\mu)} ^2
= \E_{\bx \sim \cu(([0,1/4] \cup [3/4,1])^d)}\left(\tilde{N}(\bx)-f(\bx)\right)^2
= \E_{\bz \sim \cu(\{0,1\}^d)}\left(N(\bz) - g(\bz) \right)^2
= 0~.
\]

\paragraph{Part (3).}
If $g \in \Ptime$, then $f$ can be computed in polynomial time as follows. Given an input $\bx$ we find the nearest $\bz \in \{0,1\}^d$, compute $g(\bz)$, and return $f(\bx)= g(\bz) \cdot \max\{0, 1 - 4 \cdot \text{dist}(\bx,A_\bz)\}$.

\subsection{Proof of Theorem~\ref{thm:separation benign infty}}
\label{app:proof of theorem separation benign infty}

Let $g:\{0,1\}^d \rightarrow \{0,1\}$ be either the disjointness function or the inner product function.
Let $f:[0,1]^d \rightarrow [0,1]$ be the function computed by the neural network $\cn$ from Theorem~\ref{thm:boolean upper bound} that corresponds to $g$. Thus, for every $\bx \in \{0,1\}^d$ we have $f(\bx)=g(\bx)$.

Given an input $\bx \in [0,1]^d$, we can construct in time polynomial in $d$ the network $\cn$, as we describe in the proof of Theorem~\ref{thm:boolean upper bound}, and hence we can compute $f(\bx)$ in polynomial time. Moreover, by the construction in Theorem~\ref{thm:boolean upper bound}, the network $\cn$ is of a constant depth and size $\co(d)$, and the absolute values of its weights are bounded by $\poly(d)$, and therefore it computes a $\poly(d)$-Lipschitz function. Hence, $f$ is polynomial-time benign, and it is computed by a network of size $\co(d)$. 

Let $N$ be a neural network such that $\norm{f-N}_\infty \leq \frac{1}{3}$. Let $N'$ be a network such that if $N(\bx) \leq \frac{1}{3}$ then $N'(\bx)=0$, and if $N(\bx) \geq \frac{2}{3}$ then $N'(\bx)=1$. Such a network can be easily obtained from $N$. Indeed, let $y=N(\bx)$, then we define 
\[
    N'(\bx) = \left[3y-1\right]_+ - \left[3y-2\right]_+~.
\]
Since for every $\bx \in [0,1]^d$ we have $|f(\bx)-N(\bx)| \leq \frac{1}{3}$, then for every $\bx$ such that $f(\bx) \in \{0,1\}$ we have $N'(\bx)=f(\bx)$. The function $f$ is such that for every $\bx \in \{0,1\}^d$ we have $f(\bx) = g(\bx) \in \{0,1\}$, and thus $N'(\bx)=f(\bx)=g(\bx)$. Hence, $N'$ computes $g$. By Corollary~\ref{cor:CC_real}, the size of $N'$ is $\Omega(d)$, and therefore the size of $N$ is also $\Omega(d)$.

\end{document}